\def\eqref#1{equation~\ref{#1}}
\def\1{\bm{1}}
\DeclareMathAlphabet{\mathsfit}{\encodingdefault}{\sfdefault}{m}{sl}
\SetMathAlphabet{\mathsfit}{bold}{\encodingdefault}{\sfdefault}{bx}{n}
\newcommand{\Var}{\mathrm{Var}}
\newcommand{\Cov}{\mathrm{Cov}}
\def\##1\#{\begin{align}#1\end{align}}
\def\$#1\${\begin{align*}#1\end{align*}}
\setlist[enumerate]{leftmargin=*}
\theoremstyle{plain}
\newtheorem{theorem}{Theorem}[section]
\newtheorem{proposition}[theorem]{Proposition}
\newtheorem{lemma}[theorem]{Lemma}
\theoremstyle{definition}
\newtheorem{definition}[theorem]{Definition}
\theoremstyle{remark}
\newtheorem{remark}[theorem]{Remark}
\newcommand{\cE}{\mathcal{E}}
\newcommand{\cG}{\mathcal{G}}
\newcommand{\cS}{{\mathcal{S}}}
\newcommand{\cV}{\mathcal{V}}
\newcommand{\EE}{\mathbb{E}}
\newcommand{\NN}{\mathbb{N}}
\newcommand{\PP}{\mathbb{P}}
\newcommand{\RR}{\mathbb{R}}
\newcommand{\tr}{\mathop{\mathrm{tr}}}
\newcommand{\diag}{{\rm diag}}
\newcommand{\vecz}{{\rm vec}}
\newcommand{\dir}{{\rm Dir}}
\newcommand{\FSP}[1]{\mathbf{M}_{\texttt{FSP}}^{({#1})}}
\newcommand{\BSP}[1]{\mathbf{M}_{\texttt{BSP}}^{({#1})}}
\newcommand{\ODP}[1]{\mathbf{M}_{\texttt{GEV}}^{({#1})}}
\newcommand{\EFSP}[1]{\hat{\mathbf{M}}_{\texttt{FSP}}^{({#1})}}
\newcommand{\EBSP}[1]{\hat{\mathbf{M}}_{\texttt{BSP}}^{({#1})}}
\newcommand{\VFSPplain}{\mathbf{V}_{\texttt{FSP}}}
\newcommand{\VBSPplain}{\mathbf{V}_{\texttt{BSP}}}
\renewcommand{\eqref}[1]{(\ref{#1})}
\newcommand{\rbr}[1]{\left(#1\right)}
\newcommand{\sbr}[1]{\left[#1\right]}
\newcommand{\cbr}[1]{\left\{#1\right\}}
\title{Exploring and Improving Initialization for Deep Graph Neural Networks: A Signal Propagation Perspective}
\author{\name Senmiao Wang\thanks{Equal contribution.} \email senmiaowang1@link.cuhk.edu.cn \\
      \addr The Chinese University of Hong Kong, Shenzhen, China
      \AND
      \name Yupeng Chen$^{*}$ \email yupengchen1@link.cuhk.edu.cn \\
      \addr The Chinese University of Hong Kong, Shenzhen, China
      \AND
      \name Yushun Zhang \email yushunzhang@link.cuhk.edu.cn\\
      \addr The Chinese University of Hong Kong, Shenzhen, China \\
      Shenzhen Research Institute of Big Data
      \AND
      \name Ruoyu Sun \email sunruoyu@cuhk.edu.cn \\
      \addr The Chinese University of Hong Kong, Shenzhen, China \\
      Shenzhen International Center for Industrial and Applied Mathematics \\
      Shenzhen Research Institute of Big Data
      \AND
      \name Tian Ding\thanks{Corresponding author.} \email dingtian@sribd.cn \\
      \addr
      Shenzhen International Center for Industrial and Applied Mathematics \\ Shenzhen Research Institute of Big Data }
\begin{document}

\maketitle

\begin{abstract}

Graph Neural Networks (GNNs) often suffer from performance degradation as the network depth increases. This paper addresses this issue by introducing initialization methods that enhance signal propagation (SP) within GNNs. We propose three key metrics for effective SP in GNNs: forward propagation, backward propagation, and graph embedding variation (GEV). While the first two metrics derive from classical SP theory, the third is specifically designed for GNNs. We theoretically demonstrate that a broad range of commonly used initialization methods for GNNs, which exhibit performance degradation with increasing depth, fail to control these three metrics simultaneously.
To deal with this limitation, a direct exploitation of the SP analysis--searching for weight initialization variances that optimize the three metrics--is shown to significantly enhance the SP in deep GCNs. This approach is called \textit{\textbf{S}ignal \textbf{P}ropagation \textbf{o}n \textbf{G}raph-guided \textbf{Init}ialization (\textbf{SPoGInit})}. Our experiments demonstrate that SPoGInit outperforms commonly used initialization methods on various tasks and architectures. Notably, SPoGInit enables performance improvements as GNNs deepen, which represents a significant advancement in addressing depth-related challenges and highlights the validity and effectiveness of the SP analysis framework.

\end{abstract}

\section{Introduction}
\label{intro_section}
Increasing depth has been a prominent trend in the development of neural networks. For instance, from AlexNet \citep{krizhevsky2012imagenet}, VGG19 \citep{simonyan2015very} to ResNet \citep{he2016deep},
the depth of the Convolutional Neural Network (CNN) has increased from 8, 19 to 152,
and the corresponding test accuracy on ImageNet 
has increased from 63.3\%, 74.4\% to 78.57\%. 
Theoretically, the benefit of depth is often attributed to strong representation power. Research shows that a shallow network would require an exponential increase in width to match the representational power of a deep network \citep{telgarsky2015representation, eldan2016power, liang2017deep}.

In graph-related tasks like node classification, graph classification, or link prediction, graph neural networks (GNN) \citep{wu2022graphbook, wu2020comprehensive} are also expected to benefit from increased depth. A core concept in GNNs is the message-passing mechanism, where each node aggregates information from its neighboring nodes.\footnote{In this paper, we refer specifically to GNNs as \emph{message-passing GNNs}. We note that other network structures not based on message-passing, such as Transformers \citep{wu2022nodeformer, kong2023goat}, are also used in graph-related tasks.} Deeper GNNs have larger receptive fields, enabling nodes to gather information from broader local sub-graphs. This is especially beneficial for capturing long-range relationships in complex graph-related tasks. For instance, GNNs have shown great potential in solving optimization problems \citep{gasse2019exact, nair2020solving, han2022gnn, li2024pdhg}. Theoretical studies demonstrate that GNNs possess universal approximation power for solving various optimization problems, including linear programming \citep{chen2022representing} and quadratic programming \citep{chen2024expressive}, but the network depth need to scale with the problem dimensions \citep{qian2024exploring, li2024power}. Thus, increasing the depth of GNNs is expected to improve their performance in addressing large-scale optimization problems.

However, GNNs often experience performance degradation in practice when their depth increases \citep{li2018deeper, wu2020comprehensive, zhou2020graph}. Consequently, most GNNs remain shallow, typically comprising only 2 to 10 layers \citep{welling2016semi, velivckovic2017graph, alon2021bottleneck}. In recent years, over-smoothing has been identified as a primary cause of this issue \citep{li2018deeper, oono2019graph, cai2020note}. Over-smoothing refers to a specific issue in GNNs where node embeddings become similar as depth grows, reducing the distinguishability between nodes and impairing task performance. This poses a major obstacle to the development of deeper GNNs and may hinder progress in complex graph-related problems, particularly those involving long-range relationships.

In this paper, we show that over-smoothing can be understood as part of a broader issue known as signal propagation (SP). SP refers to how the input data is transformed as it passes through the layers of a neural network. For traditional neural networks like CNNs, SP analysis plays a critical role in developing initialization strategies to maintain stable signal propagation and prevent gradients from exploding or vanishing during training \citep{poole2016exponential, schoenholz2017deep, pennington2017resurrecting, pennington2018emergence, hanin2018neural}. This work extends the SP analysis framework to GNNs. We introduce a new graph-specific SP metric, graph embedding variation (GEV), which closely relates to over-smoothing. Alongside the standard forward and backward SP (FSP and BSP) metrics, we use GEV to evaluate signal propagation in GNNs. 

Building on this framework, we focus on the family of graph convolutional networks (GCNs), one of the most widely-used GNN architectures, to demonstrate the interplay between initialization strategies and signal propagation. While traditional initializations such as Kaiming \citep{he2015delving}, Xavier \citep{glorot2010understanding}, and LeCun \citep{bottou1988reconnaissance, lecun2002efficient} initializations are commonly used in standard GCN implementations (e.g., the PyTorch Geometric library PyG \citep{Fey/Lenssen/2019}), we theoretically prove that these initialization methods fail to stabilize all three SP metrics concurrently, leaving deep GCNs vulnerable to performance degradation.

Motivated by the SP framework, we propose a new initialization method, termed \emph{\textbf{S}ignal \textbf{P}ropagation \textbf{o}n \textbf{G}raph-guided \textbf{Init}ialization
(\textbf{SPoGInit})}, 
for deep GCNs. 
SPoGInit consolidates the three SP metrics (FSP, BSP, and GEV) into a unified optimization objective, where minimizing the objective leads to more stable SP. Using an iterative algorithm, SPoGInit adjusts the weight initialization variances across layers to minimize this objective. Thus, SPoGInit can simultaneously stabilize all three SP metrics. Furthermore, the design of SPoGInit is independent of specific network architectures, and hence it can adapt effectively across diverse GCN models. The effectiveness of SPoGInit verifies that stabilizing the proposed SP metric can improve the performance of deep GCNs and mitigate the performance degradation problem.

Our contributions are as follows:

\begin{enumerate}[label=\textbullet]
\vspace{-1mm}

    \item \textbf{Theoretical Analysis:} We prove that traditional initialization methods for vanilla GCNs and Residual GCNs (ResGCNs) fail to simultaneously control all three signal propagation metrics. This failure leads to the explosion or vanishing of one or more metrics, ultimately causing performance degradation as network depth increases. We also present experimental evidence to validate our theoretical findings.

    \item \textbf{Empirical Exploration.} Building on the proposed SP framework, we introduce a new initialization design method, \emph{\textbf{S}ignal \textbf{P}ropagation \textbf{o}n \textbf{G}raph-guided \textbf{Init}ialization (\textbf{SPoGInit})}.
    SPoGInit employs an optimization algorithm to determine initial weight variances that effectively stabilize all three signal propagation metrics. Experimental results demonstrate that SPoGInit significantly improves signal propagation across various architectures, enhancing the performance of deep GCNs, particularly for graph-based tasks involving long-range relationships.
    The effectiveness of SPoGInit demonstrates that improving the proposed SP metrics is instrumental in boosting deep GCNs' performance.
    
\end{enumerate}

\section{Preliminaries and Background}
\label{prelim_section}

For any integer $N \in \NN$, we define $[N] := \{1,2,\dots,N\}$. For brevity, we use $\theta$ to denote the collection of trainable parameters in a GNN model. For additional useful notation, see Appendix \ref{sub:supplemental_notation_appendix}.

\subsection{Graph convolutional networks}
\label{sub:GCNs}
\paragraph{Featured graph.} Let $\cG = (\cV, \cE)$ be an undirected graph, where $\cV$ is the set of nodes with $|\cV| = n$, and $\cE$ is the collection of edges. Assume that each node is associated with a $d_0$-dimensional feature and a label belonging to the set $[C]$, where $C\geq 2$ denotes the number of possible labels. Let $x_i \in \RR^{d_0 \times 1}$ and $y_i \in [C]$ denote the feature and the label of node $i$, respectively. Define the node feature matrix as $X = (x_1, x_2, \dots, x_n)^\top \in \RR^{n \times d_0}$. Let $A = (\mathds{1}_{\{(i,j) \in \cE\}})_{i,j \in [n]} \in \RR^{n \times n}$ represent the adjacency matrix and $D = {\rm diag}(A \mathbf{1}_n) \in \RR^{n \times n}$ represent the degree matrix. Further,  $\tilde{A} = A + I$ and $\tilde{D} = D + I$ denote the adjacency matrix and the degree matrix of graph $\cG$ with self-loop added to each node. Finally, the normalized adjacency matrix is given by $\hat{A} = \tilde{D}^{-\frac{1}{2}} \tilde{A} \tilde{D}^{-\frac{1}{2}}$.
\vspace{-2mm}

\paragraph{Vanilla GCN.} Vanilla GCN \citep{welling2016semi} stacks neighborhood aggregations and feature transformations alternately. Specifically, let $H^{(l)}, X^{(l)} \in \RR^{n \times d_l}$ denote the pre-activation and the post-activation embedding matrix at the $l$-th layer of the vanilla GCN, respectively. They are defined recursively by
\begin{equation*}
    H^{(l)} := \hat{A} X^{(l-1)} W^{(l)} + \mathbf{1}_n \cdot b^{(l)}, \quad X^{(l)} := \sigma(H^{(l)}),
\end{equation*}
where $W^{(l)} \in \RR^{d_{l-1} \times d_l}$ and $b^{(l)} \in \RR^{1 \times d_l}$ are the weight and the bias term at the $l$-th layer, respectively. The input to the first layer is given by $X^{(0)}=X$, and the output matrix of an $L$-layer vanilla GCN is $H^{(L)} \in \RR^{n \times C}$, which is then fed into a softmax layer to obtain the predicted labels.
\vspace{-2mm}

\paragraph{ResGCN and gatResGCN.} 

Inspired by \citet{he2016deep}, ResGCN \citep{welling2016semi} combines residual connections with vanilla GCN. An $L$-layer ResGCN adds skip connections to the post-activation embeddings, i.e., 
\begin{equation*}
    H^{(l)} := \hat{A} X^{(l-1)} W^{(l)} + \mathbf{1}_n \cdot b^{(l)}, \quad X^{(l)} := \alpha \sigma(H^{(l)}) + \beta X^{(l-1)}, \quad \forall l \in [L],
\end{equation*}
where $W^{(l)} \in \RR^{d \times d}$ and $b^{(l)} \in \RR^{1 \times d}$ are the weight and the bias term at the $l$-th layer, respectively, while $\alpha,\beta \in \RR$ are predetermined hyper-parameters.\footnote{The original version of ResGCN \citep{welling2016semi} focuses on the special case $(\alpha,\beta)=(1,1)$.} Note that the above formulation requires the hidden dimensions of ResGCN to be equal across all layers. The input of the first layer is given by $X^{(0)} = X W^{(0)}$, and the output of the network is given by $X^{\text{out}} = X^{(L)}W^{(L+1)}$, where $W^{(0)} \in \mathbb{R}^{d_0 \times d}$ and $W^{(L+1)} \in \mathbb{R}^{d \times C}$ are trainable linear transformations to ensure dimension compatability. 
The output $X^{\text{out}}\in \mathbb{R}^{N \times C}$ is then fed into a softmax layer to obtain the predicted labels. The architecture of a gating ResGCN (gatResGCN) is identical to that of ResGCN, with the exception that the fixed hyper-parameters $\alpha, \beta$ replaced by trainable gating parameters $\alpha^{(l)}, \beta^{(l)}$ for each layer $l \in [L]$.

\subsection{Initialization}
\label{sub:Initialization}

We consider the following class of initialization methods. At initialization, all $W^{(l)}_{k'k}$ are i.i.d. and satisfy $\EE[W^{(l)}_{k'k}] = 0$, $\Var[W^{(l)}_{k'k}] = \sigma_w^2 / d_{l-1} $; all $b^{(l)}_k$ are initialized to be $0$ for any $k' \in [d_{l-1}], k \in [d_l]$, $l \in [L]$.

Two widely used random initialization methods, LeCun initialization \citep{bottou1988reconnaissance, lecun2002efficient}
and Kaiming initialization \citep{he2015delving} 
fit into this framework
with $\sigma_w^2 = 1$ and $\sigma_w^2 = 2$ respectively. 
\begin{itemize}
\vspace{-1mm}
  \item LeCun: $\EE[W^{(l)}_{k'k}] = 0$ and $\Var[W^{(l)}_{k'k}] = 1/d_{l-1}$ for any $k' \in [d_{l-1}], k \in [d_l]$, $l \in [L]$.
 \item Kaiming (usually for ReLU): $\EE[W^{(l)}_{k'k}] = 0$ and $\Var[W^{(l)}_{k'k}] = 2 / d_{l-1} $ for any $k' \in [d_{l-1}], k \in [d_l]$, $l \in [L]$. 
\end{itemize}

In GCN models, uniform weight distribution with variance 
$\sigma_w^2 = 1 / 3$ is also widely used, e.g., in PairNorm \citep{zhao2019pairnorm}, DropEdge \citep{rong2020dropedge}, DropNode \citep{huang2020tackling}, SkipNode \citep{lu2021skipnode}, GCNII \citep{chen2020simple}.
We simply refer to this initialization as ``Conventional initialization'' in the rest of this paper. Xavier initialization \citep{glorot2010understanding} has weight variance $2/(d_{l-1} + d_l) = 1/d$ when hidden layers have the same width $d$.

\section{Theoretical Analysis of GCN Initializations}
\label{sec:init_evaluation}

In this section, we evaluate the quality of GCN initializations from three aspects based on the signal propagation (SP) theory as follows.

\textbf{Forward signal propagation (FSP)} is responsible to extract abstract and higher-level representations from the input data as the information flows through the network. We propose the \textit{FSP metric} $\FSP{L}(\sigma_w^2)$, which is the expected output-input norm ratio $\EE_{\theta} [ \|H^{(L)} (\theta) \|^2_{\rm F} / \|X\|^2_{\rm F}]$.
A proper initialization method should prevent $\FSP{L}(\sigma_w^2)$ from either vanishing or exploding as $L \to \infty$.

\textbf{Backward signal propagation (BSP)} is responsible for updating the weights by utilizing gradients computed via back-propagation. 
In vanilla GCN, the gradient of $W^{(l)}$ at the $l$-th layer can be decomposed as $\partial \ell / \partial W^{(l)} = \sigma(H^{(l-1)} )^T \cdot \hat{A} \cdot  [\partial \ell / \partial H^{(l)}]$ where $\ell$ is the training loss. A stable magnitude of $\partial \ell / \partial H^{(l)}$ with respect to the layer $l$ suggests that the gradient is less susceptible to vanishing or exploding.
We take $\EE_{\theta} [\|\partial \ell / \partial W^{(1)}\|^2_{\rm F}]$ at initialization as the \textit{BSP metric} $\BSP{L}(\sigma_w^2)$. A proper initialization method should prevent $\BSP{L}(\sigma_w^2)$ from vanishing or exploding as $L \to \infty$.

\textbf{Graph embedding variation (GEV) propagation} is responsible for tackling the over-smoothing issue, a GCN-specific problem. A number of existing works \citep{cai2020note, zhou2021dirichlet} measure over-smoothing severity by Dirichlet energy $\dir (H^{(L)}) = \sum_{(i,j) \in \cE} \|h_i / \sqrt{1+d_i} - h_j / \sqrt{1+d_j}\|^2$, where $h_i$ is the output embedding of node $i$.
Dirichlet energy $\dir (H^{(L)})$ reveals the embedding variation with the weighted node pair distance, and a smaller value of $\dir (H^{(L)})$ is highly related to the over-smoothing.
To eliminate the influence of the embedding norm, we propose the \textit{GEV metric} $\ODP{L}(\sigma_w^2)$, which is the expected of normalized Dirichlet energy $\EE_{\theta} [\dir(H^{(L)}) / \|H^{(L)}\|^2_{\rm F} ]$ at initialization.
A proper initialization method should prevent $\ODP{L}(\sigma_w^2)$ from vanishing as $L \to \infty$.

\subsection{Theoretical results for vanilla GCN}
\label{sub:vanilla_gcn}

We first theoretically evaluate the signal propagation (SP) quality at initialization in vanilla GCN.
Due to the nonlinearity and high dimensionality of neural networks, the SP analysis is challenging. In order to simplify it, we study the infinite-width limit of vanilla GCN using mean field theory \citep{poole2016exponential, schoenholz2017deep}.
Different from traditional NNs, GNN blocks involve interactions across nodes, so we have to consider the SP of $n$ nodes as an integrated whole, rather than that of only one data sample in NNs.
Under this approximation, all the channels $\{H^{(l)}_{:,k}\}_{k=1}^{d_l}$ of each embedding at the $l$-th layer are i.i.d., following Gaussian distribution $N(\mathbf{0}_n, \Sigma^{(l)})$. The $n \times n$ covariance matrix $\Sigma^{(l)}$ recursively satisfies
\begin{equation*}
\begin{aligned}
    \Sigma^{(l)} &= \sigma_w^2 \hat{A} G(\Sigma^{(l-1)}) \hat{A}, \quad \Sigma^{(1)} = \sigma_w^2 \hat{A} XX^T \hat{A} / d_0,
\end{aligned}
\end{equation*}
where $G(\Sigma^{(l)}) = \EE_{h \sim N(\mathbf{0}_n, \Sigma)} [\sigma(h) \sigma(h)^T] \in \RR^{n \times n}$ (see Appendix \ref{NNGP_appendix_subsection} for the details). This theoretical framework is referred to as the neural network Gaussian process (NNGP) correspondence. Under the NNGP correspondence, the forward signal propagation (FSP) metric can be approximated by
\begin{equation*}
    \FSP{L}(\sigma_w^2) \approx \EE_{H^{(L)} \sim N(\mathbf{0}_n, \Sigma^{(L)})} \sbr{ \|H^{(L)}\|^2_{\rm F} / \|X\|^2_{\rm F} }
\end{equation*}
and the graph embedding variation (GEV) metric can be approximated by
\begin{equation*}
    \ODP{L}(\sigma_w^2) \approx \EE_{H^{(L)} \sim N(\mathbf{0}_n,\Sigma^{(L)})} \sbr{ \dir (H^{(L)}) / \|H^{(L)}\|^2_{\rm F} },
\end{equation*}
where $H^{(L)} \sim N(\mathbf{0}_n, \Sigma^{(L)})$ means all columns (channels) of $H^{(L)} \in \RR^{n \times C}$ are i.i.d. $N(\mathbf{0}_n,\Sigma^{(L)})$.

Now we analyze the SP of GCN under various activation functions. We start with ReLU since it is the most commonly used activation in popular GCN models (e.g., \cite{zhao2019pairnorm, rong2020dropedge, huang2020tackling, lu2021skipnode, chen2020simple}). 
The following theorem states that under ReLU activation, if the initial weight variance $\sigma_w^2 \leq 2$, which covers Conventional, Kaiming, and LeCun initialization, deep vanilla GCNs suffer from poor FSP and GEV.

\begin{restatable}{theorem}{mythmrelu}\label{relu_forward_theorem}
Under the NNGP correspondence approximation, when the activation function $\sigma$ is ReLU, we have
\begin{enumerate}[itemindent = 0pt]
\vspace{-2mm}
    \item If $\sigma_w^2 = 2$, either the limit graph embedding variation (GEV) metric $\lim_{L \to \infty} \ODP{L}(\sigma_w^2) = 0$ or the limit forward signal propagation (FSP) metric $\lim_{L \to \infty} \FSP{L}(\sigma_w^2) = 0$;
    \item When $\sigma_w^2 < 2$, the forward signal propagation (FSP) metric $\FSP{L}(\sigma_w^2) \leq \frac{2C}{d_0} \cdot (\sigma_w^2 /2)^L$ for any $L \geq 1$.
\end{enumerate}
\end{restatable}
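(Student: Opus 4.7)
The plan is to exploit the NNGP correspondence to reduce both $\FSP{L}$ and $\ODP{L}$ to trace functionals of the covariance sequence $\{\Sigma^{(l)}\}$, and then run the analysis entirely at this reduced level. Since each channel of $H^{(L)}$ is i.i.d.\ $N(\mathbf{0}_n, \Sigma^{(L)})$, the FSP metric is $C \cdot \tr(\Sigma^{(L)})/\|X\|_{\rm F}^2$, and $\dir(H^{(L)})$ equals $\tr((I - \hat{A}) H^{(L)} (H^{(L)})^\top)$, so the GEV metric reduces to $\tr((I - \hat{A})\Sigma^{(L)})/\tr(\Sigma^{(L)})$. Two ingredients will drive the entire argument: (i) the ReLU identity $\tr(G(\Sigma)) = \tfrac{1}{2}\tr(\Sigma)$, which is immediate from $\EE_{h \sim N(0,\Sigma_{ii})}[\sigma(h)^2] = \tfrac{1}{2}\Sigma_{ii}$; and (ii) $\hat{A}^2 \preceq I$, which follows from $\|\hat{A}\|_2 = 1$, with the top eigenvector being $v_1 \propto \tilde{D}^{1/2}\mathbf{1}_n$.

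For Part 2, I would run a direct telescoping argument. Combining (i), (ii), and $G(\Sigma^{(l-1)}) \succeq 0$,
\[
\tr(\Sigma^{(l)}) \,=\, \sigma_w^2 \tr\!\bigl(\hat{A}^2 G(\Sigma^{(l-1)})\bigr) \,\leq\, \sigma_w^2 \tr\!\bigl(G(\Sigma^{(l-1)})\bigr) \,=\, \tfrac{\sigma_w^2}{2}\tr(\Sigma^{(l-1)}).
\]
Iterating from the base case $\tr(\Sigma^{(1)}) = (\sigma_w^2/d_0)\|\hat{A} X\|_{\rm F}^2 \leq (\sigma_w^2/d_0)\|X\|_{\rm F}^2$ yields $\tr(\Sigma^{(L)}) \leq (2/d_0)(\sigma_w^2/2)^L \|X\|_{\rm F}^2$, and multiplying by $C/\|X\|_{\rm F}^2$ gives the stated bound $\FSP{L}(\sigma_w^2) \leq (2C/d_0)(\sigma_w^2/2)^L$.

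For Part 1, I would argue a dichotomy. With $\sigma_w^2 = 2$ the same computation shows $\{\tr(\Sigma^{(l)})\}$ is monotone non-increasing, so $T := \lim_{l\to\infty}\tr(\Sigma^{(l)})$ exists in $[0,\infty)$. If $T=0$, then $\FSP{L} \to 0$ trivially. If $T > 0$, then $\tr(\Sigma^{(l-1)}) - \tr(\Sigma^{(l)}) = 2\tr\!\bigl((I - \hat{A}^2) G(\Sigma^{(l-1)})\bigr) \to 0$. Invoking the spectral gap of the connected, self-looped graph, namely $I - \hat{A}^2 \succeq (1 - \lambda_2^2)(I - v_1 v_1^\top)$ with $\lambda_2 < 1$, this forces $G(\Sigma^{(l-1)})$ to concentrate on $v_1 v_1^\top$; propagating through $\Sigma^{(l)} = 2\hat{A} G(\Sigma^{(l-1)})\hat{A}$ and using $\hat{A} v_1 = v_1$ then gives $\Sigma^{(l)} \to T v_1 v_1^\top$. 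Since $(I - \hat{A}) v_1 = 0$, the numerator of the GEV ratio vanishes while the denominator converges to $T > 0$, so $\ODP{L} \to 0$.

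The main obstacle will be the alignment step in the second case: turning the scalar limit $\tr((I - \hat{A}^2) G(\Sigma^{(l-1)})) \to 0$ into the matrix-level convergence $\Sigma^{(l)} \to T v_1 v_1^\top$. This requires controlling not only the off-$v_1$ diagonal block $P G(\Sigma^{(l-1)}) P$ with $P = I - v_1 v_1^\top$, but also the cross-term $v_1^\top G(\Sigma^{(l-1)}) P$; I expect a Cauchy--Schwarz step applied to the positive semidefinite $G(\Sigma^{(l-1)})$ to bound the latter by $\sqrt{v_1^\top G v_1}\cdot\sqrt{\tr(P G P)}$, which suffices. A minor technicality is to restrict to connected graphs so that $1 - \lambda_2^2 > 0$; otherwise the argument can be applied on each connected component separately.
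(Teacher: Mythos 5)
Your Part 2 is essentially the paper's telescoping argument, with a cleaner intermediate step that is available because the activation is ReLU: you deduce $\tr(\hat A^2 G) \le \tr(G)$ directly from $0 \preceq \hat A^2 \preceq I$ and $G \succeq 0$, whereas the paper first passes to the rank-one majorant $G_1(\Sigma) = q(\Sigma)q(\Sigma)^\top$ with $q_i = \sqrt{G(\Sigma)_{ii}}$ and then bounds $\|\hat A q\|^2 \le \|q\|^2$. The paper needs that detour because it proves the contraction lemma for a general nonlinearity, where $\tr(G(\Sigma))$ is not a scalar multiple of $\tr(\Sigma)$; for $(\alpha,\beta)$-ReLU both routes give $\tr(\Sigma^{(l+1)}) \le \tfrac{\sigma_w^2(\alpha^2+\beta^2)}{2}\,\tr(\Sigma^{(l)})$ and the stated bound.

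For Part 1 you take a genuinely different route. You aim for full matrix-level convergence $\Sigma^{(L)} \to T v_1 v_1^\top$ via the spectral gap of $I-\hat A^2$ plus a Cauchy--Schwarz control of the cross block, and that alignment step can be made rigorous as you sketch it (note also $v_1^\top G v_1 = \tr(G) - \tr(PGP) \to T/2$ falls out for free). The paper never proves matrix convergence; it establishes only the scalar fact $\EE[\dir(H^{(L)})] \to 0$, by composing the deterministic contraction $\dir(\hat A x) \le \lambda^2 \dir(x)$ with the $\pm h$ Gaussian-symmetry inequality $\EE[\dir(\sigma(h))] \le \tfrac{\alpha^2+\beta^2}{2}\EE[\dir(h)]$, which yields geometric decay of $\tr(\hat L \Sigma^{(L)})$. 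That route bypasses the alignment step entirely and transparently covers graphs with several connected components (where the $\lambda=1$ eigenspace has dimension $M>1$); your component-wise remark would handle this too, but it needs the rank-one projector $v_1 v_1^\top$ replaced by the rank-$M$ projector onto that eigenspace.

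There is, however, a genuine gap at the final step. You assert the GEV metric ``reduces to $\tr((I-\hat A)\Sigma^{(L)})/\tr(\Sigma^{(L)})$'' and conclude from the numerator vanishing and the denominator staying positive. That quantity is $\EE[\dir(H^{(L)})]/\EE[\|H^{(L)}\|^2_{\rm F}]$, the ratio of expectations; the metric $\ODP{L}$ is $\EE[\dir(H^{(L)})/\|H^{(L)}\|^2_{\rm F}]$, the expectation of the ratio, and these are different functionals. You still need a dominated-convergence argument. The paper's version splits the expectation over $\{\|H\|_{\rm F} \ge \epsilon\}$ and $\{\|H\|_{\rm F} < \epsilon\}$, bounds the first piece by $\epsilon^{-2}\EE[\dir(H^{(L)})]$, bounds the second by $2\,\PP[\|H^{(L)}\|_{\rm F} < \epsilon]$ using the pointwise bound $\dir(H)/\|H\|^2_{\rm F} \le 2$, controls the probability by Gaussian anti-concentration through the coordinate with variance $\ge \tr(\Sigma^{(L)})/n$, and then sends $L\to\infty$ followed by $\epsilon\to 0^+$. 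Given your matrix-level convergence, an alternative finish is available: $H^{(L)} \Rightarrow v_1 \xi^\top$ with $\xi \sim N(0, TI_C)$, $\dir(v_1\xi^\top)=0$ a.s., and since the integrand is bounded by $2$ and a.s.\ continuous at the limit law, bounded convergence gives $\ODP{L}\to 0$. Either closes the argument, but as written the proposal proves that a different quantity goes to zero than the one the theorem asserts.
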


Part 1 of Theorem \ref{relu_forward_theorem} shows that under Kaiming initialization in ReLU-activated vanilla GCN, either $\FSP{L}$ or $\ODP{L}$ vanishes as $L \to \infty$. Part 2 of Theorem \ref{relu_forward_theorem} characterizes the shrinkage of $\FSP{L}$ when $\sigma_w^2$ is even less than that of Kaiming initialization. The proof of Theorem \ref{relu_forward_theorem} is provided in Appendix \ref{relu_forward_appendix_section}.

\vspace{3mm}

\begin{restatable}{theorem}{mythmreluindep}\label{relu_odp_thm}
Under the NNGP correspondence approximation, when the activation is ReLU, the graph embedding variation (GEV) metric $\ODP{L}$ is independent of $\sigma_w^2$.
\end{restatable}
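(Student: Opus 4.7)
The plan is to exploit two separate homogeneity facts: the positive homogeneity of ReLU (which propagates through the NNGP recursion so that changing $\sigma_w^2$ only rescales the final covariance $\Sigma^{(L)}$ by a positive constant), together with the scale invariance of the ratio $\dir(H)/\|H\|_{\rm F}^2$ in $H$. Together these two facts will show that the GEV metric, which only involves this ratio, cannot depend on $\sigma_w^2$.

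First I would record the crucial homogeneity property of the map $G$: since ReLU satisfies $\sigma(\sqrt{c}\,h) = \sqrt{c}\,\sigma(h)$ for any $c>0$ and any vector $h$, the change of variables $h = \sqrt{c}\,\tilde h$ inside the Gaussian integral yields
\begin{equation*}
G(c\,\Sigma) \;=\; \EE_{h \sim N(\mathbf{0}_n, c\Sigma)}[\sigma(h)\sigma(h)^T] \;=\; c\,\EE_{\tilde h \sim N(\mathbf{0}_n,\Sigma)}[\sigma(\tilde h)\sigma(\tilde h)^T] \;=\; c\,G(\Sigma).
\end{equation*}
Using this, I would prove by induction on $l$ that $\Sigma^{(l)}(\sigma_w^2) = (\sigma_w^2)^l\,\Sigma^{(l)}(1)$. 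The base case is immediate from $\Sigma^{(1)}(\sigma_w^2) = \sigma_w^2\,\hat A X X^T \hat A / d_0$. For the inductive step, the NNGP recursion combined with the homogeneity of $G$ gives
\begin{equation*}
\Sigma^{(l)}(\sigma_w^2) \;=\; \sigma_w^2\,\hat A\,G\!\bigl((\sigma_w^2)^{l-1}\Sigma^{(l-1)}(1)\bigr)\hat A \;=\; (\sigma_w^2)^l\,\hat A\,G(\Sigma^{(l-1)}(1))\,\hat A \;=\; (\sigma_w^2)^l\,\Sigma^{(l)}(1).
\end{equation*}

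Next I would exploit scale invariance of the integrand. Setting $c := (\sigma_w^2)^L$ and coupling $H^{(L)} = \sqrt{c}\,\tilde H$ with $\tilde H \sim N(\mathbf 0_n, \Sigma^{(L)}(1))$, the Frobenius norm satisfies $\|H^{(L)}\|_{\rm F}^2 = c\,\|\tilde H\|_{\rm F}^2$, and since $\dir$ is a quadratic form in the rows of its argument we also have $\dir(H^{(L)}) = c\,\dir(\tilde H)$. Hence the ratio $\dir(H^{(L)})/\|H^{(L)}\|_{\rm F}^2$ equals $\dir(\tilde H)/\|\tilde H\|_{\rm F}^2$ almost surely, and taking expectations gives
\begin{equation*}
\ODP{L}(\sigma_w^2) \;=\; \EE_{\tilde H \sim N(\mathbf 0_n,\Sigma^{(L)}(1))}\!\left[\frac{\dir(\tilde H)}{\|\tilde H\|_{\rm F}^2}\right] \;=\; \ODP{L}(1),
\end{equation*}
which is independent of $\sigma_w^2$.

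The only delicate point is the handling of the event $\{\|\tilde H\|_{\rm F} = 0\}$ where the ratio is undefined; since $\Sigma^{(L)}(1)$ is a fixed (nonzero) PSD matrix and $H^{(L)}$ has at least one non-degenerate Gaussian column in the nontrivial regime, this event has probability zero and can be safely ignored (or, to be fully formal, excluded from the expectation as is done implicitly in the definition of the GEV metric). Apart from this technicality, every step is a direct consequence of ReLU's positive homogeneity, so the main obstacle is really just packaging the homogeneity argument cleanly rather than any nontrivial estimation.
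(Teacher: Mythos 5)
Your proposal is correct and is essentially the paper's own argument: the paper likewise establishes, via the positive homogeneity of ReLU passed through $G$ and an induction on layers, that $\Sigma^{(l)}(\sigma_w^2)/\sigma_w^{2l}$ is independent of $\sigma_w^2$ (you normalize to $\sigma_w^2=1$ instead of comparing two arbitrary variances, a cosmetic difference), and then invokes the scale invariance of $\dir(H)/\|H\|_{\rm F}^2$ under the coupling $H \mapsto cH$. The paper additionally states the result for general $(\alpha,\beta)$-ReLU, but the mechanism is identical.
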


\vspace{-2mm}
Theorem \ref{relu_odp_thm} states that it is impossible to improve the GEV metric, 
$\ODP{L}(\sigma_w^2)$, by simply refining $\sigma_w^2$ for ReLU-activated vanilla GCN. In other words, the over-smoothing issue cannot be resolved by adjusting weight variance $\sigma_w^2$ in ReLU-activated vanilla GCN. The proof of Theorem \ref{relu_odp_thm} is provided in Appendix \ref{relu_odp_appendix_section}.

\begin{figure}[h]
\vspace{0mm}
\centering
\subfigure{
\begin{minipage}[t]{1\linewidth}
\centering
\includegraphics[width=\linewidth]{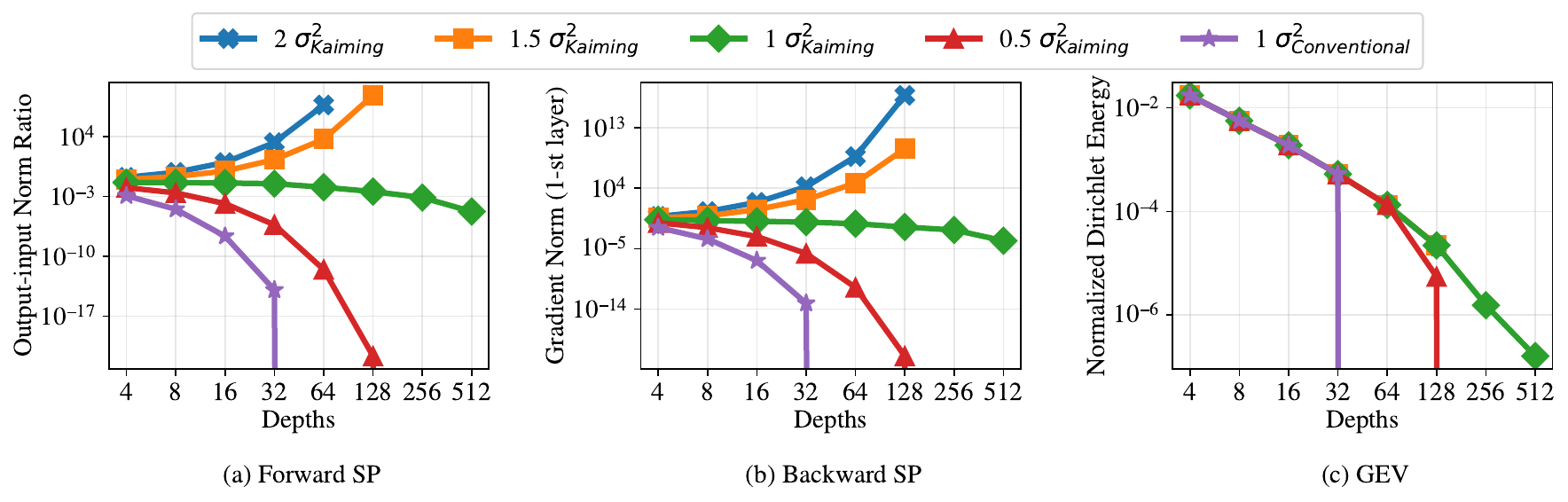}
\end{minipage}%
}%

\subfigure{
\begin{minipage}[t]{1\linewidth}
\centering
\includegraphics[width=\linewidth]{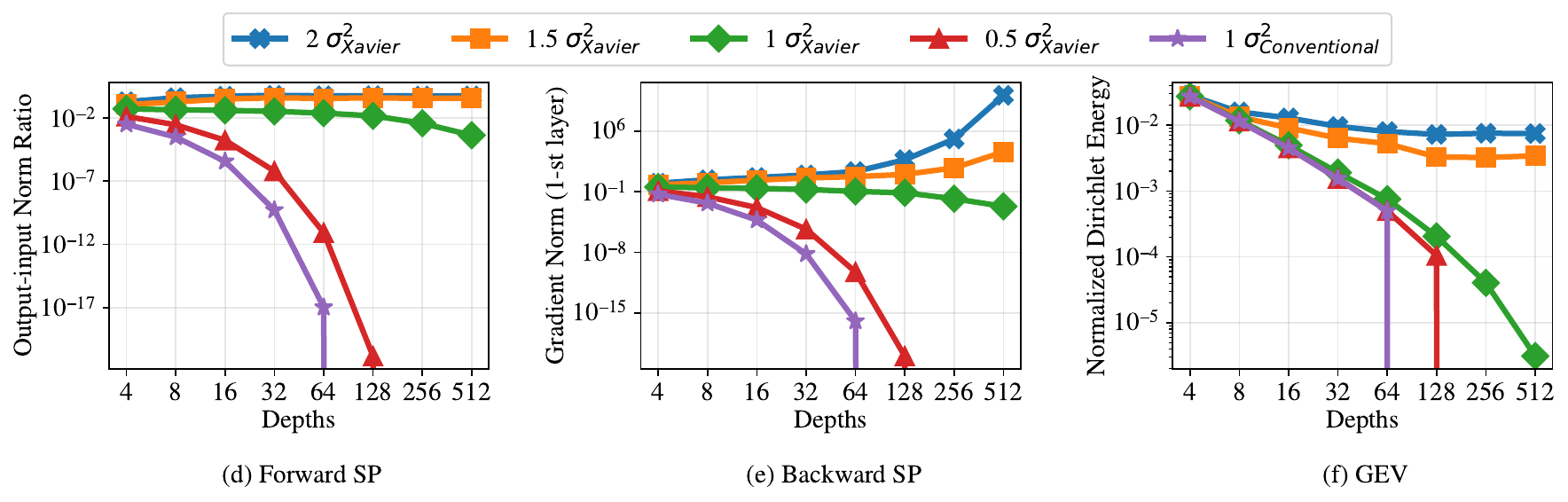}
\end{minipage}%
}%
\centering
\vspace{0mm}
\caption{Plots of (a,d) forward metrics, (b,e) backward metrics, and (c,f) graph embedding variation metrics of deep vanilla GCNs with different initialization variances and activations on Cora. (Sub-figures (a)-(c) are for ReLU activation, while sub-figures (d)-(f) are for tanh activation.) We average the results over 20 runs. We see that the choice of initialization variance plays a crucial role in forward and backward propagation. The graph embedding variation propagation can be made stable with proper initialization variance for tanh activation, but not for ReLU activation.}
\label{vanilla_fig}
\end{figure}

We now provide numerical evidence for Theorem \ref{relu_forward_theorem} and \ref{relu_odp_thm}. The purple lines in Figure \ref{vanilla_fig}(a)-\ref{vanilla_fig}(c) illustrate the shrinkage of the three SP metrics under Conventional initialization as the network depth $L$ increases.  Figure \ref{vanilla_fig}(a) when $\sigma_w^2$ presents the vanishing pattern of $\FSP{L}(\sigma_w^2)$ is no greater than that of Kaiming initialization, which validates Theorem \ref{relu_forward_theorem}. Figure \ref{vanilla_fig}(b) shows that $\BSP{L}(\sigma_w^2)$ transits from vanishing to stable, and then to exploding as $\sigma_w^2$ increases. Figure \ref{vanilla_fig}(c) shows that $\ODP{L}(\sigma_w^2)$ cannot be improved via merely changing $\sigma_w^2$, which validate Theorem \ref{relu_odp_thm}.\footnote{In all the figures illustrating SP metrics, disappearing nodes and vertical lines are caused by surpassing the machine precision. Specifically, the vanishing FSP result in vertical lines in the plots of the GEV metric, while the exploding FSP leads to node disappearance in the plots of the GEV metric.}

Different from ReLU-activated GCNs, Figure \ref{vanilla_fig}(f) shows that GEV metric transits from vanishing to stable for tanh-activated models as $\sigma_w^2$ increases. With proper $\sigma_w^2$, stable propagation for all three types of signals can be achieved; see the orange lines in Figure \ref{vanilla_fig}(d)-\ref{vanilla_fig}(f). A theoretical result of the FSP for tanh-activated vanilla GCNs is provided in Appendix \ref{spt_tanh_appendix_section}.

\subsection{Theoretical results for ResGCN }
\label{sub:GCNswithresi}

Similarly to vanilla GCN, performance degradation has also been reported in deeper ResGCN \citep{huang2020tackling, rusch2023survey}. In this subsection, we focus on the SP in ResGCN.

For simplicity, we study linear ResGCN with identity activation in the theoretical analysis. Such a simplification is very common in NN theory \citep{saxe2014exact, xu2021optimization}.
Similar to vanilla GCN, all the channels of $H^{(L)}$ are i.i.d. $N(\mathbf{0}_n, \tilde{\Sigma}^{(L)})$ under the infinite-width limit (a.k.a. NNGP correspondence). 
The $n \times n$ covariance matrix $\tilde{\Sigma}^{(l)}$ recursively satisfies
\begin{equation}
\begin{aligned}
    \tilde{\Sigma}^{(l)} &= \sigma_w^2 \hat{A} \tilde{\Sigma}^{(l-1)} \hat{A} + \tilde{\Sigma}^{(l-1)}, \quad \tilde{\Sigma}^{(1)} = \sigma_w^4 \hat{A} XX^T \hat{A} / d_0,
\end{aligned}
\end{equation}
See Appendix \ref{NNGP_resgcn_appendix_subsection} for the details.

The following theorem implies that linear ResGCN may suffer from forward signal explosion and over-smoothing under the NNGP approximation at initialization.

\begin{restatable}{theorem}{mythmres}\label{resgcn_forward_theorem}
Suppose that there exists an eigenvector $u$ of $\hat{A}$ corresponding to the eigenvalue $1$, such that the input feature $X \in \RR^{n \times d_0}$ satisfies $X^T u \neq \mathbf{0}_{d_0 \times 1}$. Under the NNGP correspondence approximation for linear ResGCN, if $\alpha^{2} \sigma_w^2 + \beta^{2} > 1$ and $\alpha \neq 0$, then we have
\begin{equation*}
    \lim_{L \to \infty} \FSP{L}(\sigma_w^2) = \infty \quad \text{and} \quad \lim_{L \to \infty} \ODP{L}(\sigma_w^2) = 0.
\end{equation*}
\end{restatable}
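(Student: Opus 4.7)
The plan is to exploit the linearity of the NNGP recursion by diagonalizing in the eigenbasis of $\hat A$, which decouples the dynamics into scalar geometric sequences whose growth rates I can then compare term-by-term.

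\textit{Set-up.} I would first extend the recursion stated in the excerpt (which is written for $\alpha=\beta=1$) to general skip-weights: because each $W^{(l)}$ has mean zero and is independent of $X^{(l-1)}$, the cross terms arising in $\mathrm{Cov}(X^{(l)})$ vanish and one obtains
\begin{equation*}
\tilde{\Sigma}^{(l)} \;=\; \alpha^2 \sigma_w^2\,\hat{A}\,\tilde{\Sigma}^{(l-1)}\,\hat{A} \;+\; \beta^2\,\tilde{\Sigma}^{(l-1)},
\end{equation*}
with the initial condition $\tilde{\Sigma}^{(1)}=(\sigma_w^4/d_0)\hat A XX^T\hat A$ unchanged. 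Diagonalize $\hat A = U\Lambda U^T$ with $\lambda_1=1\ge\lambda_2\ge\cdots\ge\lambda_n$. A brief PSD argument on the signless Laplacian $A+D$ (using $w^\top(A+D)w=\sum_{(i,j)\in\cE}(w_i+w_j)^2\ge 0$) shows that $\lambda=-1$ is strictly excluded from $\mathrm{spec}(\hat A)$ by the self-loop modification, so $\rho:=\max_{i\neq 1}|\lambda_i|<1$. Setting $S^{(l)}:=U^T\tilde{\Sigma}^{(l)}U$, the recursion decouples entrywise:
\begin{equation*}
S^{(l)}_{ij} \;=\; \bigl(\alpha^2\sigma_w^2\lambda_i\lambda_j+\beta^2\bigr)S^{(l-1)}_{ij},\qquad S^{(1)}_{ii}\;=\;\frac{\sigma_w^4\lambda_i^2}{d_0}\bigl\|X^T U_{:,i}\bigr\|^2.
\end{equation*}

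\textit{Main arguments.} For the FSP claim, $\EE[\|H^{(L)}\|_{\rm F}^2]=C\sum_i S^{(L)}_{ii}\ge C\,S^{(L)}_{11}$, and taking $U_{:,1}=u$ yields the explicit lower bound $(C\sigma_w^4\|X^T u\|^2/d_0)(\alpha^2\sigma_w^2+\beta^2)^{L-1}$: the prefactor is strictly positive by the assumption $X^T u\neq \mathbf 0$ and the geometric factor diverges because $\alpha^2\sigma_w^2+\beta^2>1$, so $\FSP{L}\to\infty$. For the GEV claim, the identity $\dir(H)=\tr(H^T(I-\hat A)H)$ turns the NNGP approximation into
\begin{equation*}
\ODP{L}(\sigma_w^2)\;\approx\;\frac{\sum_i (1-\lambda_i)\,(\alpha^2\sigma_w^2\lambda_i^2+\beta^2)^{L-1}\,S^{(1)}_{ii}}{\sum_i(\alpha^2\sigma_w^2\lambda_i^2+\beta^2)^{L-1}\,S^{(1)}_{ii}}.
\end{equation*}
The per-index growth rate $\alpha^2\sigma_w^2\lambda_i^2+\beta^2$ attains its maximum $\alpha^2\sigma_w^2+\beta^2$ \emph{only} at $\lambda_i=1$ (using $\alpha\neq 0$ together with the strict gap $\rho<1$); the $i=1$ contribution dominates the denominator but vanishes from the numerator since $1-\lambda_1=0$, while every other numerator term grows at rate at most $\alpha^2\sigma_w^2\rho^2+\beta^2$. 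Dividing out the denominator's dominant factor, the ratio is of order $(\alpha^2\sigma_w^2\rho^2+\beta^2)^L/(\alpha^2\sigma_w^2+\beta^2)^L\to 0$.

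\textit{Anticipated obstacle.} The delicate step is ruling out a second eigenvalue with $|\lambda_i|=1$. If $\hat A$ admitted $\lambda=-1$ (as the ordinary normalized adjacency of a bipartite graph does), that mode would tie with $\lambda_1=1$ and its factor $1-\lambda_i=2$ would keep the numerator of $\ODP{L}$ of the same order as the denominator, collapsing the geometric decay into a constant-order limit and also invalidating the ``$\ODP{L}\to 0$'' conclusion. The self-loop modification $\hat A=\tilde D^{-1/2}(A+I)\tilde D^{-1/2}$ is precisely what prevents this, so certifying the strict spectral gap $\rho<1$ independently of the graph structure is the only non-routine ingredient of the proof.
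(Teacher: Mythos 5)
Your diagonalization strategy and the FSP half of the argument match the paper's proof essentially line for line: you pass to $S^{(l)}=U^{T}\tilde{\Sigma}^{(l)}U$, note that the recursion decouples entrywise, and obtain $\tr(\tilde{\Sigma}^{(L)})\ge S^{(L)}_{11}=(\alpha^{2}\sigma_w^{2}+\beta^{2})^{L-1}\sigma_w^{4}\|X^{T}u\|^{2}/d_0\to\infty$. Your worry about $\lambda=-1$ is sound but already settled in the paper's Proposition~\ref{eigenval_hatA_proposition} (all non-unit eigenvalues of $\hat A$ lie strictly inside $(-1,1)$), so the spectral gap $\rho<1$ is not actually the ``only non-routine ingredient.''

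The real gap is in your GEV argument. You write
\begin{equation*}
\ODP{L}(\sigma_w^2)\;\approx\;\frac{\sum_i (1-\lambda_i)\,(\alpha^2\sigma_w^2\lambda_i^2+\beta^2)^{L-1}\,S^{(1)}_{ii}}{\sum_i(\alpha^2\sigma_w^2\lambda_i^2+\beta^2)^{L-1}\,S^{(1)}_{ii}},
\end{equation*}
but that right-hand side is $\tr(\hat L\tilde{\Sigma}^{(L)})/\tr(\tilde{\Sigma}^{(L)})=\EE[\dir(H)]/\EE[\|H\|_{\rm F}^{2}]$, whereas the GEV metric is the expectation of the ratio, $\EE_{H\sim N(\mathbf 0_n,\tilde{\Sigma}^{(L)})}[\dir(H)/\|H\|_{\rm F}^{2}]$. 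These are not the same, and the ``$\approx$'' is doing all the work without justification. The paper's entire Part~2 is devoted to bridging exactly this gap: it first shows $\EE[\dir(H^{(L)})]/(\alpha^2\sigma_w^2+\beta^2)^{L-1}\to 0$ and $\tr(\tilde{\Sigma}^{(L)})/(\alpha^2\sigma_w^2+\beta^2)^{L-1}\ge\delta_0>0$, then splits the expectation on the event $\{\|H\|_{\rm F}^{2}\le\epsilon K^{L-1}\}$ with $K=\alpha^2\sigma_w^2+\beta^2$, bounds the ratio above by $2$ (Lemma~\ref{eigenval_hatL_lem}) on the small-norm event, controls that event's probability via a marginal Gaussian tail estimate, and finally sends $\epsilon\to 0^{+}$. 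Without this truncation-plus-tail argument (or some substitute such as a concentration bound showing $H/\|H\|_{\rm F}$ aligns with the top eigenspace), the conclusion $\ODP{L}\to 0$ does not follow from the growth-rate comparison alone. Your intuition about why the limit should be zero is correct, but the proposal as written stops one essential step short of a proof.
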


Since $(\alpha, \beta) = (1,1)$ for the original ResGCN \citep{welling2016semi}, $\alpha^2 \sigma_w^2 + \beta^2 > 1$ and $\alpha \neq 0$ always hold for any \textit{nonzero} initialization variance, which indicates exploding $\FSP{L}(\sigma_w^2)$ and shrinking $\BSP{L}(\sigma_w^2)$.

\begin{figure}[h]
\vspace{0mm}
\centering
\includegraphics[width=0.8\linewidth]{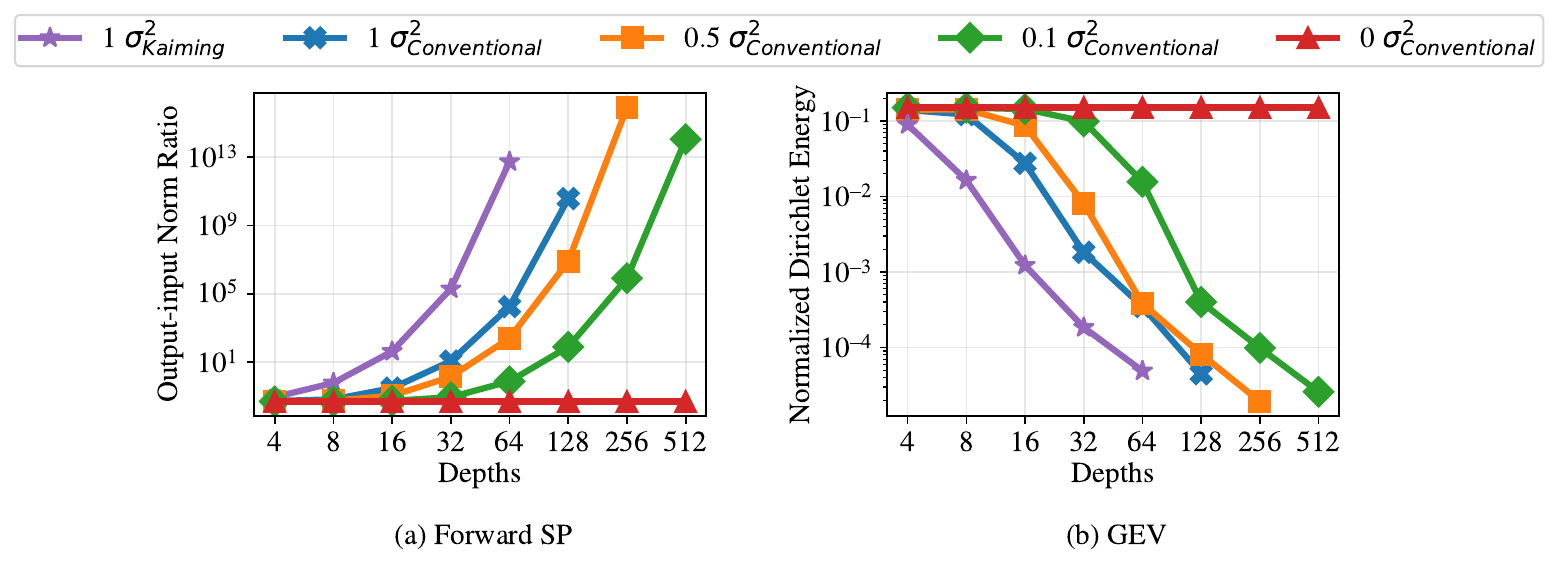}
\caption{(a) The forward metrics and (b) the graph embedding variation metrics of ReLU-activated deep ResGCN on Cora. We average the results over 20 runs. ResGCNs with non-zero initialization variances always suffer from exploding forward propagation and over-smoothing. }
\label{Resinitexp}
\vspace{0em}
\end{figure}

Numerical experiments demonstrate that the consequences of Theorem \ref{resgcn_forward_theorem} can be observed on ResGCNs with non-linear activations. In Figure \ref{Resinitexp}, we plot the FSP and the GEV of ReLU-activated ResGCN with different initialization variances. We see that the widely used Conventional and Kaiming initialization schemes \citep{huang2020tackling,welling2016semi} (and essentially any non-zero initialization variance) lead to exploding forward propagation and over-smoothing.

In summary, the discussions in Section \ref{sub:vanilla_gcn} and \ref{sub:GCNswithresi} provide a theoretical guarantee that the traditional initialization schemes utilized in both vanilla GCN and ResGCN fail to achieve proper SP. To address this challenge, we will introduce new initialization schemes in the subsequent section.

\section{SPoGInit: Initialization guided by signal propagation on graph}
\label{algorithn_section}

In this section, we propose a new initialization design method, termed \emph{\textbf{S}ignal \textbf{P}ropagation \textbf{o}n \textbf{G}raph-guided \textbf{Init}ialization (\textbf{SPoGInit})}, by enhancing the three types of signal propagation (SP) of GCNs. Through this method, we aim to demonstrate that stabilizing SP can lead to improved performance in deep GCNs and effectively mitigate the performance degradation problem. SPoGInit determines layer-wise initialization variances by solving an optimization problem tailored to the SP of GCNs. To be more specific, given a GCN with $L$ layers, we denote the variance of the $l$-th layer by $\sigma^2_{w,l}$. SPoGInit solves the following optimization problem:
\begin{equation}\label{spog_optimization}
\underset{\{\sigma_{w,l}\}^{L}_{l=1}}{\text{minimize}}\quad 
 w_1 \VFSPplain + w_2 \VBSPplain - w_3 \ODP{L},
\end{equation}
where $\VFSPplain$ and $\VBSPplain$ respectively measure the stability for the FSP and BSP metrics across varying depths. 

For vanilla GCNs, the computational graph can often be abstracted as a simple path, suggesting that the stability of SP might be inferred by comparing the SP metrics between very shallow and very deep blocks. Accordingly, we define $\VFSPplain:=(\FSP{1} / \FSP{L-1} - 1)^2$ to encourage stable FSP across hidden layers. Similarly, $\VBSPplain$ is defined as $(\BSP{2} / \BSP{L-1} - 1)^2$ to encourage BSP, with the superscript numbers in parentheses indicating the layer indices relevant to the gradient norm. We use $\BSP{2}$ rather than $\BSP{1}$ to compute $\VBSPplain$ 
to ensure consistent dimensionality across different layers, since the weight parameters of the first layer differ in size from those of the subsequent layers.

For GCNs with skip connections, the computational graph becomes more complex. It is challenging to directly assess the stability of signal propagation by merely examining the SP metrics in very shallow and very deep blocks. Instead, we replace the denominator in vanilla GCN's $\VFSPplain$ formula with the smallest FSP metric value, and the numerator with the largest FSP metric value across all the layers. Mathematically, we have
\begin{equation*}
    \VFSPplain = \left(\max_{1 \leq l < L} \FSP{l} / \min_{1 \leq l < L} \FSP{l} - 1 \right)^2.
\end{equation*}
Similarly, we introduce the modification on $\VBSPplain$ for GCNs with skip connections as follows:
\begin{equation*}
    \VBSPplain = \left(\max_{1 < l < L} \BSP{l} / \min_{1 < l < L} \BSP{l} - 1 \right)^2.
\end{equation*}

Besides, in (\ref{spog_optimization}), $w_1, w_2, w_3>0$ are pre-defined for balancing these three SP metrics. During the implementation of SPoGInit, we adjust the weight initialization variances across layers by gradient descent algorithm. More details about SPoGInit are in Appendix \ref{ap_spog_details}.

\section{Numerical Experiments}
\label{experiments_section}

In this section, we examine the proposed SPoGInit initialization through a series of empirical experiments on various GCN architectures and benchmarks. In Section 5.1, we briefly introduce the experimental settings. In Section 5.2, we demonstrate how SPoGInit improves signal propagation (SP) in different GCN architectures. Finally, in Section 5.3, we showcase the performance of deep GCN models equipped with SPoGInit on mainstream datasets and graph-based tasks involving long-range relationships.

\subsection{Experiments setting}

\textbf{Datasets.} We focus on four mainstream datasets and two graph-based tasks involving long-range relationships.

The mainstream datasets include Cora, PubMed \citep{sen2008collective, yang2016revisiting},
OGBN-Arxiv \citep{hu2020open}, and Arxiv-year \citep{lim2021large}.
For these mainstream datasets, we use their default training/validation/test splits. Statistics of these datasets are summarized in Table \ref{mainstream dataset statistics}.

As for the graph-based tasks involving long-range relationships, we consider 1) the semi-supervised node classification task under missing feature settings, and 2) solving mixed integer linear programming (MILP) problems using GCN-based methods.
Further details on these tasks will be provided in Section \ref{long-range dependency subsection}.

\begin{table}[h]
\caption{Statistics of the mainstream datasets used in the experiments.}
\vspace{4mm}
\label{statistic}
\centering
\begin{tabular}{cccccccc}
\hline
Dataset      & Nodes  & Features & Edges   & Class & Homophily & Training/Validation/Test \\ \hline\hline 
Cora         & 2,708   & 1,433     & 10,556   & 7     & 0.81  & 5.2$\%$/18.5$\%$/36.9$\%$ \\
PubMed       & 19,717  & 500      & 88,648   & 3     & 0.80  & 0.3$\%$/2.5$\%$/5.1$\%$ \\
OGBN-Arxiv    & 169,343 & 128      & 1,166,243 & 40    & 0.66    & 53.7$\%$/17.6$\%$/28.7$\%$ \\
Arxiv-year   & 169,343 & 128      & 1,166,243 & 5     & 0.22     & 50$\%$/25$\%$/25$\%$\\ \hline
\end{tabular}
\label{mainstream dataset statistics}
\end{table}

\textbf{Architectures and Baselines.} For the GCN architectures, we consider the vanilla GCN, and the GCN models with skip-connections: ResGCN \citep{welling2016semi} and the popular MixHop \citep{abu2019mixhop}. Additionally, we examine ResGCN with trainable gating parameters, referred to as gatResGCN. %
Regarding initialization baselines, we consider standard initialization methods in DNNs and GNNs, including Xavier and Conventional initialization. Besides, we also include VirgoFor and VirgoBack, which are the initialization techniques tailored for GCNs \citep{li2023initialization}, as part of our baselines. We note that since every layer of MixHop mixes the powers (with different orders) of the adjacency matrix during its information aggregation, VirgoFor and VirgoBack are not directly applicable to MixHop. Thus, our baselines for MixHop only include Conventional and Xavier initializations.

\textbf{Implementation.} %
We conduct all experiments using PyTorch. To prevent out-of-memory issues with deeper models and ensure fair comparisons, we fix the width of all models at 64. In our experiments, we use the tanh activation function for vanilla GCNs, as Theorem \ref{relu_odp_thm} shows that the graph variation embedding of vanilla GCNs with ReLU activation does not benefit from further optimization. For other GCN architectures, we use the ReLU activation function. All results are averaged over at least three runs. More details of hyperparameters are provided in Appendix \ref{ap_expdetails}.

\subsection{Experiments on mainstream datasets}
In this section, we illustrate how our proposed SPoGInit improves the performance of deep GCN models on mainstream datasets. Specifically, we address the following questions:

\begin{figure}[t]
\centering
\includegraphics[width=\linewidth]{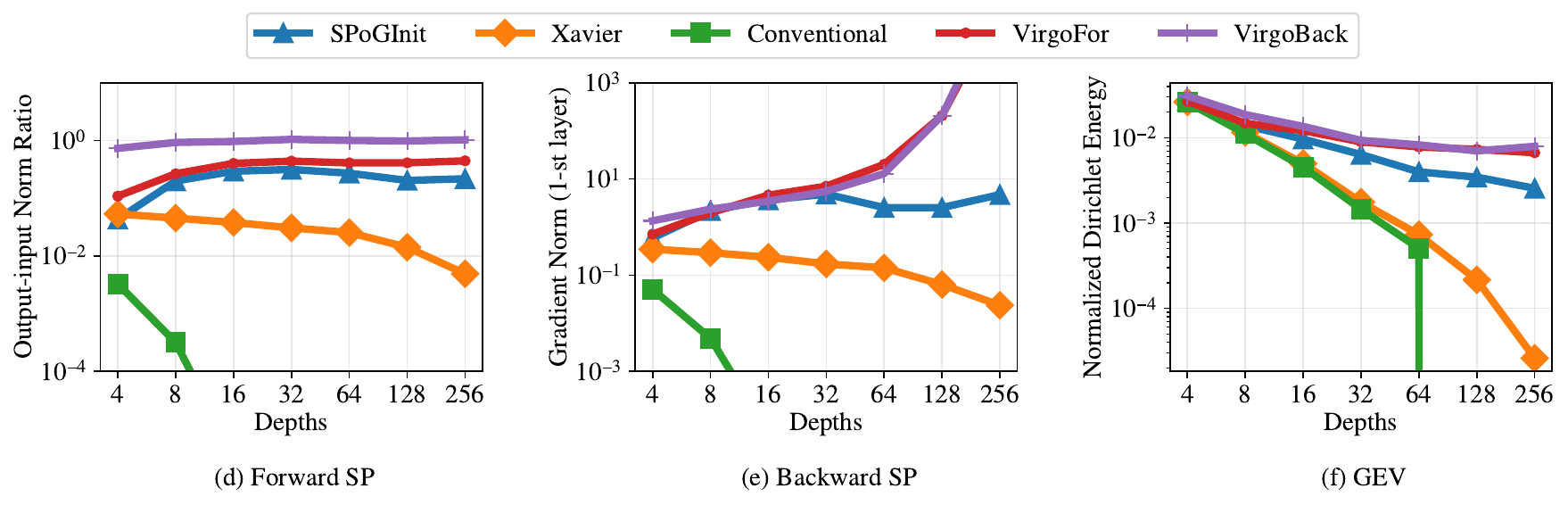}
\vspace{-1.5em}
\caption{Plots of (a) forward metrics, (b) backward metrics, and (c) graph embedding variation metrics of deep vanilla GCNs with different depths on Cora. SPoGInit is highly effective in stabilizing all three SP metrics.}
\label{GCN_meta}
\end{figure}

\begin{figure}[t]
\centering
\includegraphics[width=\linewidth]{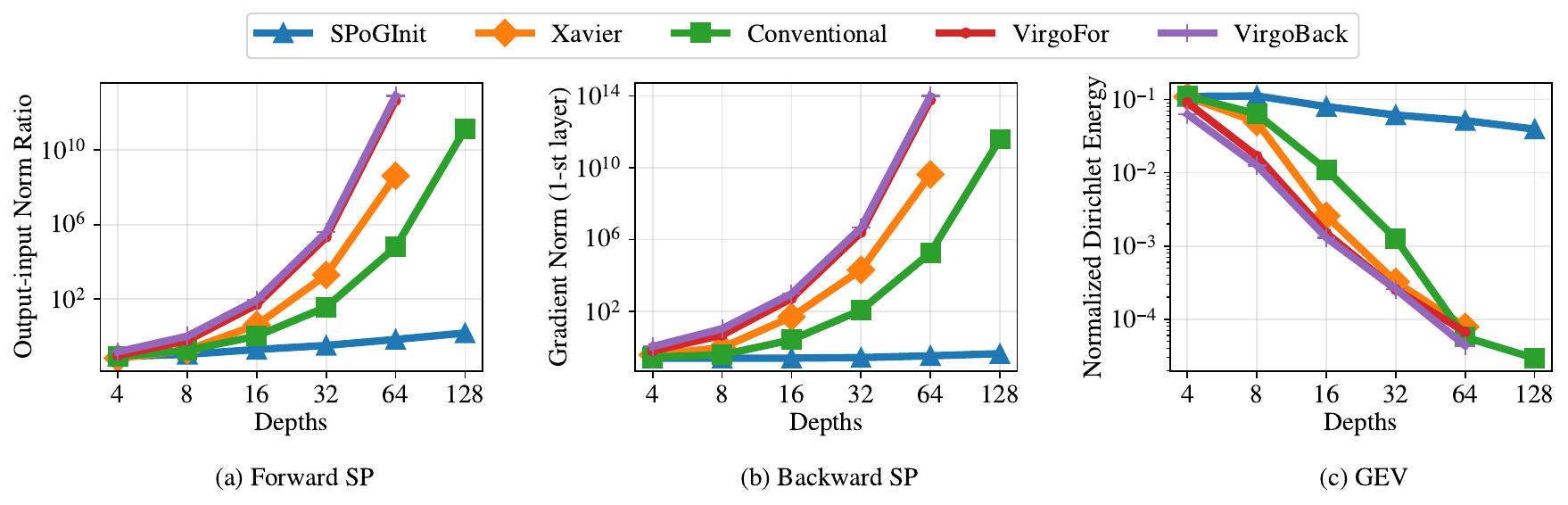}
\vspace{-1.5em}
\caption{Plots of (a) forward metrics, (b) backward metrics, and (c) graph embedding variation metrics of deep ResGCNs with different depths on Cora. SPoGInit is highly effective in stabilizing all three SP metrics.}
\label{ResGCN_meta}
\end{figure}

\begin{figure}[t]
\centering
\includegraphics[width=\linewidth]{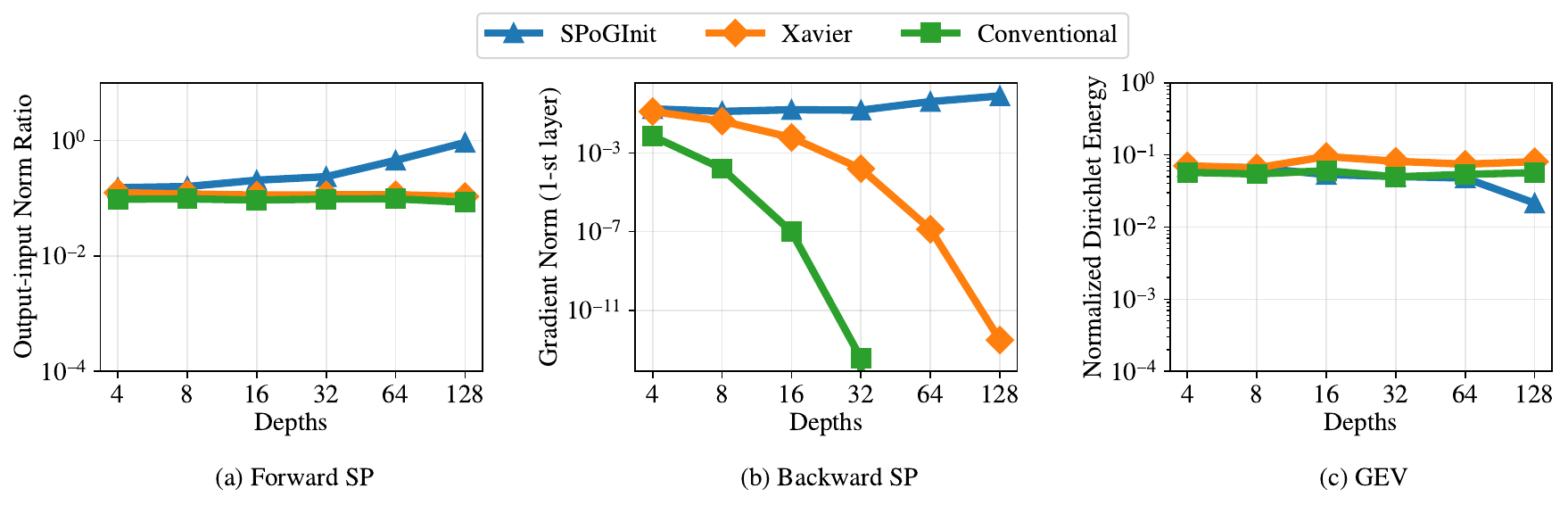}
\vspace{-1.5em}
\caption{Plots of (a) forward metrics, (b) backward metrics, and (c) graph embedding variation metrics of deep MixHop with different depths on Cora. SPoGInit is highly effective in stabilizing all three SP metrics.}
\label{Mixhop_meta}
\end{figure}

\textbf{Q1: Can SPoGInit improve the signal propagation (SP) of GCN models? }

\textbf{Q2: Can SPoGInit alleviate the performance degradation in deep GCNs?}

\textbf{Can SPoGInit improve the SP of GNN models (Q1)?} On the Cora dataset, we examine the SP across various models, including GCN, ResGCN, and MixHop, under different initializations and depths. %

Figures \ref{GCN_meta}, \ref{ResGCN_meta}, and \ref{Mixhop_meta} respectively present the changes in three SP metrics as the depths of GCNs, ResGCN, and MixHop models increase. The results in Figure \ref{GCN_meta} indicate that on vanilla GCN, both Conventional and Xavier initializations cause diminishing forward SP, backward SP, and graph embedding variations (GEV), which lead to gradient vanishing and over-smoothing issues.
The VirgoFor and VirgoBack initialization methods can stabilize the forward SP and maintain the diversity of graph features,  but they result in exploding backward SP, leading to gradient explosion. In contrast, SPoGInit is highly effective in stabilizing all three SP metrics. 

Figure \ref{ResGCN_meta} shows that as the depth increases, ResGCNs with all baseline initializations suffer from exploding forward-backward SP, and diminishing GEV, which lead to gradient explosion and over-smoothing problems. In contrast, the SPoGInit method effectively stabilizes all three SP metrics in deep ResGCNs. 

Figure \ref{Mixhop_meta} shows that as the depth increases in the MixHop model, Conventional and Xavier initializations succeed in maintaining forward SP and the GEV. However, the backward SP significantly diminishes, leading to severe gradient vanishing in deep models. SPoGInit, on the other hand, effectively stabilizes all three SP metrics. %

In summary, these results demonstrate that SPoGInit is capable of finding initializations that stabilize these three SP metrics, thereby alleviating the training difficulties associated with deep models and mitigating the performance degradation as the networks go deeper.

\textbf{Can SPoGInit Alleviate Performance Degradation in Deep GCN Models (Q2)?} We present the detailed numerical results for GCN models with varying depths and initializations across four mainstream datasets, shown in Table \ref{detal_performance}. More details of hyperparameters are provided in Appendix \ref{ap_expdetails}.

\begin{table}[h]
\centering
\caption{Test accuracies of GCN models with varying depths and initializations. The bold figure highlights the best performance among different initializations. "Deg" refers to the test accuracy degradation as the depth increases from 4 to 64 layers. The smallest performance drops are highlighted in orange. The results demonstrate that SPoGInit significantly reduces performance degradation compared to baseline initializations and enhances the performance of deep GNN models across different architectures.}
\label{detal_performance}
\resizebox{\textwidth}{!}{
\begin{tabular}{cc|cccccc|cccccc}
\hline \hline
\multirow{2}{*}{Model} & \multirow{2}{*}{Init.} & \multicolumn{6}{c|}{Cora}
& \multicolumn{6}{c}{Arxiv}  \\ \cline{3-8} \cline{9-14} 
                        & & 4      & 8  & 16 & 32 & 64 &Deg. & 4       & 8  & 16 & 32 & 64 &Deg.    \\ \hline
\multirow{5}{*}{GCN} & Conventional & 79.3	& 71.2	& 56.6	& 48.5	& 33.3 & \color{blue}{$\downarrow 46.0$} &69.2	&67.5	&44.9	&39.1	&8.0 & \color{blue}{$\downarrow 61.2$}\\ 
& Xavier & 80.3	&79.1	&75.2	&72.8	&71.5	& \color{blue}{$\downarrow 8.8$} &69.6	&69.3	&64.1	&57.9	&38.1 & \color{blue}{$\downarrow 31.5$} \\
& VirgoFor & \textbf{80.4}	& \textbf{80.0}	&76.7	&74.3	&73.4	& \color{blue}{$\downarrow 7.0$} &\textbf{69.9}	&69.4	&\textbf{69.3}	&67.0	&\textbf{61.4} & \color{blue}{$\downarrow 8.5$} \\
& VirgoBack & 80.3	&77.4	&74.9	&74.3	&73.2	& \color{blue}{$\downarrow 7.1$} &69.7	&\textbf{69.5}	&69.2	&\textbf{67.5}	&60.9 & \cellcolor{orange!25} \color{blue}{$\downarrow 8.8$} \\
& SPoGInit & 	79.8	&79.6	&\textbf{78.2}	&\textbf{75.8}	&\textbf{73.7}	& \cellcolor{orange!25} \color{blue}{$\downarrow 6.1$} &69.8	&69.1	&66.8	&63.8	&48.4  & \color{blue}{$\downarrow 21.4$}\\
\hline
\multirow{3}{*}{ResGCN} & Conventional & 78.0	&78.5	&77.5	&77.6	&78.2 &\color{red}{$\uparrow 0.2$}  &70.3	&\textbf
{71.6}	&72.0	&70.4	&34.8 &\color{blue}{$\downarrow 35.5$}\\ 
& Xavier & 78.0	&79.1	&77.7	&76.8	&71.6 &\color{blue}{$\downarrow 6.4$}	&70.5	&\textbf{71.6}	&70.7	&53.0	&16.5 &\color{blue}{$\downarrow 54.0$}\\
& VirgoFor & 78.5	&\textbf{78.6}	&77.5	&74.1	&54.2 &\color{blue}{$\downarrow 24.3$}	&70.5	&71.0	&66.4	&20.3	&11.3 &\color{blue}{$\downarrow 59.2$}\\
& VirgoBack &\textbf{79.3}	&78.2	&77.8	&73.9	&29.2 &\color{blue}{$\downarrow 50.1$}	&\textbf{70.6}	&71.1	&66.5	&20.1	&12.5 &\color{blue}{$\downarrow 58.1$}\\
& SPoGInit & 75.7	& 77.9	& \textbf{78.5}	& \textbf{78.5}	& \textbf{80.1}	&\cellcolor{orange!25} \color{red}{$\uparrow 4.4$}& 70.4	&71.5	&\textbf{72.3}	&\textbf{71.8}	&\textbf{71.3}
 &\cellcolor{orange!25} \color{red}{$\uparrow 0.9$}\\ \hline
\multirow{3}{*}{gatResGCN} & Conventional & 77.4	&78.2	&78.0	&77.9	&77.0  &\color{blue}{$\downarrow 0.4$} &\textbf{70.7}	&\textbf{71.8}	&71.6	&70.0	&27.9 &\color{blue}{$\downarrow 42.8$} \\ 
& Xavier & 77.9	&\textbf{78.5}	&76.6	&77.4	&73.2	&\color{blue}{$\downarrow 4.7$}  &70.5	&71.4	&70.8	&45.1	&16.5 &\color{blue}{$\downarrow 54.0$}  \\
& VirgoFor & 78.9	&78.1	&77.6	&70.5	&35.5	&\color{blue}{$\downarrow 43.4$}  &70.4	&70.7	&65.7	&20.4	&11.3 &\color{blue}{$\downarrow 59.1$}  \\
& VirgoBack & \textbf{79.0}	&78.3	&76.2	&70.5	&37.9	&\color{blue}{$\downarrow 41.1$}  &70.5	&70.7	&65.9	&20.1	&12.5 &\color{blue}{$\downarrow 58.0$}  \\
& SPoGInit & 76.3	& 77.8	& \textbf{78.2}	& \textbf{78.1}	& \textbf{77.6}	&\cellcolor{orange!25} \color{red}{$\uparrow 1.3$}  & 70.2	&71.5	&\textbf{72.1}	&\textbf{72.5}	&\textbf{72.8}  &\cellcolor{orange!25} \color{red}{$\uparrow 2.6$}\\ \hline
\multirow{3}{*}{MixHop} & Conventional & 72.5	&52.0	&36.4	&46.9	&42.0 &\color{blue}{$\downarrow 30.5$} &68.0	&64.1	&59.8	&52.6	&38.2 &\color{blue}{$\downarrow 29.8$} \\ 
& Xavier & 79.3	&75.1	&71.6	&64.3	&56.3	&\color{blue}{$\downarrow 23.0$} &67.9	&64.0	&60.1	&53.0	&38.0 &\color{blue}{$\downarrow 29.9$} \\
& SPoGInit & \textbf{79.6}	&\textbf{75.0}	&\textbf{76.8}	&\textbf{72.0}	&\textbf{72.2}
&\cellcolor{orange!25} \color{blue}{$\downarrow 7.4$} & \textbf{69.9}	&\textbf{70.9}	&\textbf{70.3}	&\textbf{68.6}	&\textbf{61.5}
 &\cellcolor{orange!25} \color{blue}{$\downarrow 8.4$} \\

\end{tabular}
}
\vspace{1em} %
\resizebox{\textwidth}{!}{
\begin{tabular}{cc|cccccc|cccccc}
\hline \hline
\multirow{2}{*}{Model} & \multirow{2}{*}{Init.} & \multicolumn{6}{c|}{PubMed} & \multicolumn{6}{c}{Arxiv-year}  \\ \cline{3-8} \cline{9-14} 
                        & & 4      & 8  & 16 & 32 & 64 &Deg. & 4       & 8  & 16 & 32 & 64 &Deg.    \\ \hline
\multirow{3}{*}{GCN} & Conventional & 75.9	&68.1	&67.1	&68.0	&60.8 &\color{blue}{$\downarrow 15.1$}	&\textbf{44.1}	& \textbf{42.7}	&44.2	&43.6	&39.9  &\color{blue}{$\downarrow 4.2$}\\ 
& Xavier & 78.1	&76.8	&76.0	&77.2	&75.7	&\color{blue}{$\downarrow 2.4$} &44.0	&42.3	&\textbf{45.5}	&\textbf{45.6}	&\textbf{43.9} &\color{blue}{$\downarrow 0.1$}\\
& VigorFor & \textbf{78.9}	&\textbf{78.5}	&\textbf{78.2}	&77.8	&75.9	&\color{blue}{$\downarrow 3.0$} &43.9	&30.0	&45.4	&\textbf{45.6}	&41.8 &\color{blue}{$\downarrow 2.1$}\\
& VigorBack & 78.1	&75.5	&76.5	&76.0	&74.7	&\color{blue}{$\downarrow 3.4$} &43.7	&29.8	&\textbf{45.5}	&45.4	&41.8 &\color{blue}{$\downarrow 1.9$}\\
& SPoGInit & 	77.4	&77.5	&77.0	&\textbf{78.4}	&\textbf{78.1} &\cellcolor{orange!25} \color{red}{$\uparrow 0.7$}	&43.9	&41.9	&45.2	&44.9	&\textbf{43.9} &\cellcolor{orange!25} 0\\
\hline
\multirow{3}{*}{ResGCN} & Conventional & 74.9	& 76.1	&76.0	&76.8	&76.6 &\color{red}{$\uparrow 1.7$} &48.2	&49.6	&49.7	&43.6	&32.6 &\color{blue}{$\downarrow 15.6$}\\ 
& Xavier & 75.8	&77.5	&75.7	&76.5	&74.8	&\color{blue}{$\downarrow 1.0$} &\textbf{48.4}	&49.0	&46.0	&31.9	&23.0  &\color{blue}{$\downarrow 25.4$}\\
& VigorFor & 76.2	&\textbf{77.6}	&76.6	&76.1	&74.5	&\color{blue}{$\downarrow 1.7$} &48.3	&48.4	&38.9	&29.6	&26.2  &\color{blue}{$\downarrow 22.1$}\\
& VigorBack & \textbf{77.0}	&\textbf{77.6}	&\textbf{76.9}	&\textbf{77.0}	&74.9	&\color{blue}{$\downarrow 2.1$} &48.1	&48.1	&38.6	&29.1	&24.9  &\color{blue}{$\downarrow 23.2$}\\
& SPoGInit & 75.4	&76.2	&76.4	&\textbf{77.0}	&\textbf{77.4}	&\cellcolor{orange!25} \color{red}{$\uparrow 2.0$} & 47.7	&\textbf{49.8}	&\textbf{50.9}	&\textbf{51.9}	&\textbf{49.3} &\cellcolor{orange!25} \color{red}{$\uparrow 1.6$}\\ \hline
\multirow{3}{*}{gatResGCN} & Conventional & 74.4	&76.9	&76.0	&76.6	&76.0 &\color{red}{$\uparrow 1.6$} &\textbf{48.7}	&\textbf{50.5}	&49.5	&41.6	&31.8 &\color{blue}{$\downarrow 16.9$} \\ 
& Xavier & 76.3	&77.6	&75.6	&\textbf{77.2}	&75.8 &\color{blue}{$\downarrow 0.5$}	&48.5	&49.5	&45.8	&32.6	&23.0 &\color{blue}{$\downarrow 25.5$} \\
& VirgoFor & 76.2	&77.6	&\textbf{77.1}	&76.1	&74.3 &\color{blue}{$\downarrow 1.9$}	&48.5	&48.4	&38.9	&31.5	&26.2 &\color{blue}{$\downarrow 22.3$} \\
& VirgoBack & \textbf{76.6}	&\textbf{77.9}	&76.8	&75.8	&75.0 &\color{blue}{$\downarrow 1.6$}	&48.3	&48.1	&38.8	&29.6	&25.0 &\color{blue}{$\downarrow 23.3$} \\
& SPoGInit & 74.8	&75.9	&75.6	&76.7	&\textbf{77.2} &\cellcolor{orange!25} \color{red}{$\uparrow 2.4$} & 47.9	&49.3	&\textbf{50.0}	&\textbf{50.6}	&\textbf{51.0} &\cellcolor{orange!25} \color{red}{$\uparrow 3.1$}\\ \hline
\multirow{3}{*}{MixHop} & Conventional & 73.3	&65.1	&68.1	&65.9	&56.4 &\color{blue}{$\downarrow 16.9$} &47.8	&48.6	&49.2	&47.7	&42.3 &\color{blue}{$\downarrow 5.5$} \\ 
& Xavier & 76.6	&76.4	&72.5	&72.4	&71.1 &\color{blue}{$\downarrow 5.5$}  &47.9	&48.6	&49.2	&48.5	&42.0 &\color{blue}{$\downarrow 5.9$}\\
& SPoGInit & \textbf{76.8}	&\textbf{77.1}	& \textbf{75.2}	&\textbf{76.3}	&\textbf{74.3}	&\cellcolor{orange!25} \color{blue}{$\downarrow 2.5$} & \textbf{48.3}	&\textbf{50.2}	&\textbf{51.3}	&\textbf{52.0}	&\textbf{50.5}
&\cellcolor{orange!25} \color{red}{$\uparrow 2.2$}\\

\hline
\hline
\end{tabular}
}
\end{table}

Table \ref{detal_performance} demonstrates that SPoGInit significantly reduces performance degradation compared to baseline initializations across various GCN models and datasets. For certain models and tasks, SPoGInit can even enhance the performance consistently as network depth increases from 4 to 64, unleashing the potential of deep GCNs. Specifically, in deep ResGCN and gatResGCN models, baseline initializations cause notable test accuracy drops exceeding 30\% on the OGBN-Arxiv dataset and 15\% on the Arxiv-year dataset compared to their shallow counterparts. In contrast, deep ResGCN and gatResGCN models with SPoGInit achieve performance gains of from 1.2\% to 3.0\% as depth increases from 4 to 64 across all the tested datasets. For deep vanilla GCNs and MixHop models, SPoGInit exhibits substantially less accuracy decline than baseline initializations on most of the tested datasets. For example, SPoGInit reduces the test accuracy drop for the 64-layer MixHop by around at least 20\% on both Cora and OGBN-Arxiv datasets.

Furthermore, we emphasize that SPoGInit exhibits greater robustness over various GCN models. Specifically, we see that Xavier, VirgoFor, and VirgoBack perform well on deep vanilla GCNs, all surpassing Conventional initialization on the four datasets. However, their performance significantly degrades on deep ResGCNs. In contrast, SPoGInit demonstrates excellent versatility, achieving less performance degradation across models of varying depths and architectures.

\paragraph{Further exploration on SPoGInit.} SPoGInit, as previously introduced, begins with a given starting initialization and searches for a more stable signal propagation (SP) by solving an optimization problem (\ref{spog_optimization}). In most cases, we select Xavier initialization as the starting point for SPoGInit in the experiments above. Now we explore the \textbf{adaptability of SPoGInit by studying how it performs when starting from different initializations}.

\begin{table}[h]
\centering
\vspace{-1em}
\caption{Test accuracies of GCN models with varying depths and initializations on the OGBN-Arxiv dataset. "Improv." refers to the test accuracy changes after replacing the original initializations with SPoGInit, which starts from those initializations. The results demonstrate that SPoGInit achieves better performance across various models by starting from different initializations.}
\label{Combine}
\begin{tabular}{cc|ccccc}
\hline \hline
\multirow{2}{*}{Model} & \multirow{2}{*}{Init.} &  \multicolumn{5}{c}{Arxiv}  \\ \cline{3-7}
                        & & 4      & 8  & 16 & 32 & 64   \\ \hline
\multirow{12}{*}{GCN} & Conventional & 69.2	&67.5	&44.9	&39.1	&8.0 \\ 
& +SPoGInit & 	69.7	&69.2	&68.1	&63.1	&56.9	\\
& Improv. &\color{red}{$\uparrow 0.5$} &\color{red}{$\uparrow 1.7$}	& \color{red}{$\uparrow 23.2$}	&\color{red}{${\uparrow 24.0}$}	&\color{red}{${\uparrow 48.9}$}	 \\
\cline{2-7}
& Xavier & 69.6	&69.3	&64.1	&57.9	&38.1 \\ 
& +SPoGInit & 	69.8	&69.1	&66.8	&63.8	&48.4	\\
& Improv. &\color{red}{$\uparrow 0.2$} &{$\downarrow 0.2$}	& \color{red}{$\uparrow 2.7$}	&\color{red}{${\uparrow 5.9}$}	&\color{red}{${\uparrow 10.3}$}\\ 

\cline{2-7}
& VirgoFor & 69.9	&69.4	&69.3	&67.0	&61.4 \\ 
& +SPoGInit & 	69.6	&69.6	&68.7	&67.2	&61.4
	\\
& Improv. &{$\downarrow 0.3$} &\color{red}{$\uparrow 0.2$}	& {$\downarrow 0.6$}&\color{red}{${\uparrow 0.2}$}	&0\\ 
\cline{2-7}
& VirgoBack & 69.7	&69.5	&69.2	&67.5	&60.9 \\ 
& +SPoGInit & 	69.6	&69.7	&69.0	&67.2	&61.8
	\\
& Improv. &{$\downarrow 0.1$} &\color{red}{$\uparrow 0.2$}	& {$\downarrow 0.2$}	&{$\downarrow 0.3$}	&\color{red}{$\uparrow 0.9$}\\ \hline
\multirow{12}{*}{ResGCN} & Conventional &   70.3	&71.6	&72.0	&70.4	&34.8 \\ 
& +SPoGInit & 	70.3	&71.2	&71.9	&71.9	&71.5	\\
& Improv. &0 &{$\downarrow 0.4$}	& {$\downarrow 0.1$}	&\color{red}{${\uparrow 1.5}$}	&\color{red}{${\uparrow 36.7}$}	 \\
\cline{2-7}
& Xavier & 70.5	&71.6	&70.7	&53.0	&16.5 \\ 
& +SPoGInit & 	70.4	&71.5	&72.3	&71.8	&71.3	\\
& Improv. &{$\downarrow 0.1$} &{$\downarrow 0.1$}	& \color{red}{${\uparrow 1.6}$}	&\color{red}{${\uparrow 18.8}$}	&\color{red}{${\uparrow 54.8}$}\\ 

\cline{2-7}
& VirgoFor & 70.5	&71.0	&66.4	&20.3	&11.3 \\ 
& +SPoGInit & 	70.0	&71.3	&72.0	&72.1	&71.2	\\
& Improv. &{$\downarrow 0.5$} &\color{red}{$\uparrow 0.3$}	& \color{red}{${\uparrow 6.4}$}	&\color{red}{${\uparrow 51.8}$}	&\color{red}{${\uparrow 59.9}$}\\ 
\cline{2-7}
& VirgoBack & 70.6	&71.1	&66.5	&20.1 &12.5 \\ 
& +SPoGInit & 	70.2	&71.4	&72.0	&72.3	&71.4	\\
& Improv. &{$\downarrow 0.4$} &\color{red}{$\uparrow 0.3$}	& \color{red}{$\uparrow 5.5$}	&\color{red}{$\uparrow 52.2$}	&\color{red}{$\uparrow 58.9$}\\

\hline
\hline
\end{tabular}
\end{table}

We report the performance of vanilla GCNs on the OGBN-Arxiv dataset, with four baseline initializations, alongside SPoGInit which is applied starting from each of these four initializations, as shown in Table \ref{Combine}. The results indicate that SPoGInit can effectively incorporate different initializations, leading to better performance. Specifically, when starting from the Xavier and Conventional initializations, SPoGInit achieves nearly  10\% and 50\% improvements in vanilla GCNs with 64 layers compared to these two initializations. Furthermore, when starting from the GNN-based initializations (VirgoFor and VirgoBack), SPoGInit also delivers better performance in deep GCNs. However, it is important to note that when using ResGCN, VirgoFor and VirgoBack exhibit significant performance degradation. Despite this, starting from these GNN-based initializations, SPoGInit still significantly improves performance by more than 58\% in deep ResGCNs with 64 layers.

These results demonstrate that although different GNN architectures may favor specific initializations, starting from these initializations, SPoGInit can further enhance the performance. Its strong adaptability allows SPoGInit to achieve better results across various models by effectively integrating with different initialization techniques.

Additionally, we provide an ablation study on the SP metric components within SPoGInit in Appendix \ref{append_ablation}.

\subsection{Experiments on graph-based tasks involving long-range relationships}
\label{long-range dependency subsection}
\textbf{Missing feature settings.} To investigate the performance of SPoGInit on the graph-based tasks involving long-range relationships, we first conduct experiments on the datasets under missing feature settings following \citep{zhao2019pairnorm}. Specifically, we construct graph datasets by zeroing out a designated proportion of node features in the validation and test sets while preserving their corresponding labels. The details of the missing feature settings are provided in Appendix \ref{ap_expdetails}. Within the semi-supervised learning framework, a high proportion of missing features necessitates multiple feature aggregations from the training set for accurate label prediction in the validation and test sets. This approach effectively amplifies the challenge of learning long-range relationships within the dataset.

\begin{table}[h]
\caption{The optimal performance of various deep GCN models, ranging from 8 to 64 layers, with different initializations under missing feature settings. The figures in parentheses denote the depth corresponding to the optimal performance. Bold figures show the best performance for each model and the best performances across all architectures are highlighted in orange.
The term ``optimal improvement'' refers to the maximum performance enhancements achieved by SPoGInit compared to the best performance of baseline initializations across the various models. The results indicate that optimal performance on datasets with long-range relationships is attained at greater depths, and SPoGInit enhances the performance of deep GCN architectures.}
\label{missingfeature}
\vspace{4mm}
\centering
\begin{tabular}{cccc}
\hline \hline
   Model & Init.             & Missing 50\%              & Missing 100\%                                                           \\ \hline
\multirow{4}{*}{GCN} &\multicolumn{1}{l|}{Conventional}       & 43.1 (16)          & 41.6 (16)                                                        \\
     & \multicolumn{1}{l|}{Xavier}      & \textbf{43.3 (16)}          & 42.5 (16)                                                     \\
     & \multicolumn{1}{l|}{VirgoFor}    & 40.1 (16) & 41.9 (16)                                                        \\
     & \multicolumn{1}{l|}{VirgoBack}   & 42.0 (16)          & 42.0 (16)                                              \\
     & \multicolumn{1}{l|}{SPoGInit}    & 43.1 (32)          & \textbf{42.4 (16)}                                                        \\ \hline
\multirow{4}{*}{ResGCN} &\multicolumn{1}{l|}{Conventional}    & 45.8 (16)          & 43.9 (16)                                                         \\
     & \multicolumn{1}{l|}{Xavier}      & 45.2 (8)          & 43.1 (8)                                                         \\
     & \multicolumn{1}{l|}{VirgoFor}    & 45.4 (8)           & 43.0 (8)                                                         \\
     & \multicolumn{1}{l|}{VirgoBack}   & 45.2 (8)           & 42.9 (8)                                                         \\
     & \multicolumn{1}{l|}{SPoGInit}    & \cellcolor{orange!25}  \textbf{46.3 (64)} & \cellcolor{orange!25}  \textbf{45.1 (64)}                                            \\ \hline
\multirow{4}{*}{gatResGCN} & \multicolumn{1}{l|}{Conventional} & \textbf{45.5 (16)}          & 43.3 (16)                                                        \\
     & \multicolumn{1}{l|}{Xavier}      & 44.7 (8)          & 42.2 (8)                                                        \\
     & \multicolumn{1}{l|}{VirgoFor}    & 45.2 (8)          & 42.6 (8)                                                                          \\
     & \multicolumn{1}{l|}{VirgoBack}   & 45.0 (8)          & 42.6 (8)                                                         \\
     & \multicolumn{1}{l|}{SPoGInit}    & 45.3 (16)	 & \textbf{43.4 (32)} \\ \hline
\multirow{4}{*}{MixHop} & \multicolumn{1}{l|}{Conventional}    & \textbf{45.1 (8)}  & 42.9 (16)                                                        \\
     & \multicolumn{1}{l|}{Xavier}      & \textbf{45.1 (8)}           & 42.8 (16)                                                        \\
     & \multicolumn{1}{l|}{SPoGInit}    & 44.9 (8)           & \textbf{44.6 (16)}                                          \\ \hline
\multicolumn{2}{l}{Optimal Improvements}                 & +0.5                & +1.7                                                              \\ \hline 

\hline 
\hline
\end{tabular}
\end{table}

We adopt the missing feature setting with different missing proportions on the Arxiv-year dataset. We examine the performance of various GCN models with different depths and initializations over these settings. Table \ref{missingfeature} presents the best performance of different GCN models, ranging from 8 layers to 64 layers, along with the optimal depth that achieves the best performance. The results indicate that under higher feature missing proportion, the models with SPoGInit tend to achieve their best performance at larger depths. SPoGInit demonstrates a significant improvement over baseline initializations, especially under 100\% missing proportion, i.e., where all node features in the validation and test sets are missing. Moreover, deep ResGCN with SPoGInit achieves the best performance across all the tested architectures and initializations.
These findings demonstrate the potential of deep GCNs in graph-based tasks involving long-range relationships and the effectiveness of SPoGInit in unleashing such potential.

\textbf{GCNs for Combinatorial Optimization.}
In recent years, GNNs have emerged as a powerful tool for addressing Combinatorial Optimization (CO) problems, which are fundamental in areas such as computer science and operations research \citep{paschos2014applications}. Many classical CO problems are extremely difficult to solve due to their NP-hardness \citep{bomze1999maximum, gavish1978travelling, coffman1984approximation}. To address these difficult CO problems, expert-designed heuristic algorithms \citep{boussaid2013survey} are developed to find near-optimal solutions within reasonable computational limits. Over the past decade, machine learning-based methods \citep{bengio2021machine} have gained significant interest in tackling CO problems. It has been shown that with sufficient data and proper training, neural networks have the potential of surpassing expert-designed methods in both performance and efficiency \citep{alvarez2017machine, khalil2016learning, khalil2017learning}. Given that many CO problems naturally have a graph structure, GNNs have become a highly promising approach \citep{gasse2019exact}.

In this work, we consider the Maximal Independent Set (MIS) problem, a classic CO problem in graph theory with significant applications in network analysis, wireless communication, etc. Given an undirected graph \( \cG = (\cV, \cE) \), the objective is to find a subset \( \cS \subseteq \cV \) of vertices such that \( \cS \) is independent, meaning no two vertices in \( \cS \) are adjacent, and \( \cS \) is maximal, implying that no additional vertex can be added to \( \cS \) without violating its independence.

We note that MIS problems can involve inherent long-range relationships between nodes. The inclusion of a node in the independent set may significantly affect all other nodes in the set. For example, consider a cycle graph \( \cG \) with an even number of nodes \( V_1, V_2, \dots, V_{4m} \). In this cycle, each node connects to two others, forming a closed loop. If \( V_1 \) is included in \( \cS \), its adjacent nodes \( V_{4m} \) and \( V_2 \) cannot be part of \( \cS \), allowing nodes \( V_{4m-1} \) and \( V_3 \) to potentially be included in \( \cS \). This cascading effect continues around the cycle, and it is straightforward to show that the maximal independent set is \( \{V_1, V_3, \dots, V_{2m-1}, V_{2m+1}, \dots, V_{4m-1}\} \). Similarly, if \( V_1 \) is excluded from \( \cS \), the maximal independent set becomes \( \{V_2, V_4, \dots, V_{2m}, \dots, V_{4m-1}\} \). Notably, the predictions of whether \( V_1 \) and \( V_{2m} \) are in the maximal independent set depend on each other, while the two nodes have a distance of $2m-1$.

In this work, we follow the setting in \cite{han2022gnn} and apply GNN to solve MIS problems. The MIS problem is typically formulated as an Integer Linear Programming (ILP) problem:
\begin{equation*}
\begin{aligned}
    &\max_x \sum_{i \in \cV} x_i \\
    &\text{s.t. } \ x_i + x_j \leq 1, \ \forall (i,j) \in \cE, \\
    &\quad \quad x_i \in \{0,1\}.
\end{aligned}
\end{equation*}
Based on this formulation, a variable-constraint bipartite graph representation is constructed. The set of variable nodes \( \cV \) (each denoted by \( i \)) is placed on one side of the bipartite graph, while the set of constraint nodes \( \cE \) (each representing an edge \( (i,j) \) in the original graph) is placed on the other side. Each variable node \( i \) and \( j \) is connected to the corresponding constraint node \( (i,j) \). Following \cite{han2022gnn}, we adopt a predict and search framework. The predicting stage utilizes GNN over the bipartite graph representation to find a good initial solution to the MIS problem, while the search stage refines this solution using the traditional ILP solver such as SCIP or Gurobi to obtain the final result.

We investigate the power of SPoGInit in improving the performance of bipartite GCNs for solving MIS problems. We utilize the Independent Set (IS) dataset \citep{bergman2016decision}, where each instance comprises 600 constraints and 1500 binary variables. The bipartite GCN models are trained on 80 problem instances and then applied to predict the optimal solution (0 or 1) for each node across 20 test instances. We set the learning rates as 1e-3, 5e-4, and 5e-5, for bipartite GCN models with 2, 8, and 16 layers, respectively.

The experimental results, shown in Table \ref{L2O}, indicate that bipartite GCNs often achieve performance improvements as their depths increase. With Xavier initialization, there is approximately a 4\% increase in accuracy from 2 to 16 layers. Moreover, employing SPoGInit yields greater performance enhancements, with an approximate 7\% increase in accuracy from 2 to 16 layers, and surpasses the baseline methods on the 16-layer network.

\begin{table}[h]
\caption{Test accuracy of bipartite GNN models on ILP tasks across varying initializations and depths. The results underscore the critical role of depth in solving ILP problems with Bipartite GCNs. Furthermore, SPoGInit significantly boosts the performance of deep Bipartite GCNs, leading to optimal accuracy.}
\vspace{4mm}
\label{L2O}
\centering
\begin{tabular}{ccccc}
\hline  \hline
Model      &Init.                        & \multicolumn{1}{l}{2 layer} & \multicolumn{1}{l}{8 layer} & \multicolumn{1}{l}{16 layer} \\ \hline
\multirow{3}{*}{Bipartite GCN}  & \multicolumn{1}{c|}{Conventional}      & \textbf{78.9}	&82.7	&84.1                    \\
& \multicolumn{1}{c|}{Xavier}      & 78.5	&81.0	&82.2                    \\
      & \multicolumn{1}{c|}{SPoGInit}        &78.7	&\textbf{83.0}	&\cellcolor{orange!25}  \textbf{85.2}          \\ \hline
\multirow{3}{*}{Bipartite GCN with skip connection} &\multicolumn{1}{c|}{Conventional}   & \textbf{79.4}	&\textbf{85.4}	&85.5                 \\
&\multicolumn{1}{c|}{Xavier}   & \textbf{79.4}	&82.2	&48.6                 \\
           & \multicolumn{1}{c|}{SPoGInit}    & \textbf{79.4}	&84.7	&\cellcolor{orange!25}  \textbf{87.5}           \\ \hline \hline
\end{tabular}
\end{table}

Further, we adopt skip connection in bipartite GCNs and train the models with various depths using the Adam optimizer at a learning rate of 1e-3. We see that incorporating skip connections boosts the performance of Bipartite GCNs. This enhancement is likely attributed to the reinforcement of the initial embeddings associated with the ILP problem. %
However, Xavier initialization faces training failure issues and fails to deliver substantial benefits during the deepening process, and Conventional initialization shows minimal gains when the network depth is increased from 8 to 16 layers. Conversely, SPoGInit consistently provides ongoing benefits as the network depths increase. With SPoGInit, the 16-layer bipartite GCN with skip connections achieves optimal performance across all initializations, models, and depths.

\section{Related works}
\label{related_section}
\paragraph{Over-smoothing in GCNs.} The concept of over-smoothing is first introduced in \cite{li2018deeper} to explain the performance degradation in deeper GCNs. This issue is later explored through both theoretical and empirical studies \citep{oono2019graph, cai2020note, yang2020revisiting, maskey2024fractional, yang2024bridging, chen2020measuring, nguyen2023revisiting, rusch2023gradient, roth2024rank, roth2024simplifying, luan2020training, cong2021provable, zhang2022model}.
While the smoothing effect of graph convolution may benefit shallow GCNs \citep{keriven2022not, wu2022non}, it adversely affect the performance of deep GCNs. 

To alleviate over-smoothing, various techniques have been proposed \citep{chen2022bag, wang2021bag}, including node or edge dropping \citep{srivastava2014dropout, zou2019layer, rong2020dropedge, huang2020tackling, lu2021skipnode, fang2023dropmessage, han2023structure, finkelshtein2023cooperative}, normalization methods \citep{ioffe2015batch, zhao2019pairnorm, zhou2020towards, yang2020revisiting, zhou2021understanding, li2020deepergcn, guo2023contranorm}, and regularization strategies \citep{chen2020measuring, yang2020revisiting, zhou2021dirichlet}. In addition to these techniques, substantial efforts have been dedicated to modifying GCN architectures, such as incorporating residual connections \citep{welling2016semi, jaiswal2022old, chen2022universal, scholkemper2024residual}, jumping connections \citep{xu2018representation, liu2020towards, zhu2020beyond}, and other architectural modifications \citep{bose2023can, di2022graph, chien2021adaptive, gasteiger2019predict, luan2019break, chen2020simple, li2019deepgcns, yan2021two, guo2022orthogonal, min2020scattering, chen2022dirichlet, jin2022towards, zheng2021tackling, yang2023tackling, li2021gnn1000, zhang2020revisiting, feng2022grato, dong2021adagnn, wu2024demystifying, choi2024better, zhang2022graph, kelesis2023reducing}. 
In recent years, a line of work has proposed using negative sampling, which incorporates non-neighboring nodes into the aggregation process, to alleviate over-smoothing in GNNs. \citet{duan2022learning} introduce determinantal point process (DPP)–based sampling to encourage diversity among selected negative samples. To improve scalability, \citet{duan2023graph} construct candidate sets using shortest-path heuristics, significantly reducing computational cost. Building on this, \citet{duan2024layer} propose layer-diverse negative sampling, which dynamically adjusts the sampling space across layers to reduce redundancy and improve the expressiveness of deep GNNs. These methods essentially mitigate over-smoothing by modifying the propagation mechanism of GCNs.

Different from these works, our paper investigates the impact of weight initialization to tackle over-smoothing (as well as gradient pathology) in GCNs, without modifying the network architecture or the message passing mechanism. While a few recent works \citep{guo2022orthogonal, jaiswal2022old, li2023initialization} have explored initialization to improve the training of GNN, they do not explicitly treat over-smoothing as one of their primary concerns. Although \citet{han2023mlpinit} applies analog MLP initialization to GNNs, it does not specifically address the performance degradation of deep GNNs.

\paragraph{Signal propagation.} Classical signal propagation theory has built up a foundation for understanding how information flows through deep neural networks (DNNs) and guides the random weight initialization. At first, \citet{glorot2010understanding, he2015delving} study the forward-backward propagation in linear or ReLU-activated models.
Then, the mean-field theory \citep{neal1996bayesian, lee2018deep, matthews2018gaussian} is incorporated to study the signal propagation
in models with general non-linear activation. Theoretical analysis on fully-connected neural networks (FCNNs) includes the study of Edge-of-Chaos (EOCs) \citep{poole2016exponential, schoenholz2017deep, hayou2019impact, hayou2022curse} and dynamical isometry \citep{saxe2014exact, pennington2017resurrecting, pennington2018emergence}. Other works study the signal propagation in deep CNN \citep{xiao2018dynamical}, RNN \citep{chen2018dynamical}, ResNet \citep{yang2017mean, hayou2022curse}, autoencoder \citep{li2019random}, and LSTM/GRU \citep{gilboa2019dynamical}.

In the realm of GCNs, several recent works have proposed weight initialization techniques to stabilize signal propagation. For the forward pass, \citet{jaiswal2022old} focus on Topology-Aware Isometry, which differs from our focus on stabilizing the output-input norm ratio. For the backward pass, they rely on gradient-guided dynamic rewiring, modifying the model architecture itself. In contrast, we control backward signal propagation through the BSP metric. Moreover, their method design does not address over-smoothing, while we explicitly incorporate the graph embedding variation (GEV) metric to do so. \citet{kelesis2024reducing} propose G-Init, which extends Kaiming initialization to the graph domain by incorporating graph topology-aware scaling. We include in Appendix \ref{subapp:G-Init} a comparison experiment between G-Init and SPoGInit on vanilla GCN. The results show that while both G-Init and SPoGInit alleviate performance degradation in deep GCNs, SPoGInit provides more consistent results and smaller accuracy drops across datasets. In addition, \citet{guo2022orthogonal, li2023initialization} also build their approaches around forward and backward signal propagation, but do not investigate how initialization affects GEV to mitigate over-smoothing in deep GCNs. Also, we note that \citet{han2023mlpinit} introduce MLPInit, which is not aimed at improving signal propagation or reducing over-smoothing. Instead, it seeks to accelerate training by initializing GNNs with the weights of a fully trained, equivalent MLP. This is a fundamentally different motivation from ours.

\paragraph{Weight initialization search.} In traditional deep learning, some works have explored weight initialization search to improve training stability \citep{dauphin2019metainit, zhu2021gradinit}.
However, their objectives differ from ours. \citet{dauphin2019metainit} aims to minimize the curvature effects around the initial parameters by reducing the gradient quotient, which reflects local curvature sensitivity. \citet{zhu2021gradinit} aims at finding an initialization that minimizes the loss after a single training step.
Our proposed SPoGInit, however, primarily targets mitigating signal propagation instability in the initialization of deep GCNs, by addressing both forward and backward SP as well as the over-smoothing issue.

\paragraph{Infinite-width-limit regime.}
Our analysis relies on an infinite-width assumption—a common approach in traditional theoretical analyses of neural networks (see, e.g., \citet{lee2018deep, sohl2020infinite, yang2024bridging}). This assumption offers the primary benefit of ensuring that the feature embeddings at each layer follow a Gaussian distribution under random initialization, thereby simplifying the theoretical treatment. Such a Gaussian property is not strictly guaranteed for practical, finite-width networks.

Nonetheless, we empirically demonstrate that the embeddings of a finite-width network are approximately Gaussian, implying that the infinite-width approximation remains reasonable in practice. Specifically, we randomly selected three node embeddings from the final convolutional layer of a 4-layer, 64-width GCN network trained on the OGBN-Arxiv dataset. We then applied the Kolmogorov-Smirnov hypothesis test to evaluate their distribution. In this test, a p-value below 0.05 would indicate a significant deviation from a Gaussian distribution. The observed p-values—0.75, 0.90, and 0.73—suggest that the embeddings closely follow a Gaussian distribution. Thus, despite the theoretical limitation, our empirical findings support the validity of using the infinite-width approximation.

\paragraph{Relationship with GNTK. } 
NNGP (Neural Network Gaussian Process) for GNNs and GNTK (Graph Neural Tangent Kernel) are two theoretical tools used to analyze the behavior of graph neural networks. While they share the common goal of characterizing GNNs in the infinite-width limit, they differ in focus: NNGP primarily describes the properties of feature embeddings at initialization, whereas GNTK captures the training dynamics of GNNs, particularly how the model converges under gradient descent. Some existing works study graph neural tangent kernel (GNTK) \citep{bayer2022label, du2019graph, huang2021towards, jiang2022fast, sabanayagam2021new, sabanayagam2022representation, zhou2022explainability, gebhart2022graph, krishnagopal2023graph, yang2023graph}. They analyze the training dynamics of GCNs under the infinite-width limit.

\section{Limitations and future works}

SPoGInit has been specifically designed and tested for node classification tasks within GNNs. While the method has shown promising results in this context, its applicability to other tasks, such as edge prediction or graph classification, remains unexplored. Adapting SPoGInit for a wider range of graph-based learning tasks is as an important direction for future work.

Although SPoGInit is designed as a general method for GNNs, our current implementation is not entirely plug-and-play for all GNN architectures. The integration of SPoGInit into novel or non-standard GCN variants (e.g., GCNII \citep{chen2020simple}, GraphTransformer \citep{shehzad2024graph}) will require additional engineering efforts. Future work could focus on creating more robust and flexible integration methods for a broader range of GNN models, enhancing SPoGInit’s adaptability.

As highlighted in our analysis, SPoGInit introduces an approximate 10\%-20\% computational overhead. While we believe this overhead is acceptable in many practical scenarios, it could become a limiting factor in situations where computational resources are severely constrained. Future work could explore methods to reduce the computational cost of SPoGInit, either through more efficient optimization techniques or by leveraging hardware acceleration.

\section{Conclusion}
\label{conclusion_section}

We attempt to address the performance degradation of deep GCNs from the lens of signal propagation. We consider three metrics: forward propagation, backward propagation, and graph embedding variation propagation. Our theoretical analysis and empirical studies revealed that widely used initialization methods in GCNs fail to control these metrics simultaneously, resulting in undesirable performance degradation as depth increases. Motivated by our SP framework, a new initialization method, termed SPoGInit, is proposed. The experiment results demonstrate that SPoGInit enhances the signal propagation of various deep GCN architectures. Moreover, SPoGInit significantly mitigates performance degradation or enables performance enhancement as depths increase, especially in graph-based tasks involving long-range relationships.  Both our theoretical and empirical findings underscore the importance of stabilizing these three SP metrics for boosting the performance of deep GCNs.

\section*{Acknowledgement}
The authors would like to express our sincere gratitude to Kenta Oono and the anonymous reviewers for their insightful feedback during the discussion phase. The authors also thank Bingheng Li and Haitao Mao for their helpful suggestions during the early stages of revision. This paper is supported in part by the National Key Research and Development Project under grant 2022YFA1003900; Hetao Shenzhen-Hong Kong Science and Technology Innovation Cooperation Zone Project (No.HZQSWS-KCCYB-2024016); University Development Fund UDF01001491, the Chinese University of Hong Kong, Shenzhen; Guangdong Provincial Key Laboratory of Mathematical Foundations for Artificial Intelligence (2023B1212010001); the Guangdong Major Project of Basic and Applied Basic Research (2023B0303000001).

\bibliographystyle{tmlr}
\bibliography{main.bib}

\newpage

\renewcommand \thepart{} %
\renewcommand \partname{}
\part{Appendix} %
\appendix

\section{Supplemental notation}
\label{sub:supplemental_notation_appendix}

For any integer $n \in \NN$, we define $[n] \triangleq \{1,2,\dots,n\}$. We may denote a matrix $X \in \RR^{m \times n}$ by $(x_{ij})_{i\in [m], j\in [n]}$, where $x_{ij}$ is the entry in the $i$-the row and the $j$-th column. We further use $X_{i,:} \in \RR^{1 \times n}$ and $X_{:,j} \in \RR^{m \times 1}$ to denote the $i$-th row and the $j$-th column of $X$, respectively. $\|\cdot\|_{\rm F}$ denotes the Frobenius norm. Given any function $f: \RR^{m\times n} \rightarrow \RR$, its derivative $\partial f / \partial X$ with respect to $X \in \RR^{m\times n}$ is the $m \times n$ matrix with $(\partial f / \partial X)_{ij} = \partial f(X) / \partial x_{ij}$.
For any activation function $\sigma: \RR \rightarrow \RR$, we use $\sigma(X) \in \RR^{m\times n}$ to denote the output of applying $\sigma$ entry-wise to the matrix $X$, i.e., $(\sigma(X))_{ij} = \sigma(x_{ij})$. We denote ReLU activation by ${\rm ReLU}(x) \triangleq \max(0,x)$ and tanh activation by ${\rm tanh}(x) \triangleq (e^x - e^{-x}) / (e^x + e^{-x})$. For brevity, we use $\theta$ to denote the collection of all trainable parameters in a GCN model.

For any matrix $X = (x_{ij}) \in \RR^{m \times n}$, the vectorizaion of $X$ is defined by
\begin{equation*}
    \vecz (X) := (x_{11}, \dots, x_{m1}, x_{12}, \dots, x_{m2}, \dots, x_{1n}, \dots, x_{mn})^T \in \RR^{mn \times 1}.
\end{equation*}
For any matrix $X = (x_{ij}) \in \RR^{m \times n}$ and $Y = (y_{ij}) \in \RR^{p \times q}$, the Kronecker product of $X$ and $Y$ is a $mp \times nq$ block matrix defined by
\begin{equation*}
    X \otimes Y := \begin{pmatrix}
        x_{11} Y & \dots & x_{1n} Y \\
        \vdots & \ddots & \vdots \\
        x_{m1} Y & \dots & x_{mn} Y
    \end{pmatrix}.
\end{equation*}
For a matrix $X = (x_{ij}) \in \RR^{m \times n}$, if $x_{ij} = 0$ for all $i \in [m]$ and $j \in [n]$, we denote $X = \mathbf{0}_{m \times n}$; if $x_{ij} = 1$ for all $i \in [m]$ and $j \in [n]$, we denote $X = \mathbf{1}_{m \times n}$. For a vector $Z = (z_i) \in \RR^{n}$, if $z_i = 0$ for all $i \in [n]$, we denote $Z = \mathbf{0}_n$; if $z_i = 1$ for all $i \in [n]$, we denote $Z = \mathbf{1}_n$.

\clearpage

\section{Convolutional kernel}
\label{conv_kernel_appendix_section}

Suppose that graph $\cG$ has $M$ connected components. The $m$-th component is a subgraph denoted by $\cG_m = (\cV_m, \cE_m)$ for $m \in [M]$. We present a well-known result characterizing the eigenvalues and the eigenvectors of $\hat{A}$ without giving proof, see, e.g., Proposition 1 in \citet{oono2019graph}.
\begin{proposition}\label{eigenval_hatA_proposition}
Suppose that $\cG = (\cV, \cE)$ has $M$ connected components $\{\cG_m = (\cV_m, \cE_m) \}_{m=1}^M$ and the eigenvalues of $\hat{A}$ are $\lambda_1 \geq \lambda_2 \geq \dots \geq \lambda_n$. Then we have
\begin{itemize}
    \item $\lambda_i = 1$, for any $1 \leq i \leq M$.
    \item $\lambda_i \in (-1,1)$,  for any $M + 1 \leq i \leq n$.
\end{itemize}
Moreover, the set $\{ v^{(m)} = \tilde{D}^{\frac{1}{2}} u^{(m)} : m \in [M] \}$ is a basis of the $m$-dimensional eigenspace of $\hat{A}$ corresponding to the eigenvalue $1$, where $u^{(m)} = (\mathds{1}_{\{i \in \cV_m\}})_{i \in [n]} \in \RR^{n \times 1}$ is the indicator vector of the $m$-th connected component $\cG_m$.
\end{proposition}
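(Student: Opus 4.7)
The plan is to derive the spectrum of $\hat{A}$ by linking it to the kernel of a Laplacian on the self-looped graph. First I would observe that $\hat{A}$ is real symmetric, hence diagonalizable with a real spectrum, and rewrite
\[
I - \hat{A} \;=\; \tilde{D}^{-1/2}(\tilde{D}-\tilde{A})\tilde{D}^{-1/2} \;=\; \tilde{D}^{-1/2} L \, \tilde{D}^{-1/2},
\]
where $L = D - A$ is the usual unnormalized Laplacian of $\cG$ (the identity $\tilde{D}-\tilde{A}=L$ is immediate from $\tilde{D}=D+I$ and $\tilde{A}=A+I$). Since $L \succeq 0$, this representation gives the upper bound $\lambda_i \le 1$ for all $i$ and reduces computing the eigenspace at $\lambda = 1$ to computing $\ker L$.

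Next I would identify the eigenspace for $\lambda = 1$. The equation $\hat{A}v = v$ is equivalent to $L w = 0$ for $w := \tilde{D}^{-1/2} v$. By the standard connected-components characterization of $\ker L$, the kernel is spanned by the indicator vectors $\{u^{(m)}\}_{m=1}^M$, so the eigenspace $\ker(\hat{A}-I)$ is exactly spanned by $\{\tilde{D}^{1/2} u^{(m)}\} = \{v^{(m)}\}_{m=1}^M$. These $M$ vectors are linearly independent since their supports $\cV_m$ are disjoint and $\tilde{D}^{1/2}$ is positive definite, which simultaneously establishes the basis claim and forces $\lambda_1 = \cdots = \lambda_M = 1$ and $\lambda_{M+1} < 1$.

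The main obstacle, and the step where the self-loops are essential, is the \emph{strict} lower bound $\lambda_n > -1$. My plan is an argument symmetric to the one above, but using the signless Laplacian: if $\hat{A} v = -v$ with $v \neq 0$, then $w = \tilde{D}^{-1/2} v$ satisfies $(\tilde{A}+\tilde{D})w = 0$, i.e., $\bigl((A+D) + 2I\bigr) w = 0$. Since $A + D$ is PSD (being the signless Laplacian of $\cG$), the operator $(A+D)+2I$ is at least $2I$ and hence strictly positive definite, forcing $w = 0$ and contradicting $v \neq 0$. This is exactly where adding a self-loop shifts the signless Laplacian by $+2I$ and rules out the bipartite eigenvector at $-1$ that would otherwise be possible for the ordinary normalized adjacency matrix. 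Combined with the eigenspace analysis at $\lambda = 1$, this yields $\lambda_i \in (-1, 1)$ for $M+1 \le i \le n$ and completes the proof.
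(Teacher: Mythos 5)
The paper itself does not prove this proposition: it is stated ``without giving proof'' with a citation to Proposition~1 of \citet{oono2019graph}, so there is no in-paper argument to compare against. Your proof is correct and self-contained. The congruence $I-\hat A=\tilde D^{-1/2}L\tilde D^{-1/2}$ correctly reduces the $\lambda=1$ eigenspace to $\ker L$ and yields $\lambda_i\le 1$; the kernel of the unnormalized Laplacian is indeed spanned by the component indicators, and pushing forward by the invertible $\tilde D^{1/2}$ gives exactly the stated basis and the count $\lambda_1=\cdots=\lambda_M=1>\lambda_{M+1}$. The companion congruence $I+\hat A=\tilde D^{-1/2}\bigl((D+A)+2I\bigr)\tilde D^{-1/2}$ is strictly positive definite because the signless Laplacian $D+A$ is PSD, which cleanly isolates the role of the self-loop as a $+2I$ shift and rules out $\lambda=-1$. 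This gives the strict containment $\lambda_i\in(-1,1)$ for $i>M$ as required.
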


\vspace{3mm}

\begin{lemma}\label{eigenval_hatL_lem}
Given any $H \in \RR^{n \times C}$ and $H \neq \mathbf{0}_{n \times C}$, we have $0 \leq \dir(H) / \|H\|^2_{\rm F} \leq 2$.
\end{lemma}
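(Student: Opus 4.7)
The lower bound $\dir(H)\ge 0$ is immediate from non-negativity of squared norms, so the substance is in showing $\dir(H)\le 2\|H\|_{\rm F}^2$. My plan is to rewrite $\dir(H)$ as a quadratic form in the normalized Laplacian $\hat L := I-\hat A$ associated with the self-looped graph, and then invoke Proposition~\ref{eigenval_hatA_proposition}, which tells us that all eigenvalues of $\hat A$ lie in $[-1,1]$, hence all eigenvalues of $\hat L$ lie in $[0,2]$. The desired bound will then follow from the standard Rayleigh-type inequality $\tr(H^\top \hat L H)\le \lambda_{\max}(\hat L)\,\|H\|_{\rm F}^2$, applied column-wise on $H$.

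Concretely, first I would set $\tilde h_i := h_i/\sqrt{1+d_i}$, i.e.\ $\tilde H:=\tilde D^{-1/2}H$, so that
$$\dir(H)=\sum_{(i,j)\in\cE}\|\tilde h_i-\tilde h_j\|^2.$$
Expanding the squared norms and using the symmetry of $A$ together with $\sum_j A_{ij}=d_i$, a routine computation (treating $\cE$ as the set of undirected edges, each counted once, consistent with $A$ symmetric) yields the usual Laplacian identity
$$\dir(H)=\tr\!\bigl(\tilde H^\top (D-A)\,\tilde H\bigr)=\tr\!\bigl(H^\top\,\tilde D^{-1/2}(D-A)\tilde D^{-1/2}\,H\bigr).$$

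Second, I would show the key algebraic identity $\tilde D^{-1/2}(D-A)\tilde D^{-1/2}=I-\hat A$. Using $\tilde A=A+I$ and $\tilde D=D+I$, we have
$$\hat A=\tilde D^{-1/2}(A+I)\tilde D^{-1/2}=\tilde D^{-1/2}A\tilde D^{-1/2}+\tilde D^{-1},$$
so $I-\hat A=(I-\tilde D^{-1})-\tilde D^{-1/2}A\tilde D^{-1/2}$. Since $I-\tilde D^{-1}$ and $\tilde D^{-1/2}D\tilde D^{-1/2}$ are both the diagonal matrix with entries $d_i/(1+d_i)$, they coincide, giving the claimed identity. Hence $\dir(H)=\tr(H^\top \hat L H)$.

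Third, I would conclude: by Proposition~\ref{eigenval_hatA_proposition}, $\lambda(\hat A)\subseteq[-1,1]$, so $\lambda(\hat L)\subseteq[0,2]$. Writing $H=(H_{:,1},\dots,H_{:,C})$ and applying the Rayleigh bound column-wise,
$$\dir(H)=\sum_{k=1}^{C} H_{:,k}^{\top}\hat L\,H_{:,k}\;\le\;2\sum_{k=1}^{C}\|H_{:,k}\|^2\;=\;2\|H\|_{\rm F}^2.$$
The only real obstacle is the careful bookkeeping in Step~2 (the combinatorial factor from the edge sum convention and the non-obvious cancellation of the self-loop corrections so that $\tilde D^{-1/2}(D-A)\tilde D^{-1/2}$ collapses exactly to $I-\hat A$); once that identity is in hand, the spectral bound from Proposition~\ref{eigenval_hatA_proposition} closes the proof.
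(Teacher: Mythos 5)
Your proof is correct and follows essentially the same route as the paper's: express $\dir(H)$ as the quadratic form $\tr(H^\top\hat L H)$ and invoke Proposition~\ref{eigenval_hatA_proposition} to bound the spectrum of $\hat L=I-\hat A$ in $[0,2)$, giving the Rayleigh-type bound column-wise. The one difference is that the paper simply asserts the identity $\dir(H)=\tr(H^\top\hat L H)$ whereas you take the extra (and worthwhile) step of deriving it, including the self-loop cancellation $\tilde D^{-1/2}(D-A)\tilde D^{-1/2}=I-\hat A$; you also obtain the lower bound directly from non-negativity of the summands in the definition of $\dir$, which is slightly more elementary than the paper's appeal to $\lambda_{\min}(\hat L)\ge 0$. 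Both of your additions are consistent with the conventions the paper implicitly uses (in particular, $\cE$ counting each undirected edge once), so the argument closes correctly.
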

\begin{proof}
Recall that $\hat{L} = I - \hat{A}$ is the normalized Laplacian of graph $\cG$. By Proposition \ref{eigenval_hatA_proposition}, all the eigenvalues of $\hat{L}$ belong to $[0,2)$.

Given any $H \in \RR^{n \times C}$, we have
\begin{equation*}
    \dir(H) = \tr(H^T \hat{L} H) = \sum_{k=1}^C  H^{T}_{:,k} \hat{L} H_{:,k} \leq \sum_{k=1}^C 2 \cdot H^{T}_{:,k} H_{:,k} = 2 \|H\|^2_{\rm F}.
\end{equation*}
Similarly, we have
\begin{equation*}
    \dir(H) = \tr(H^T \hat{L} H) = \sum_{k=1}^C  H^{T}_{:,k} \hat{L} H_{:,k} \geq \sum_{k=1}^C 0 \cdot H^{T}_{:,k} H_{:,k} = 0.
\end{equation*}
Therefore, we conclude that
\begin{equation*}
    0 \leq \dir(H) / \|H\|^2_{\rm F} \leq 2.
\end{equation*}
\end{proof}

\clearpage

\section{Signal propagation theory for vanilla GCN}
\label{spt_vanilla_appendix_section}

\subsection{NNGP correspondence for vanilla GCN}
\label{NNGP_appendix_subsection}

\begin{proposition}[NNGP correspondence for vanilla GCN]
\label{NNGP_proposition}
As the network widths $d_1, d_2, \dots, d_{L-1}$ sequentially go to infinity, the $l$-th layer's pre-activation embedding channels $\{H^{(l)}_{:,k}\}_{k \in [d_l]}$ converge to i.i.d. $n$-dimensional Gaussian random variables $N(\mathbf{0}_n, \Sigma^{(l)})$ in distribution for any $l \geq 2$. The covariance matrices are
\begin{equation}\label{NNGP_equ}
\begin{aligned}
    \Sigma^{(1)} &= \frac{\sigma_w^2}{d_0} \hat{A} X X^{T} \hat{A}, \\
    \Sigma^{(l+1)} &= \sigma_w^2\hat{A} G(\Sigma^{(l)}) \hat{A},
\end{aligned}
\end{equation}
where $G(\Sigma) = \EE_{h \sim N(\mathbf{0}_n, \Sigma)} [\sigma(h) \sigma(h)^{T}]$ for any $n \times n$ positive semi-definite matrix $\Sigma$.
\end{proposition}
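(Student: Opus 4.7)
The plan is to proceed by induction on the layer index $l$, mirroring the classical NNGP argument (e.g.\ Lee et al., Matthews et al.) but accommodating the extra $n\times n$ graph operator $\hat A$ that couples nodes within each column. The key observation is that, conditioned on the previous post-activation matrix $X^{(l-1)}$, the pre-activation columns
\#
H^{(l)}_{:,k} \;=\; \hat A \, X^{(l-1)} W^{(l)}_{:,k} + \mathbf{1}_n b^{(l)}_k
\#
are, across $k\in[d_l]$, independent copies of the same $n$-dimensional random vector, because the columns of $W^{(l)}$ and the entries of $b^{(l)}$ (which are zero at initialization) are independent. Thus, the entire argument reduces to showing (i) that the conditional mean is zero and the conditional covariance concentrates, and (ii) that a multivariate CLT applies in the limit $d_{l-1}\to\infty$.

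For the base case $l=1$, one has $H^{(1)}=\hat A X W^{(1)}$ with $\mathbb{E}[W^{(1)}_{k'k}]=0$ and $\mathrm{Var}[W^{(1)}_{k'k}]=\sigma_w^2/d_0$. Each column is a sum of $d_0$ independent summands, so a direct computation gives $\mathbb{E}[H^{(1)}_{:,k}]=\mathbf 0_n$ and
\$
\mathrm{Cov}(H^{(1)}_{:,k}) \;=\; \hat A X \cdot \tfrac{\sigma_w^2}{d_0} I_{d_0} \cdot X^{T}\hat A \;=\; \tfrac{\sigma_w^2}{d_0}\,\hat A X X^{T}\hat A \;=\; \Sigma^{(1)},
\$
and columns are i.i.d.\ because columns of $W^{(1)}$ are. (Since $d_0$ is not taken to infinity, the Gaussianity at $l=1$ is either exact under Gaussian weights, or serves as the formal starting point of the asymptotic induction, matching the convention used in the NNGP literature.)

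For the inductive step, assume that at layer $l-1$ the columns $\{H^{(l-1)}_{:,k}\}_{k\in[d_{l-1}]}$ are i.i.d.\ $N(\mathbf 0_n,\Sigma^{(l-1)})$ in the sequential infinite-width limit. Applying the activation $\sigma$ entrywise preserves i.i.d.\ structure across $k$, so the columns $\{X^{(l-1)}_{:,k}\}_{k\in[d_{l-1}]}$ are i.i.d.\ with mean zero and covariance $G(\Sigma^{(l-1)})=\mathbb{E}_{h\sim N(\mathbf 0_n,\Sigma^{(l-1)})}[\sigma(h)\sigma(h)^{T}]$. Consequently, by the strong law of large numbers applied columnwise,
\#
\tfrac{1}{d_{l-1}}\, X^{(l-1)} (X^{(l-1)})^{T} \;=\; \tfrac{1}{d_{l-1}}\sum_{k=1}^{d_{l-1}} X^{(l-1)}_{:,k}(X^{(l-1)}_{:,k})^{T} \;\xrightarrow{\;\mathrm{a.s.}\;}\; G(\Sigma^{(l-1)}).
\#
Conditioning on $X^{(l-1)}$, the vector $H^{(l)}_{:,k}=\hat A X^{(l-1)} W^{(l)}_{:,k}$ has mean zero and conditional covariance $\tfrac{\sigma_w^2}{d_{l-1}}\,\hat A X^{(l-1)}(X^{(l-1)})^{T}\hat A$, which converges to $\sigma_w^2\,\hat A G(\Sigma^{(l-1)})\hat A=\Sigma^{(l)}$. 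The multivariate Lindeberg CLT then yields $H^{(l)}_{:,k}\Rightarrow N(\mathbf 0_n,\Sigma^{(l)})$ as $d_{l-1}\to\infty$, and i.i.d.\ across $k$ by independence of the columns of $W^{(l)}$.

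The main technical obstacle, as usual in rigorous NNGP arguments, is the delicate handling of the nested limits: one must control the joint distribution of the columns of $H^{(l)}$ (not merely the marginal of a single column) while passing sequentially to the infinite-width limit, and one must check that the Lindeberg condition for the CLT continues to hold along the inductive chain despite the extra multiplicative factor $\hat A$ on both sides. This is the step where a careful exchange of limits (e.g.\ using characteristic functions or a tightness plus finite-dimensional convergence argument) is required. All other steps—the covariance recursion, the independence of columns, and the LLN concentration of the empirical Gram matrix $X^{(l-1)}(X^{(l-1)})^{T}/d_{l-1}$—are direct consequences of the i.i.d.\ structure of the weights and the entrywise action of $\sigma$.
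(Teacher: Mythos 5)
Your argument is correct and reaches the same recursion as the paper, but it takes a genuinely different route in the inductive step. The paper vectorizes the entire pre-activation matrix, writes $\vecz(H^{(l+1)}) = \frac{1}{\sqrt{d_l}}\sum_{k=1}^{d_l} Z^{(l+1)}_k$ with $Z^{(l+1)}_k = \vecz\bigl([\hat A X^{(l)}_{:,k}]\,\omega^{(l+1)}_{k,:}\bigr)$, shows these $nd_{l+1}$-dimensional summands are asymptotically i.i.d.\ (by the induction hypothesis on $X^{(l)}_{:,k}$ and independence of $W^{(l+1)}$), computes their unconditional mean and covariance $I_{d_{l+1}}\otimes \Sigma^{(l+1)}$, and invokes the multivariate CLT directly. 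You instead condition on $X^{(l-1)}$, observe that the columns of $H^{(l)}$ are then exactly i.i.d.\ with conditional covariance $\tfrac{\sigma_w^2}{d_{l-1}}\hat A X^{(l-1)}(X^{(l-1)})^{T}\hat A$, apply the strong law of large numbers to concentrate the empirical Gram matrix, and finish with a conditional Lindeberg CLT. The paper's vectorization buys a self-contained handling of the joint distribution across all output channels $k$ in one CLT (no separate de-conditioning step or LLN needed, since the unconditional second moments are computed directly), whereas your conditioning-plus-LLN route is arguably more transparent about where the two sources of randomness (weights and features) enter and more closely matches the ``empirical kernel concentrates'' intuition common in NNGP expositions; however, it leaves the passage from conditional to unconditional convergence (dominated convergence on characteristic functions, or a tightness argument) and the joint, across-$k$ statement as the implicit obstacles you flag at the end, whereas the paper resolves these by the vectorization itself. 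One small slip worth correcting: the columns $X^{(l-1)}_{:,k}=\sigma(H^{(l-1)}_{:,k})$ do \emph{not} have mean zero for ReLU-like activations, and $G(\Sigma^{(l-1)})=\EE[\sigma(h)\sigma(h)^T]$ is the (uncentered) second-moment matrix rather than a covariance; this is harmless here because the covariance of $H^{(l)}_{:,k}$ only involves the second moment of $X^{(l-1)}$ (the weights have zero mean), but the wording should be fixed.
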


\begin{proof}[Proof of Proposition \ref{NNGP_proposition}]

We will prove that $\{H^{(l)}_{:,k}\}_{k \in [d_l]}$ are asymptotically i.i.d. $n$-dimensional random variables with mean $\mathbf{0}_n$ and covariance matrix $\Sigma^{(l)}$ for any $l \geq 1$ under the infinite width limit by mathematical induction. Proposition \ref{NNGP_proposition}, which contains a stronger claim that $\{H^{(l)}_{:,k}\}$ are asymptotically Gaussian for any $l \geq 2$, will be shown during the induction steps.

\textbf{Base case.} 
Since the bias terms are initialized to be zero, when $l=1$, the $k$-th channel of the embedding is
\begin{equation}\label{layer1_kthchannel_equ}
    H^{(1)}_{:,k} = \hat{A} X W^{(1)}_{:,k} + \mathbf{1}_n \cdot b^{(1)}_k = \hat{A} X W^{(1)}_{:,k}.
\end{equation}
Since $\{W^{(1)}_{:,k}\}_{k \in [d_1]}$ are i.i.d. random variables, so $\{H^{(1)}_{:,k}\}_{k \in [d_1]}$ are also i.i.d. random variables. Taking the expectation of (\ref{layer1_kthchannel_equ}), we get
\begin{equation*}
\begin{aligned}
    \EE [H^{(1)}_{:,k}] = \hat{A} X \cdot \EE [W^{(1)}_{:,k}] = \mathbf{0}_n.
\end{aligned}
\end{equation*}
Calculating the covariance matrix of $(\ref{layer1_kthchannel_equ})$, we have
\begin{equation*}
\begin{aligned}
    \Cov [H^{(1)}_{:,k}, H^{(1)}_{:,k}] &= \EE[H^{(1)}_{:,k} \cdot H^{(1)T}_{:,k}]  = \EE [\hat{A} X W^{(1)}_{:,k} W^{(1)T}_{:,k} X^{T} \hat{A}] \\
    &= \hat{A} X \cdot \EE [W^{(1)}_{:,k} W^{(1)T}_{:,k}] \cdot X^{T} \hat{A} = \hat{A} X \cdot \rbr{ \frac{\sigma_w^2}{d_0} \cdot I_{d_0} } \cdot  X^{T} \hat{A} \\
    &= \frac{\sigma_w^2}{d_0} \hat{A} X X^{T} \hat{A}.
\end{aligned}
\end{equation*}
Thus, if we define $\Sigma^{(1)} = \sigma_w^2 \hat{A} X X^{T} \hat{A} / d_0$, then $\{H^{(1)}_{:,k}\}_{k \in [d_1]}$ are exactly i.i.d with mean $\mathbf{0}_n$ and covariance matrix $\Sigma^{(1)}$.

\textbf{Induction step.} Suppose that $\{H^{(l)}_{:,k}\}_{k \in [d_l]}$ converge to i.i.d. $n$-dimensional random variables with mean $\mathbf{0}_n$ and covariance matrix $\Sigma^{(l)}$ in distribution as $d_1, \dots, d_{l-1}$ sequentially go to infinity, we look at the $(l+1)$-th layer. Recall from the formation of the $l$-th layer in vanilla GCN, we have
\begin{equation*}
\begin{aligned}
    H^{(l+1)} &= \hat{A} X^{(l)} W^{(l+1)} + \mathbf{1}_n \cdot b^{(l+1)}, \\
    X^{(l)} &= \sigma (H^{(l)}),
\end{aligned}
\end{equation*}
for any $l \geq 1$. We vectorize the first equation and get
\begin{equation}\label{vecz_layerladd1_equ}
\begin{aligned}
    \vecz(H^{(l+1)}) &= \vecz(\hat{A} X^{(l)} W^{(l+1)}) + \vecz(\mathbf{1}_n \cdot b^{(l+1)}) \\
    &= \sum_{k=1}^{d_l} \vecz \rbr{ \underbrace{ [\hat{A} X^{(l)}_{:,k}] }_{n \times 1} \cdot \underbrace{ W^{(l+1)}_{k,:} }_{1 \times d_{l+1} } },
\end{aligned}
\end{equation}
because $b^{(l+1)}$ is initialized to be $\mathbf{0}_{d_{l+1}}$.
Suppose that $\Sigma^{(l+1)} = \sigma_w^2 \hat{A} G(\Sigma^{(l)}) \hat{A}$, we are going to show that $\vecz(H^{(l+1)})$ converges to a Gaussian random variable $N(\mathbf{0}_{n d_{l+1}}, I_{d_{l+1}} \otimes \Sigma^{(l+1)})$ in distribution as $d_1, d_2, \dots, d_{l-1}, d_l$ sequentially go to infinity. If this claim holds, $\{H^{(l+1)}_{:,k}\}$ are not only asymptotically i.i.d., but also asymptotically Gaussian i.i.d. with $N(\mathbf{0}_n, \Sigma^{(l+1)})$, which corresponds to the statement of this proposition.

For brevity, we define
\begin{equation*}
\begin{aligned}
    \omega^{(l+1)}_{kk'} := \sqrt{d_l} \cdot W^{(l+1)}_{kk'}, \quad \text{for all } k \in [d_l] \text{ and }  k' \in [d_{l+1}],
\end{aligned}
\end{equation*}
and
\begin{equation}\label{Z_layerladd1_def_equ}
\begin{aligned}
    Z^{(l+1)}_{k} := \vecz \rbr{[\hat{A} X^{(l)}_{:,{k}}] \cdot  \omega^{(l+1)}_{{k},:}}, \quad \text{for all } k \in [d_{l}].
\end{aligned}
\end{equation}
Then we get that $\{\omega^{(l+1)}_{kk'}\}_{k \in [d_l], k' \in [d_{l+1}]}$ are i.i.d. from $N(0,\sigma_w^2)$ and
\begin{equation}\label{rhs_vecz_layerladd1_equ}
    \text{RHS of } \eqref{vecz_layerladd1_equ} = \frac{1}{\sqrt{d_l}} \sum_{k=1}^{d_l} Z^{(l+1)}_k.
\end{equation}

By the induction hypothesis, as $d_1, d_2. \dots, d_{l-1}$ sequentially go to infinity, $\{X^{(l)}_{:,k}\}_{k \in [d_l]} = \{\sigma(H^{(l)}_{:,k})\}_{k \in [d_l]}$ converge to i.i.d. $n$-dimensional random vectors in distribution. Because $X^{(l)}$ can be regarded as a function of $\{W^{(l')}\}_{l'=1}^l$ at initialization, we get that $X^{(l)}$ and $W^{(l+1)}$ are independent. Thus, as $d_1, d_2. \dots, d_{l-1}$ sequentially go to infinity, $\{Z^{(l+1)}_k\}_{k \in [d_l]}$ converge to i.i.d. random vectors in distribution. Moreover, in this limiting case, by taking the expectation of (\ref{Z_layerladd1_def_equ}), we have
\begin{equation*}
    \EE [Z^{(l+1)}_{1}] = \vecz \rbr{ \sbr{ \hat{A} \EE[X^{(l)}_{:,k}] } \cdot  \EE[\omega^{(l+1)}_{k,:}] } = \vecz \rbr{ \mathbf{0}_{n \times 1} \cdot \mathbf{0}_{1 \times d_{l+1}} } = \mathbf{0}_{n d_{l+1}}.
\end{equation*}
Calculating the covariance matrix of (\ref{Z_layerladd1_def_equ}), we have
\begin{equation*}
\begin{aligned}
    \Cov [Z^{(l+1)}_{1}, Z^{(l+1)}_{1}] &= \EE[Z^{(l+1)}_{1} \cdot Z^{(l+1)T}_{1}] \\
    &= \EE \sbr{ \vecz \rbr{[\hat{A} X^{(l)}_{:,{1}}] \cdot  \omega^{(l+1)}_{{1},:}} \cdot \vecz \rbr{[\hat{A} X^{(l)}_{:,{1}}] \cdot  \omega^{(l+1)}_{{1},:}}^T } \\
    &= \EE \sbr{(\omega^{(l+1)T}_{{1},:} \otimes \hat{A} X^{(l)}_{:,1}) \cdot (\omega^{(l+1)}_{{1},:} \otimes X^{(l)T}_{:,1} \hat{A})} \\
    &= \EE \sbr{\omega^{(l+1)T}_{{1},:} \omega^{(l+1)}_{{1},:} \otimes \hat{A} X^{(l)}_{:,1} X^{(l)T}_{:,1} \hat{A} } \\
    &= \EE \sbr{\omega^{(l+1)T}_{{1},:} \omega^{(l+1)}_{{1},:}} \otimes \cbr{ \hat{A} \cdot \EE \sbr{X^{(l)}_{:,1} X^{(l)T}_{:,1}} \cdot \hat{A} } \\
    &= \sigma_w^2 I_{d_{l+1}} \otimes \hat{A} G(\Sigma^{(l)}) \hat{A} \\
    &= I_{d_{l+1}} \otimes \sigma_w^2 \hat{A} G(\Sigma^{(l)}) \hat{A} = I_{d_{l+1}} \otimes \Sigma^{(l+1)}.
\end{aligned}
\end{equation*}
Here $X^{(l)}_{:,1}$ actually stands for the limit of true $X^{(l)}_{:,1}$ as $d_1, \dots, d_{l-1}$ sequentially go to infinity without bringing any confusion.

By multivariate central limit theorem, $\frac{1}{\sqrt{d_l}} \sum_{k=1}^{d_l} Z^{(l+1)}_k$ converges to a Gaussian random variable $N(\mathbf{0}_{n d_{l+1}}, I_{d_{l+1}} \otimes \Sigma^{(l+1)})$ in distribution as $d_l \to \infty$. Recalling (\ref{vecz_layerladd1_equ}) and (\ref{rhs_vecz_layerladd1_equ}), we conclude that $\vecz(H^{(l+1)})$ converges to a Gaussian random variable $N(\mathbf{0}_{nd_{l+1}}, I_{d_{l+1}} \otimes \Sigma^{(l+1)})$ as $d_1, \dots, d_l$ sequentially go to infinity.

\textbf{Conclusion.} By the principle of mathematical induction, we have proven this proposition.
\end{proof}

\subsection{Some discussion w.r.t. $G$}

We claim that the function $G$ is well-defined in Proposition \ref{NNGP_proposition} on the collection of positive semi-definite matrices
\begin{equation}\label{extended_cC_def}
    \cS = \{\Sigma \in \RR^{n \times n} : x^T \Sigma x \geq 0 \text{ for all } x \in \RR^{n \times 1}\}.
\end{equation}
\begin{remark}
To show that $G(\Sigma) = \EE_{h \sim N(\mathbf{0}_n, \Sigma)} [\sigma(h) \sigma(h)^{T}]$ is well-defined at any $\Sigma \in \cS$, we only need to show that such $\Sigma$ is always a feasible covariance matrix of Gaussian distribution. For any $\Sigma \in \cS$, there exists $P \in \RR^{n \times n}$, such that $PP^T = \Sigma$. Let $\xi \sim N(\mathbf{0}_n, I_n)$ be an $n$-dimensional standard normal random variable, then the random variable $P \xi \sim N(\mathbf{0}_n, \Sigma)$. Thus, all positive semi-definite matrices are feasible covariance matrices for Gaussian distributions.
\end{remark}

\begin{definition}\label{G1_q_def}
Given any positive semi-definite matrix $\Sigma \in \cS$, we define
\begin{equation}\label{G1_def_equ}
    G_1(\Sigma) := q(\Sigma) q(\Sigma)^T,
\end{equation}
where $q(\Sigma) \in \RR^{n \times 1}$ is defined by
\begin{equation}\label{q_def_equ}
    q(\Sigma)_i := \sqrt{G(\Sigma)_{ii}}, \quad \text{for all } i \in [n].
\end{equation}
\end{definition}

\begin{lemma}\label{G1_geq_G lem 0505}
Given any positive semi-definite matrix $\Sigma \in \cS$, it holds that
\begin{equation}\label{G1_G_ij_inequ}
G_1(\Sigma)_{ij} \geq G(\Sigma)_{ij} \quad \text{for any } i,j \in [n].
\end{equation}
\end{lemma}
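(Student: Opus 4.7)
The plan is to recognize this as a direct consequence of the Cauchy--Schwarz inequality applied in the $L^2$ space of random variables. Unpacking the definitions, the claim $G_1(\Sigma)_{ij} \geq G(\Sigma)_{ij}$ becomes
$$\sqrt{G(\Sigma)_{ii}}\,\sqrt{G(\Sigma)_{jj}} \;\geq\; G(\Sigma)_{ij},$$
which, upon recalling $G(\Sigma)_{ij} = \EE_{h \sim N(\mathbf{0}_n,\Sigma)}[\sigma(h_i)\sigma(h_j)]$, reads
$$\sqrt{\EE[\sigma(h_i)^2]}\,\sqrt{\EE[\sigma(h_j)^2]} \;\geq\; \EE[\sigma(h_i)\sigma(h_j)].$$

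First I would fix any $\Sigma \in \cS$ and any indices $i,j \in [n]$, let $h \sim N(\mathbf{0}_n,\Sigma)$, and observe that $\sigma(h_i)$ and $\sigma(h_j)$ are real-valued random variables lying in $L^2$ (this is automatic for ReLU or $\tanh$, since the Gaussian tails ensure finite second moments; a one-line justification suffices). Then I would apply Cauchy--Schwarz in $L^2(\Omega)$:
$$\EE[\sigma(h_i)\sigma(h_j)] \;\leq\; \bigl|\EE[\sigma(h_i)\sigma(h_j)]\bigr| \;\leq\; \sqrt{\EE[\sigma(h_i)^2]\,\EE[\sigma(h_j)^2]}.$$
The rightmost expression equals $\sqrt{G(\Sigma)_{ii}\,G(\Sigma)_{jj}} = q(\Sigma)_i\, q(\Sigma)_j = G_1(\Sigma)_{ij}$ by Definition~\ref{G1_q_def}, which gives the desired inequality.

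There is no real obstacle here; the only subtlety is ensuring that $G(\Sigma)_{ii}$ is finite so that $q(\Sigma)_i$ is well-defined and Cauchy--Schwarz applies. This is immediate for the activations of interest, but for completeness I would note that since all diagonal entries of $\Sigma$ are finite, $h_i$ is a scalar Gaussian with finite variance, and any activation with at most linear growth (in particular ReLU and $\tanh$) yields $\sigma(h_i) \in L^2$. The argument is uniform in $i,j$, so (\ref{G1_G_ij_inequ}) holds for every pair of indices.
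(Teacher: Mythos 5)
Your proof is correct and takes essentially the same route as the paper: you invoke Cauchy--Schwarz in $L^2$, which is precisely the $p=q=2$ case of the H\"older's inequality the paper cites, and the remaining manipulations (unpacking $G_1(\Sigma)_{ij}=\sqrt{G(\Sigma)_{ii}}\sqrt{G(\Sigma)_{jj}}$ and bounding $\EE[\sigma(h_i)\sigma(h_j)]$ by its absolute value) match the paper's argument. The brief justification of square-integrability you add is a minor extra bit of rigor the paper omits.
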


\begin{proof}
Recalling the formation of function $G$ in Proposition \ref{NNGP_proposition} (NNGP correspondence for vanilla GCN), for any $i,j \in [n]$, we have
\begin{equation*}
    G(\Sigma)_{ij} = \EE_{h \sim N(\mathbf{0}_n,\Sigma)} [\sigma(h_i) \cdot \sigma(h_j)].
\end{equation*}
Recalling (\ref{G1_def_equ}) and (\ref{q_def_equ}) in Definition \ref{G1_q_def}, we get
\begin{equation}\label{G1ij_equ new}
\begin{aligned}
    G_1(\Sigma)_{ij} &:= q(\Sigma)_i \cdot q(\Sigma)_j = \sqrt{G(\Sigma)_{ii}} \cdot \sqrt{G(\Sigma)_{jj}} \\
    &= \EE_{h \sim N(\mathbf{0}_n, \Sigma)} [\sigma(h_i)^2]^{\frac{1}{2}} \cdot \EE_{h \sim N(\mathbf{0}_n, \Sigma)} [\sigma(h_j)^2]^{\frac{1}{2}}
\end{aligned}
\end{equation}
From H{\"o}lder's inequality \citep{hardy1952inequalities}, we get
\begin{equation*}
\begin{aligned}
    \text{RHS of } (\ref{G1ij_equ new}) &\geq \EE_{h \sim N(\mathbf{0}_n, \Sigma)} \sbr{ |\sigma(h_i) \cdot \sigma(h_j)| } \\
    &\geq \EE_{h \sim N(\mathbf{0}_n, \Sigma)} \sbr{ \sigma(h_i) \cdot \sigma(h_j) } = G(\Sigma)_{ij}.
\end{aligned}
\end{equation*}
\end{proof}

\begin{lemma}\label{sig_l1_l_lem 0505}
Given the NNGP covariance matrices $\{\Sigma^{(l)}\}_{l=1}^{\infty}$ defined by (\ref{NNGP_equ}), it holds that
\begin{equation*}
    \tr (\Sigma^{(l+1)}) \leq \sigma_w^2 \tr( G(\Sigma^{(l)}) ).
\end{equation*}
\end{lemma}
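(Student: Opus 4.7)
The plan is to substitute the NNGP recursion $\Sigma^{(l+1)} = \sigma_w^2 \hat{A} G(\Sigma^{(l)}) \hat{A}$ into the trace on the left and reduce the claim to a simple statement about the trace of a product of two positive semi-definite matrices. Since $\hat{A}$ is symmetric, cyclicity of the trace gives
$$\tr(\Sigma^{(l+1)}) \;=\; \sigma_w^2 \tr\bigl(\hat{A} G(\Sigma^{(l)}) \hat{A}\bigr) \;=\; \sigma_w^2 \tr\bigl(\hat{A}^2 G(\Sigma^{(l)})\bigr),$$
so the desired inequality is equivalent to
$$\sigma_w^2 \tr\bigl((I - \hat{A}^2)\, G(\Sigma^{(l)})\bigr) \;\geq\; 0.$$

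The next step is to verify that both factors in the last product are positive semi-definite. For $I-\hat{A}^2$: Proposition~\ref{eigenval_hatA_proposition} shows that the eigenvalues of the symmetric matrix $\hat{A}$ lie in $[-1,1]$, hence those of $\hat{A}^2$ lie in $[0,1]$, and $I-\hat{A}^2$ is PSD. For $G(\Sigma^{(l)})$: by definition
$$G(\Sigma^{(l)}) \;=\; \EE_{h \sim N(\mathbf{0}_n, \Sigma^{(l)})}\bigl[\sigma(h)\sigma(h)^{T}\bigr],$$
an expectation of rank-one PSD matrices, hence itself PSD.

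Finally, I would invoke the standard fact that $\tr(AB) \geq 0$ whenever $A,B$ are both PSD (which follows by writing $\tr(AB) = \tr(A^{1/2} B A^{1/2})$ and observing that $A^{1/2} B A^{1/2}$ is PSD). Applied with $A = I - \hat{A}^2$ and $B = G(\Sigma^{(l)})$, together with $\sigma_w^2 \geq 0$, this yields
$$\sigma_w^2 \tr(G(\Sigma^{(l)})) - \tr(\Sigma^{(l+1)}) = \sigma_w^2 \tr\bigl((I - \hat{A}^2)\, G(\Sigma^{(l)})\bigr) \geq 0,$$
completing the proof. I do not foresee a serious obstacle here: the only ingredients are the spectral bound on $\hat{A}$ already supplied by Proposition~\ref{eigenval_hatA_proposition}, the PSD-ness of $G(\Sigma^{(l)})$ (immediate from its defining expectation), and the trace-of-PSD-product fact. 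The one point worth being careful about is the use of symmetry of $\hat{A}$ in the cyclic manipulation, which is needed to convert $\hat{A} G \hat{A}$ into $\hat{A}^2 G$ inside the trace.
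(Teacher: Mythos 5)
Your proof is correct, and it takes a genuinely different and arguably cleaner route than the paper's. The paper does not use cyclicity of the trace or the PSD-product fact; instead it introduces an auxiliary rank-one matrix $G_1(\Sigma) = q(\Sigma)q(\Sigma)^T$ with $q(\Sigma)_i = \sqrt{G(\Sigma)_{ii}}$, shows via H\"older's inequality that $G_1(\Sigma)_{ij}\geq G(\Sigma)_{ij}$ entrywise (Lemma~\ref{G1_geq_G lem 0505}), and then argues that $\tr(\hat A G \hat A) \leq \tr(\hat A G_1 \hat A) = \|\hat A q\|^2 \leq \|q\|^2 = \tr(G)$ — here the entrywise domination is needed together with the nonnegativity of the entries of $\hat A$, and the last step uses the spectral bound $\|\hat A\|\leq 1$. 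Your argument bypasses the $G_1$ machinery entirely: you observe directly that $G(\Sigma^{(l)})$ is PSD as an expectation of rank-one PSD matrices, that $I-\hat A^2$ is PSD by the same spectral bound, and that $\tr\bigl((I-\hat A^2)G(\Sigma^{(l)})\bigr)\geq 0$ is then immediate. This is shorter, uses fewer lemmas, and does not rely on the sign structure of $\hat A$'s entries (only on symmetry and the spectral bound), so it would generalize to any symmetric normalization with spectral radius at most one even if some entries were negative. The one thing the paper's approach buys in exchange is that the intermediate object $G_1$ and the H\"older step make the dependence on the diagonal entries $G(\Sigma)_{ii}$ visible — which parallels the diagonal computation used later in the proofs of Theorems~\ref{relu_forward_theorem_appendix} and~\ref{tanh_forward_theorem} — but for this lemma alone your argument is preferable.
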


\begin{proof}
Recalling the NNGP correspondence formula for vanilla GCN (\ref{NNGP_equ}) in Proposition \ref{NNGP_proposition}, we have
\begin{equation}\label{05050118}
    \tr (\Sigma^{(l+1)}) = \tr (\sigma_w^2 (\hat{A} G(\Sigma^{(l)}) \hat{A})) = \sigma_w^2 \tr(\hat{A} G(\Sigma^{(l)}) \hat{A})).
\end{equation}

Since all entries of $\hat{A}$ are non-negative, by Lemma \ref{G1_geq_G lem 0505}, we have
\begin{equation*}
    (\hat{A} G(\Sigma^{(l)}) \hat{A})_{ii} \leq (\hat{A} G_1(\Sigma^{(l)}) \hat{A})_{ii}, \quad \text{for any } i \in [n].
\end{equation*}
Taking the summation of w.r.t $i \in [n]$, we get
\begin{equation}\label{05050119}
    \tr ( \hat{A} G(\Sigma^{(l)}) \hat{A} ) \leq \tr ( \hat{A} G_1(\Sigma^{(l)}) \hat{A} ).
\end{equation}

Recalling the definition of function $G_1$ in (\ref{G1_def_equ}), we get
\begin{equation}\label{05050120}
    \tr ( \hat{A} G_1(\Sigma^{(l)}) \hat{A} ) = \tr ( \hat{A} q(\Sigma^{(l)}) q(\Sigma^{(l)})^T \hat{A} ) = \|\hat{A} q(\Sigma^{(l)})\|^2.
\end{equation}

By Proposition \ref{eigenval_hatA_proposition}, all the eigenvalues of $\hat{A}$ belong to $(-1,1]$. Recalling the definition of function $q$ in (\ref{q_def_equ}), we get
\begin{equation}\label{05050121}
    \|\hat{A} q(\Sigma^{(l)})\|^2 \leq \|q(\Sigma^{(l)})\|^2 = \sum_{i=1}^n q(\Sigma^{(l)})^2_i = \tr(G(\Sigma^{(l)})).
\end{equation}

Finally, combining (\ref{05050118}), (\ref{05050119}), (\ref{05050120}), and (\ref{05050121}), we complete the proof.
\end{proof}

\subsection{Proof of Theorem \ref{relu_forward_theorem} (Signal propagation on ReLU-like-activated vanilla GCN)}
\label{relu_forward_appendix_section}

To facilitate reading and understanding, we restate Theorem \ref{relu_forward_theorem} here.

\mythmrelu*

We now provide a more general signal propagation analysis on vanilla GCN with ReLU-like activation.
\begin{definition}[ReLU-like activation]
\label{relu_like_def}
An activation function $\sigma: \RR \rightarrow \RR$ is $(\alpha,\beta)$-ReLU if it has the form
\begin{equation}
    \sigma(x) = \left\{\begin{aligned}
        \alpha x, &\quad x \geq 0, \\
        \beta x, &\quad x < 0,
    \end{aligned}\right.
\end{equation}
where $\alpha, \beta \in \RR_{+}$ and not both of them are $0$. We also call such $\sigma$ a ReLU-like activation function.
\end{definition}

Next, to prove Theorem \ref{relu_forward_theorem}, we generalize our analysis from the standard ReLU activation to the more general $(\alpha,\beta)$-ReLU activation in Definition \ref{relu_like_def}. Since $(\alpha, \beta) = (1,0)$ recovers the standard ReLU, proving the following Theorem \ref{relu_forward_theorem_appendix} implies Theorem \ref{relu_forward_theorem}.

\begin{theorem}[The generalized version of Theorem \ref{relu_forward_theorem}]\label{relu_forward_theorem_appendix}
Under the NNGP correspondence approximation, when the activation function $\sigma$ is $(\alpha,\beta)$-ReLU in Definition \ref{relu_like_def}, we have
\begin{enumerate}[itemindent = 0pt]
\vspace{-2mm}
    \item When $\sigma_w^2 = 2 / (\alpha^2 + \beta^2)$, either the graph embedding variation metric
    \begin{equation*}
        \lim_{L \to \infty} \ODP{L}(\sigma_w^2) = \lim_{L \to \infty} \EE_{H \sim N(\mathbf{0}_n, \Sigma^{(L)})} \sbr{ \dir(H) / \|H\|^2_{\rm F} } = 0,
    \end{equation*}
    or the forward propagation metric
    \begin{equation*}
        \lim_{L \to \infty} \FSP{L}(\sigma_w^2) = \lim_{L \to \infty} \EE_{H \sim N(\mathbf{0}_n, \Sigma^{(L)})}[\|H\|^2_{\rm F} / \|X\|^2_{\rm F} ] = 0.
    \end{equation*}
    \vspace{-1mm}
    \item When $\sigma_w^2 < 2 / (\alpha^2 + \beta^2)$, for any $L \geq 1$, the forward propagation metric satisfies
    \begin{equation*}
        \FSP{L}(\sigma_w^2) = \EE_{H \sim N(\mathbf{0}_n, \Sigma^{(L)})}[\|H\|^2_{\rm F} / \|X\|^2_{\rm F} ] \leq \frac{2C}{(\alpha^2 + \beta^2) d_0} \cdot \rbr{ \frac{\sigma_w^2 (\alpha^2 + \beta^2)}{2} }^L.
    \end{equation*}
    \vspace{-1mm}
\end{enumerate}
\end{theorem}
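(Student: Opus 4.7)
} The plan is to recast both metrics in terms of the NNGP covariance $\Sigma^{(L)}$ and then track the evolution of $\tr(\Sigma^{(l)})$ via Lemma~\ref{sig_l1_l_lem 0505}. Under NNGP, the $C$ columns of $H^{(L)}$ are i.i.d.\ $N(\mathbf{0}_n,\Sigma^{(L)})$, so $\FSP{L}(\sigma_w^2)=C\tr(\Sigma^{(L)})/\|X\|_{\rm F}^2$, and by Lemma~\ref{eigenval_hatL_lem} the ratio $\dir(H)/\|H\|_{\rm F}^2$ is bounded in $[0,2]$, so the GEV is amenable to a dominated convergence argument. The central computation is that for an $(\alpha,\beta)$-ReLU, $G(\Sigma)_{ii}=\EE_{h\sim N(0,\Sigma_{ii})}[\sigma(h)^2]=\tfrac{\alpha^2+\beta^2}{2}\Sigma_{ii}$, so $\tr(G(\Sigma))=\tfrac{\alpha^2+\beta^2}{2}\tr(\Sigma)$. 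Combining this with Lemma~\ref{sig_l1_l_lem 0505} yields the one-step contraction
\[
\tr(\Sigma^{(l+1)})\;\le\;\sigma_w^2\,\tr(G(\Sigma^{(l)}))\;=\;r\,\tr(\Sigma^{(l)}),\qquad r:=\sigma_w^2(\alpha^2+\beta^2)/2.
\]

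For Part~2, I would iterate this contraction $L-1$ times starting from $\tr(\Sigma^{(1)})=\sigma_w^2\tr(\hat A XX^T\hat A)/d_0\le\sigma_w^2\|X\|_{\rm F}^2/d_0$ (using $\|\hat A\|_{\rm op}\le 1$ from Proposition~\ref{eigenval_hatA_proposition}), obtaining $\tr(\Sigma^{(L)})\le r^{L-1}\sigma_w^2\|X\|_{\rm F}^2/d_0=\tfrac{2}{\alpha^2+\beta^2}r^L\|X\|_{\rm F}^2/d_0$; multiplying by $C/\|X\|_{\rm F}^2$ gives the claimed bound.

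For Part~1, at the boundary $r=1$ the trace sequence $t_l:=\tr(\Sigma^{(l)})$ is non-increasing, hence converges to some $t_\infty\ge 0$. If $t_\infty=0$ then $\FSP{L}\to 0$ directly. Otherwise the telescoping gap $t_l-t_{l+1}\to 0$ must tighten both inequalities inside the proof of Lemma~\ref{sig_l1_l_lem 0505}: (i) $\|q(\Sigma^{(l)})\|^2-\|\hat A q(\Sigma^{(l)})\|^2\to 0$, and (ii) $\tr(\hat A^2(G_1(\Sigma^{(l)})-G(\Sigma^{(l)})))\to 0$. Using the spectral gap of $\hat A$ away from its top eigenspace (Proposition~\ref{eigenval_hatA_proposition}), (i) forces $q(\Sigma^{(l)})$ to align with the eigenspace for eigenvalue $1$, i.e.\ $\Sigma^{(l)}_{ii}/\tilde d_i$ becomes asymptotically constant on each connected component. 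Using the arccos-type closed form for the $(\alpha,\beta)$-ReLU kernel (for which $G(\Sigma)_{ij}=G_1(\Sigma)_{ij}$ holds iff the correlation coefficient $\Sigma_{ij}/\sqrt{\Sigma_{ii}\Sigma_{jj}}$ equals $1$), (ii) forces $\Sigma^{(l)}_{ij}\to\sqrt{\Sigma^{(l)}_{ii}\Sigma^{(l)}_{jj}}$ for all pairs with $(\hat A^2)_{ij}>0$. Chaining along the graph then yields $\Sigma^{(l)}_{ij}\asymp c_m\sqrt{\tilde d_i\tilde d_j}$ within each component $m$, which is precisely the top-eigenspace structure; a direct computation using $\hat A_{ij}=\tilde A_{ij}/\sqrt{\tilde d_i\tilde d_j}$ then gives $\tr(\hat A\Sigma^{(l)})/\tr(\Sigma^{(l)})\to 1$ and hence $\tr(\hat L\Sigma^{(l)})/\tr(\Sigma^{(l)})\to 0$. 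Dominated convergence (justified by Lemma~\ref{eigenval_hatL_lem}) finally upgrades this to $\ODP{L}\to 0$.

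The main obstacle is Part~1, specifically promoting the asymptotic tightness of the two scalar inequalities into pointwise control on the entries of $\Sigma^{(l)}$ and then into the vanishing Dirichlet ratio. The delicate points will be (a) quantifying the spectral gap argument so that (i) actually yields entrywise concentration of $\sqrt{\Sigma^{(l)}_{ii}}$ along $\tilde D^{1/2}u^{(m)}$, (b) establishing and exploiting monotonicity of the arccos kernel to convert (ii) into correlation saturation for edges of $\hat A^2$, and (c) propagating the ``saturation on edges'' conclusion across the whole component via connectivity of $\hat A^2$ inside each $\cV_m$. Everything else---the initial bound, Part~2, and the reduction from $\FSP{L},\ODP{L}$ to their Gaussian expectations---follows routinely from the lemmas already in Appendix~\ref{conv_kernel_appendix_section} and Appendix~\ref{NNGP_appendix_subsection}.
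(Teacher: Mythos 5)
Your Part 2 is exactly the paper's argument (Lemma~\ref{sig_l1_l_lem 0505} plus $\tr(G(\Sigma))=\tfrac{\alpha^2+\beta^2}{2}\tr(\Sigma)$ plus the first-layer bound), so no comment there. Part~1, however, misses the key lemma the paper relies on and replaces it with a much harder and incomplete structural analysis. The paper never extracts information from the tightness of the trace contraction. Instead it proves a \emph{separate geometric contraction on the unnormalized Dirichlet energy}: Lemma~\ref{dirichlet_converge_lem} gives $\dir(\hat A x)\le\lambda^2\dir(x)$ with $\lambda<1$ the second-largest absolute eigenvalue of $\hat A$, and Lemma~\ref{expectation_dirichlet_converge_lem} (via the distributional symmetry of $N(\vzero,\Sigma)$ under $h\mapsto-h$ and Lemma~\ref{relu dir symmetry lem}) gives $\EE_{h\sim N(\vzero,\Sigma)}[\dir(\sigma(h))]\le\tfrac{\alpha^2+\beta^2}{2}\EE[\dir(h)]$. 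Pushed through the NNGP recursion these yield $\EE[\dir(H^{(l+1)})]\le\lambda^2\tfrac{\sigma_w^2(\alpha^2+\beta^2)}{2}\EE[\dir(H^{(l)})]$ (Lemma~\ref{relu_oversmoothing_theorem}), so at $\sigma_w^2(\alpha^2+\beta^2)/2=1$ the rate is $\lambda^2<1$ and $\EE[\dir(H^{(L)})]\to 0$ unconditionally. This completely replaces your steps (a)--(c): no spectral alignment of $q(\Sigma^{(l)})$, no correlation saturation, no arccos kernel. And your (a)--(c) do have real gaps: your tightness condition (ii) only forces $(G_1-G)_{ij}\to 0$ on $\hat A^2$-edges if you first rule out ``dying'' nodes with $\Sigma^{(l)}_{ii}\to 0$ on those edges (otherwise the gap vanishes trivially), which means you must track per-component mass; the arccos-kernel monotonicity you invoke is not established in the paper and would need a proof; and the chaining step needs a uniform positivity bound on $(\hat A^2)_{ij}$ across a component.

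The final ``dominated convergence'' step is also wrong as stated. Showing $\tr(\hat L\Sigma^{(l)})/\tr(\Sigma^{(l)})\to 0$ gives a ratio of expectations going to zero, but $\ODP{L}$ is the expectation of a ratio, and numerator and denominator are coupled through $H$. DCT by itself does not bridge this; you would still need the paper's split $\EE[\dir(H)/\|H\|_{\rm F}^2]\le\epsilon^{-2}\EE[\dir(H)]+2\,\PP[\|H\|_{\rm F}\le\epsilon]$, followed by the one-dimensional Gaussian marginal bound $\PP[\|H\|_{\rm F}\le\epsilon]\le 2\Phi\!\left(\epsilon\sqrt{n/\tr(\Sigma^{(L)})}\right)-1$ using $\Sigma^{(L)}_{ii}\ge\tr(\Sigma^{(L)})/n\ge\delta_0/n$, and then $\epsilon\to 0^+$. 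Lemma~\ref{eigenval_hatL_lem} is used there to bound the ratio by $2$ on the small-norm event, not to justify a DCT passage.
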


\begin{lemma}\label{dirichlet_converge_lem}
For any $x \in \RR^n$, it holds that
\begin{equation}
    \dir (\hat{A} x) \leq \lambda^2 \dir (x),
\end{equation}
where $\lambda$ is the second largest absolute eigenvalue of $\hat{A}$, i.e.,
\begin{equation*}
\lambda = \max_{i \in [n], \lambda_i \neq 1} |\lambda_i|.
\end{equation*}
\end{lemma}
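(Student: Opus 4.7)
The plan is to reduce the inequality to a spectral calculation by expressing the Dirichlet energy as a quadratic form in the normalized Laplacian $\hat{L} = I - \hat{A}$. A direct computation (using $\tilde{D} = D + I$ and expanding the definition of $\dir$) gives $\dir(x) = x^{T} \hat{L} x$ for any $x \in \RR^n$. So the claim becomes
\$
(\hat{A} x)^{T} \hat{L} (\hat{A} x) \;\leq\; \lambda^{2}\, x^{T} \hat{L} x.
\$

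Since $\hat{A}$ is symmetric, diagonalize it as $\hat{A} = U \Lambda U^{T}$ with $\Lambda = \diag(\lambda_1, \dots, \lambda_n)$ and $U$ orthogonal. Note $\hat{A}$ and $\hat{L} = I - \hat{A}$ share the same eigenbasis, and $\hat{A}$ commutes with $\hat{L}$. Setting $y = U^{T} x$, the two sides reduce to diagonal sums:
\$
(\hat{A} x)^{T} \hat{L} (\hat{A} x) = \sum_{i=1}^{n} \lambda_i^{2}(1-\lambda_i)\, y_i^{2}, \qquad \lambda^{2}\, x^{T} \hat{L} x = \lambda^{2} \sum_{i=1}^{n} (1-\lambda_i)\, y_i^{2}.
\$
Thus it suffices to verify termwise that $(\lambda^{2} - \lambda_i^{2})(1 - \lambda_i)\, y_i^{2} \geq 0$ for every $i \in [n]$.

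For this I invoke Proposition \ref{eigenval_hatA_proposition}: every eigenvalue $\lambda_i$ of $\hat{A}$ lies in $(-1, 1]$. I split into cases. If $\lambda_i = 1$ (which happens precisely for $i \in [M]$), then $1 - \lambda_i = 0$ and the term vanishes. If $\lambda_i \neq 1$, then $\lambda_i \in (-1, 1)$, so $1 - \lambda_i > 0$; by the very definition $\lambda = \max_{j : \lambda_j \neq 1} |\lambda_j|$, we have $|\lambda_i| \leq \lambda$, hence $\lambda^{2} - \lambda_i^{2} \geq 0$. Either way the term is non-negative, so the sum is non-negative and the inequality follows.

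I do not anticipate a real obstacle here: the argument is essentially ``$\hat{L}$ and $\hat{A}$ are simultaneously diagonalizable, and outside the eigenspace of $\hat{A}$ for eigenvalue $1$ the spectrum is uniformly bounded by $\lambda$''. The only small subtlety worth flagging is the observation that precisely on the eigenspace where $\lambda^{2} - \lambda_i^{2}$ could fail to be dominated by $\lambda^{2}$, namely at $\lambda_i = 1$, the Laplacian weight $1 - \lambda_i$ kills the term; this is why the second-largest absolute eigenvalue (rather than the largest) is the right constant. As a sanity check one may note equality is attained when $y$ is supported on an eigenvector achieving $|\lambda_i| = \lambda$.
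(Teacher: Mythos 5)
Your proof is correct and follows essentially the same approach as the paper's: simultaneously diagonalize $\hat{A}$ and $\hat{L} = I - \hat{A}$ in an orthonormal eigenbasis, reduce both Dirichlet energies to diagonal sums $\sum_i (1-\lambda_i)\lambda_i^2 y_i^2$ versus $\lambda^2 \sum_i (1-\lambda_i) y_i^2$, and compare termwise using $\lambda_i \in (-1,1]$. Your explicit case split at $\lambda_i = 1$ (where the Laplacian weight $1-\lambda_i$ kills the term) is the same observation the paper uses implicitly to justify why the second-largest absolute eigenvalue is the correct constant.
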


\begin{proof}
Since $\hat{A}$ is a symmetric real matrix, by Proposition \ref{eigenval_hatA_proposition}, it can be decomposed as $\hat{A} = U \Lambda U^{T}$, where $\Lambda = \diag(\lambda_1, \lambda_2, \dots, \lambda_n)$ and $U \in \RR^{n \times n}$ is an orthogonal matrix. The $i$-th column $u_i$ of $U$ is the eigenvector corresponding to $\lambda_i$.

By Proposition \ref{eigenval_hatA_proposition}, we have $\lambda_i \in (-1,1]$ for all $i \in [n]$. Since $\hat{L} = I - \hat{A}$, we conclude that
\begin{equation*}
\begin{aligned}
\dir(\hat{A}x) &= (\hat{A}x)^T \hat{L} \hat{A}x = x^T \hat{A} \hat{L} \hat{A}x = z^T U^T (U \Lambda U^{-1}) (U (I - \Lambda) U^{-1}) (U \Lambda U^{-1}) z \\
&= z^T \Lambda (I - \Lambda) \Lambda z = \sum_{i=1}^n (1-\lambda_i) \lambda_i^2 z_i^2 \leq \lambda^2 \sum_{i=1}^n (1-\lambda_i) z_i^2\\
&= \lambda^2 z^T(I-\Lambda) z = \lambda^2 \dir(x).
\end{aligned}
\end{equation*}
\end{proof}

\begin{lemma}\label{relu symmetry lem}
When the activation function $\sigma$ is $(\alpha,\beta)$-ReLU, it holds that
\begin{equation}\label{relu symmetry inequ}
    (\sigma(x) - \sigma(y))^2 + (\sigma(-x) - \sigma(-y))^2 \leq (\alpha^2 + \beta^2) (x - y)^2,
\end{equation}
for any $x,y \in \RR$. Moreover, the inequality becomes an equality if and only if $xy \geq 0$.
\end{lemma}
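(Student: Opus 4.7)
The plan is to prove the inequality by a clean case analysis driven by two algebraic identities. First, I would observe that for the $(\alpha,\beta)$-ReLU activation, a direct check (splitting on $x\geq 0$ vs.\ $x<0$) gives the pointwise identity
\[
\sigma(x)^2 + \sigma(-x)^2 \;=\; (\alpha^2+\beta^2)\,x^2 \qquad \text{for all } x\in\mathbb{R}.
\]
Applying this to both $x$ and $y$ and expanding the squares on the left-hand side of \eqref{relu symmetry inequ} yields
\[
\text{LHS} \;=\; (\alpha^2+\beta^2)(x^2+y^2) \;-\; 2\bigl[\sigma(x)\sigma(y)+\sigma(-x)\sigma(-y)\bigr].
\]
Since the right-hand side is $(\alpha^2+\beta^2)(x^2+y^2) - 2(\alpha^2+\beta^2)\,xy$, the inequality reduces to showing
\[
\sigma(x)\sigma(y)+\sigma(-x)\sigma(-y)\;\geq\;(\alpha^2+\beta^2)\,xy,
\]
with equality characterized exactly by $xy\geq 0$.

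Next I would verify this cross-term identity by splitting on the sign of the product $xy$. When $x,y\geq 0$, both summands on the left are $\alpha^2 xy$ and $\beta^2 xy$ respectively, giving $(\alpha^2+\beta^2)xy$; symmetrically for $x,y\leq 0$ one obtains the same value. The only nontrivial case is $xy<0$, say $x>0>y$; then a direct substitution gives
\[
\sigma(x)\sigma(y)+\sigma(-x)\sigma(-y) \;=\; \alpha\beta\,xy + \alpha\beta\,xy \;=\; 2\alpha\beta\,xy.
\]
Subtracting the target expression yields
\[
\sigma(x)\sigma(y)+\sigma(-x)\sigma(-y) - (\alpha^2+\beta^2)xy \;=\; -(\alpha-\beta)^2\, xy \;\geq\; 0,
\]
where the last inequality uses $xy<0$. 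This gives the strict form of \eqref{relu symmetry inequ} in that case, and equality exactly when $xy\geq 0$ (or in the degenerate situation $\alpha=\beta$, where $\sigma$ is linear and both sides are identically equal).

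There is no real obstacle here; the only subtlety is bookkeeping the signs of $\sigma(\pm x)$ and $\sigma(\pm y)$ carefully so that the $xy<0$ calculation comes out to $2\alpha\beta xy$ rather than e.g.\ $-2\alpha\beta xy$. I would handle this by listing the four sign configurations explicitly, noting that the $x<0<y$ case is symmetric to $x>0>y$ by swapping the roles of $x$ and $y$. The equality discussion then follows immediately from the identity $-(\alpha-\beta)^2 xy$: if $\alpha\neq\beta$, equality forces $xy=0$, and combined with the $xy>0$ case already yielding equality, we obtain the stated characterization $xy\geq 0$.
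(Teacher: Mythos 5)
Your proof is correct and follows essentially the same route as the paper's: a case analysis on the sign of $xy$, with the $xy<0$ case reducing to the observation that the excess is $-(\alpha-\beta)^2\,xy \geq 0$ (the paper writes the residual as $2(\alpha-\beta)^2\,xy$ because it keeps $(x-y)^2$ as the base term rather than $x^2+y^2$; the two are the same after normalization). Your preliminary identity $\sigma(x)^2+\sigma(-x)^2=(\alpha^2+\beta^2)x^2$ is a tidy repackaging that reduces the target to a cross-term inequality, whereas the paper expands the left-hand side directly in each sign configuration — but the algebraic content is identical.

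One point in your favor: you correctly flag that when $\alpha=\beta$ the activation is linear and equality holds for \emph{all} $x,y$, so the stated ``if and only if $xy\geq 0$'' characterization fails. The paper's Definition \ref{relu_like_def} does not exclude $\alpha=\beta$, and its proof asserts a strict ``LHS $<$ RHS'' in the $xy<0$ case, which is wrong when $\alpha=\beta$. Your version is more careful. This edge case is harmless for the downstream use in Lemma \ref{expectation_dirichlet_converge_lem}, which only needs the $\leq$ direction, but it is a genuine imprecision in the lemma's equality claim that you were right to notice.
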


\begin{proof}
When $x,y \geq 0$, it holds that
\begin{equation*}
\begin{aligned}
    \text{LHS of } \eqref{relu symmetry inequ} = (\alpha x - \alpha y)^2 + (-\beta x + \beta y)^2 = \text{RHS of } \eqref{relu symmetry inequ}.
\end{aligned}
\end{equation*}
Similarly, the equality holds when $x,y \leq 0$. When $xy < 0$,
\begin{equation*}
\begin{aligned}
    \text{LHS of } \eqref{relu symmetry inequ} &= (\alpha x - \beta y)^2 + (-\beta x + \alpha y)^2 \\
    &= (\alpha^2 + \beta^2)(x^2 + y^2) - 4\alpha \beta xy \\
    &= (\alpha^2 + \beta^2)(x - y)^2 + 2(\alpha - \beta)^2 xy \\
    &< \text{RHS of } \eqref{relu symmetry inequ}.
\end{aligned}
\end{equation*}
\end{proof}

\begin{lemma}\label{relu dir symmetry lem}
When the activation function $\sigma$ is $(\alpha,\beta)$-ReLU, it holds that
\begin{equation}\label{relu dir symmetry inequ}
    \dir (\sigma(h)) + \dir (\sigma(-h)) \leq (\alpha^2 + \beta^2) \dir (h).
\end{equation}
\end{lemma}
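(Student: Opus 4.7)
The plan is to reduce this lemma to its scalar counterpart, Lemma \ref{relu symmetry lem}, by applying it edge-by-edge in the sum defining $\dir$. Recall from the definition that for a vector $h \in \mathbb{R}^n$,
\begin{equation*}
    \dir(h) = \sum_{(i,j) \in \cE} \left( \frac{h_i}{\sqrt{1+d_i}} - \frac{h_j}{\sqrt{1+d_j}} \right)^2.
\end{equation*}
So the left-hand side of \eqref{relu dir symmetry inequ} is a sum, over edges $(i,j) \in \cE$, of two squared differences involving $\sigma(h_i)/\sqrt{1+d_i}$, $\sigma(h_j)/\sqrt{1+d_j}$, $\sigma(-h_i)/\sqrt{1+d_i}$, and $\sigma(-h_j)/\sqrt{1+d_j}$.

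First I would exploit positive homogeneity of the $(\alpha,\beta)$-ReLU activation: for any $c > 0$ and any $t \in \mathbb{R}$, we have $\sigma(c t) = c\, \sigma(t)$, hence $\sigma(h_i)/\sqrt{1+d_i} = \sigma\bigl(h_i/\sqrt{1+d_i}\bigr)$, and similarly for the other three terms. After this rewriting, each edge contribution on the left-hand side takes the form $(\sigma(x) - \sigma(y))^2 + (\sigma(-x) - \sigma(-y))^2$ with $x := h_i/\sqrt{1+d_i}$ and $y := h_j/\sqrt{1+d_j}$, while the corresponding edge contribution on the right-hand side is $(\alpha^2 + \beta^2)(x - y)^2$.

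Now I would apply Lemma \ref{relu symmetry lem} to each edge term, yielding
\begin{equation*}
    (\sigma(x) - \sigma(y))^2 + (\sigma(-x) - \sigma(-y))^2 \leq (\alpha^2 + \beta^2)(x-y)^2,
\end{equation*}
and then sum over $(i,j) \in \cE$ to conclude the desired inequality.

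There is no real obstacle here: the only subtlety is verifying that $\sigma$ commutes with positive scaling, which is immediate from Definition \ref{relu_like_def} since $(\alpha,\beta)$-ReLU is piecewise linear through the origin. The rest is just a clean termwise application of the scalar inequality, followed by summation over edges.
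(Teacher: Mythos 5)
Your proposal is correct and matches the paper's proof essentially step for step: both use the positive homogeneity $\sigma(ct)=c\sigma(t)$ to move the degree normalization inside $\sigma$, then apply the scalar inequality of Lemma \ref{relu symmetry lem} termwise to each edge, and sum over $\cE$. No meaningful differences.
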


\begin{proof}
Since the activation function $\sigma$ is $(\alpha, \beta)$-ReLU, we have
\begin{equation*}
    \sigma(cx) = c \sigma(x), \quad \text{for any }c \in \RR_{+}, x \in \RR.
\end{equation*}
Then we get
\begin{equation*}
\begin{aligned}
    &\text{LHS of } \eqref{relu dir symmetry inequ} = \sum_{(i,j) \in \cE} \sbr{\frac{\sigma(h_i)}{\sqrt{1+d_i}} - \frac{\sigma(h_j)}{\sqrt{1+d_j}}}^2 + \sbr{\frac{\sigma(-h_i)}{\sqrt{1+d_i}} - \frac{\sigma(-h_j)}{\sqrt{1+d_j}}}^2 \\
    &= \sum_{(i,j) \in \cE} \sbr{\sigma \rbr{\frac{h_i}{\sqrt{1+d_i}} } - \sigma \rbr{ \frac{h_j}{\sqrt{1+d_j}} } }^2 + \sbr{\sigma \rbr{\frac{-h_i}{\sqrt{1+d_i}} } - \sigma \rbr{ \frac{-h_j}{\sqrt{1+d_j}} } }^2.
\end{aligned}
\end{equation*}

By Lemma \ref{relu symmetry lem}, we have
\begin{equation*}
\begin{aligned}
    \text{LHS of } \eqref{relu dir symmetry inequ} &\leq (\alpha^2 + \beta^2) \sum_{(i,j) \in \cE} \sbr{ \frac{h_i}{\sqrt{1+d_i}} - \frac{h_j}{\sqrt{1+d_j}} }^2 = \text{RHS of } \eqref{relu dir symmetry inequ}.
\end{aligned}
\end{equation*}
\end{proof}

\begin{lemma}\label{expectation_dirichlet_converge_lem}
When the activation function $\sigma$ is $(\alpha,\beta)$-ReLU, for any feasible covariance matrix $\Sigma \in \RR^{n \times n}$, it holds that
\begin{equation*}
     \EE_{h \sim N(\mathbf{0}_n,\Sigma)} [\dir (\sigma(h))] \leq \frac{\alpha^2 + \beta^2}{2} \cdot \EE_{h \sim N(\mathbf{0}_n,\Sigma)} [\dir (h)].
\end{equation*}
\end{lemma}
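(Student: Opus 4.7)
The plan is to exploit the symmetry of the centered Gaussian distribution combined with the pointwise inequality already established in Lemma \ref{relu dir symmetry lem}. The key observation is that if $h \sim N(\mathbf{0}_n, \Sigma)$, then $-h$ has exactly the same distribution as $h$, since the mean is zero and $\Sigma = (-I)\Sigma(-I)^T$. Therefore
\begin{equation*}
\EE_{h \sim N(\mathbf{0}_n, \Sigma)}[\dir(\sigma(h))] = \EE_{h \sim N(\mathbf{0}_n, \Sigma)}[\dir(\sigma(-h))],
\end{equation*}
so I can rewrite the left-hand side as the symmetrized average
\begin{equation*}
\EE_{h}[\dir(\sigma(h))] = \tfrac{1}{2}\EE_{h}\bigl[\dir(\sigma(h)) + \dir(\sigma(-h))\bigr].
\end{equation*}

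Next, I would apply Lemma \ref{relu dir symmetry lem} pointwise inside the expectation. That lemma states $\dir(\sigma(h)) + \dir(\sigma(-h)) \leq (\alpha^2 + \beta^2)\dir(h)$ for every realization $h \in \RR^n$, so taking expectations preserves the inequality and yields
\begin{equation*}
\EE_{h}[\dir(\sigma(h))] \leq \tfrac{1}{2}(\alpha^2+\beta^2)\,\EE_{h}[\dir(h)],
\end{equation*}
which is exactly the claim. There is no genuine obstacle here: the nontrivial content (handling the fact that $\sigma$ treats positive and negative coordinates asymmetrically) has already been absorbed into Lemma \ref{relu dir symmetry lem}, and the role of the Gaussian assumption is limited to the reflection symmetry $h \stackrel{d}{=} -h$, which is immediate from $\Sigma$ being a covariance and the mean being zero. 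The only thing worth checking when writing the full proof is that $\dir(\sigma(h))$ is integrable so that the symmetrization and the expectation are valid; since $\sigma$ is $(\alpha,\beta)$-ReLU and hence Lipschitz, $\dir(\sigma(h))$ is bounded by a quadratic in $h$, whose Gaussian expectation is finite.
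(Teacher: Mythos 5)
Your proof is correct and matches the paper's argument exactly: both exploit the sign symmetry of the centered Gaussian to write $2\,\EE[\dir(\sigma(h))] = \EE[\dir(\sigma(h)) + \dir(\sigma(-h))]$ and then apply Lemma~\ref{relu dir symmetry lem} pointwise inside the expectation. The extra remark about integrability (Lipschitz $\sigma$, quadratic growth of the Dirichlet energy) is a sensible sanity check, though the paper leaves it implicit.
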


\begin{proof}
By symmetry, for any $n$-dimensional random variable $h \sim N(\mathbf{0}_n, \Sigma)$, it holds that $-h \sim N(\mathbf{0}_n, \Sigma)$. By Lemma \ref{relu dir symmetry lem}, we have
\begin{equation*}
\begin{aligned}
    2 \EE_{h \sim N(\mathbf{0}_n,\Sigma)} [\dir (\sigma(h))] &= \EE_{h \sim N(\mathbf{0}_n,\Sigma)} [\dir (\sigma(h)) + \dir (\sigma(-h))] \\
    &\leq (\alpha^2 + \beta^2) \EE_{h \sim N(\mathbf{0}_n,\Sigma)} [\dir (h)].
\end{aligned}
\end{equation*}
\end{proof}

\begin{lemma}\label{relu_oversmoothing_theorem}
Under the NNGP correspondence approximation, suppose that the activation function $\sigma$ is $(\alpha,\beta)$-ReLU in Definition \ref{relu_like_def}. If
\begin{equation*}
    \sigma^2_w < \frac{2}{\lambda^2(\alpha^2 + \beta^2)},
\end{equation*}
then we have
\begin{equation*}
    \EE_{h \sim N(\mathbf{0}_n,\Sigma^{(l)})} [\dir(h)] = O \rbr{ \rbr{ \frac{\lambda^2 \sigma_w^2 (\alpha^2 + \beta^2)}{2} }^l }, \quad \text{as } l \to \infty,
\end{equation*}
where $\lambda$ is the second largest non-one absolute eigenvalue of $\hat{A}$, i.e.,
\begin{equation*}
\lambda = \max_{i \in [n], \lambda_i \neq 1} |\lambda_i|.
\end{equation*}
\end{lemma}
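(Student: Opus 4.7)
}
The plan is to establish a one-step contraction of the form
$$\EE_{h \sim N(\mathbf{0}_n,\Sigma^{(l+1)})}[\dir(h)] \;\leq\; \frac{\lambda^2 \sigma_w^2 (\alpha^2+\beta^2)}{2} \cdot \EE_{\tilde h \sim N(\mathbf{0}_n,\Sigma^{(l)})}[\dir(\tilde h)],$$
and then iterate it. Geometric decay in $l$ follows immediately, yielding the claimed $O\!\big(((\lambda^2 \sigma_w^2 (\alpha^2+\beta^2))/2)^l\big)$ bound (the hypothesis $\sigma_w^2 < 2/(\lambda^2(\alpha^2+\beta^2))$ ensures the ratio is strictly below $1$, so the big-$O$ statement is non-trivial).

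First, I would record the key observation that links Dirichlet energy to covariance: for any centered random vector $Y \in \RR^n$ with $\Cov(Y)=K$, one has $\EE[\dir(Y)] = \tr(\hat L K)$, since $\dir(Y)=Y^\top \hat L Y$ and $\hat L = I - \hat A$. In particular, the expectation depends \emph{only} on $K$, not on the underlying distribution. Next, I would unpack the NNGP recursion (\ref{NNGP_equ}) for $\Sigma^{(l+1)} = \sigma_w^2 \hat A\, G(\Sigma^{(l)})\, \hat A$ by introducing the auxiliary random vector $Y^{(l+1)} := \sigma_w \hat A\, \sigma(\tilde h^{(l)})$ with $\tilde h^{(l)} \sim N(\mathbf{0}_n,\Sigma^{(l)})$. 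By construction, $\Cov(Y^{(l+1)}) = \Sigma^{(l+1)}$, so
$$\EE_{h \sim N(\mathbf{0}_n,\Sigma^{(l+1)})}[\dir(h)] \;=\; \EE\!\left[\dir(Y^{(l+1)})\right] \;=\; \sigma_w^2\, \EE_{\tilde h^{(l)}}\!\left[\dir\!\big(\hat A\, \sigma(\tilde h^{(l)})\big)\right].$$

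Then I would apply Lemma \ref{dirichlet_converge_lem} pointwise inside the expectation: for each realization of $\tilde h^{(l)}$, $\dir(\hat A\, \sigma(\tilde h^{(l)})) \leq \lambda^2\, \dir(\sigma(\tilde h^{(l)}))$. Taking expectations preserves the inequality, giving
$$\EE_{h \sim N(\mathbf{0}_n,\Sigma^{(l+1)})}[\dir(h)] \;\leq\; \sigma_w^2 \lambda^2\, \EE_{\tilde h^{(l)}}\!\left[\dir\!\big(\sigma(\tilde h^{(l)})\big)\right].$$
Now Lemma \ref{expectation_dirichlet_converge_lem}, specialized to $\Sigma = \Sigma^{(l)}$, bounds the right-hand side by $\frac{\alpha^2+\beta^2}{2}\,\EE[\dir(\tilde h^{(l)})]$, completing the one-step contraction.

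Finally, I would iterate the contraction from $l=1$ down to produce $\EE[\dir(h^{(l)})] \le r^{l-1}\, \EE[\dir(h^{(1)})]$ with $r := \sigma_w^2 \lambda^2(\alpha^2+\beta^2)/2 < 1$, and observe that $\EE[\dir(h^{(1)})]$ is a finite constant (determined by $\hat A$ and $X$ via $\Sigma^{(1)} = \sigma_w^2 \hat A X X^\top \hat A / d_0$), so the big-$O$ conclusion follows. The main obstacle that I anticipate is the rigorous justification of step two: one must be careful that $\sigma(\tilde h^{(l)})$ is \emph{not} Gaussian, so we cannot substitute it directly into an NNGP formula—what saves us is the covariance-only dependence of $\EE[\dir(\cdot)]$, which allows us to work with the non-Gaussian surrogate $Y^{(l+1)}$ while still computing the correct expectation with respect to $N(\mathbf{0}_n,\Sigma^{(l+1)})$. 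Everything else is straightforward combination of the two preparatory lemmas.
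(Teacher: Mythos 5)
Your strategy—establish a one-step contraction via Lemmas \ref{dirichlet_converge_lem} and \ref{expectation_dirichlet_converge_lem}, then iterate—is exactly the paper's. The paper arrives at the identity
$\EE_{h \sim N(\mathbf{0}_n,\Sigma^{(l+1)})}[\dir(h)] = \sigma_w^2\,\EE_{\tilde h \sim N(\mathbf{0}_n,\Sigma^{(l)})}\bigl[\dir\bigl(\hat A\,\sigma(\tilde h)\bigr)\bigr]$
by direct trace manipulation on $\tr(\hat L\Sigma^{(l+1)})$ using the NNGP recursion; you introduce a surrogate random vector $Y^{(l+1)}=\sigma_w\hat A\,\sigma(\tilde h^{(l)})$ instead. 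These are the same argument packaged differently.

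There is, however, one small but real error in how you justify that step. For an $(\alpha,\beta)$-ReLU, $\sigma(\tilde h^{(l)})$ is \emph{not} centered (indeed $\EE[\sigma(\tilde h_i)]>0$ for standard ReLU whenever $\Sigma^{(l)}_{ii}>0$), so $Y^{(l+1)}$ is not centered and $\Cov(Y^{(l+1)})\neq\Sigma^{(l+1)}$. What actually matches is the \emph{second-moment} matrix: $\EE\bigl[Y^{(l+1)}Y^{(l+1)\top}\bigr]=\sigma_w^2\hat A\,\EE[\sigma(\tilde h)\sigma(\tilde h)^\top]\,\hat A=\sigma_w^2\hat A\,G(\Sigma^{(l)})\,\hat A=\Sigma^{(l+1)}$. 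Correspondingly, the identity you should invoke is $\EE[\dir(Y)]=\tr\bigl(\hat L\,\EE[YY^\top]\bigr)$, which holds for \emph{any} $Y$ (centered or not), since $\dir(Y)=\tr(\hat L YY^\top)$. With that substitution of ``second moment'' for ``covariance'' and the removal of the centering hypothesis, the claimed equality $\EE_{h\sim N(\mathbf{0}_n,\Sigma^{(l+1)})}[\dir(h)]=\EE[\dir(Y^{(l+1)})]$ is rigorous, and the rest of your argument goes through as stated.
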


\begin{proof}
For any positive semi-definite matrix $\Sigma \in \cS$ and any $n$-dimensional Gaussian random variable $h \sim N(\mathbf{0}_n, \Sigma)$, we have
\begin{equation*}
\begin{aligned}
    \EE_{h \sim N(\mathbf{0}_n,\Sigma)}[\dir (h)] = \EE_{h \sim N(\mathbf{0}_n,\Sigma)}[\tr (h^{T} \hat{L} h) ] = \EE_{h \sim N(\mathbf{0}_n,\Sigma)}[\tr (\hat{L} h h^{T})] = \tr (\hat{L} \Sigma).
\end{aligned}
\end{equation*}
Then according the NNGP correspondence formula (\ref{NNGP_equ}) in Proposition \ref{NNGP_proposition}, for any $l \in \NN$, we have
\begin{equation}\label{relu_thm_proof_recursive_equ}
\begin{aligned}
    &\quad \EE_{h \sim N(\mathbf{0}_n,\Sigma^{(l+1)})}[\dir (h)] = \tr (\hat{L} \Sigma^{(l+1)}) \\
    &= \sigma_w^2 \tr(\hat{L} \hat{A} G(\Sigma^{(l)}) \hat{A}) = \sigma_w^2 \tr \rbr{ \hat{L} \hat{A} \cdot \EE_{h \sim N(\mathbf{0}_n,\Sigma^{(l)})} [\sigma(h) \sigma(h)^T] \cdot \hat{A} } \\
    &= \sigma_w^2 \EE_{h \sim N(\mathbf{0}_n,\Sigma^{(l)})} \sbr{\tr \rbr{ \hat{L} \hat{A} \sigma(h) \sigma(h)^T \hat{A}} } = \sigma_w^2 \EE_{h \sim N(\mathbf{0}_n,\Sigma^{(l)})} \sbr{\tr \rbr{ \sigma(h)^T \hat{A} \hat{L} \hat{A} \sigma(h) } } \\
    &= \sigma_w^2 \EE_{h \sim N(\mathbf{0}_n,\Sigma^{(l)})} \sbr{ \dir \rbr{ \hat{A} \sigma(h) } }.
\end{aligned}
\end{equation}

By Lemma \ref{dirichlet_converge_lem} and Lemma \ref{expectation_dirichlet_converge_lem}, we get
\begin{equation}\label{relu_thm_proof_recursive_equ1}
\begin{aligned}
    \text{RHS of } \eqref{relu_thm_proof_recursive_equ} &\leq \lambda^2 \sigma_w^2 \cdot \EE_{h \sim N(\mathbf{0}_n,\Sigma^{(l)})} [\dir (\sigma(h))] \leq \frac{\lambda^2 \sigma_w^2 (\alpha^2 + \beta^2)}{2} \cdot \EE_{h \sim N(\mathbf{0}_n,\Sigma^{(l)})} [\dir (h)].
\end{aligned}
\end{equation}

Thus, combining \eqref{relu_thm_proof_recursive_equ} and \eqref{relu_thm_proof_recursive_equ1}, by induction, we have
\begin{equation*}
    \EE_{h \sim N(\mathbf{0}_n,\Sigma^{(l)})} [\dir(h)] = O \rbr{ \rbr{ \frac{\lambda^2 \sigma_w^2 (\alpha^2 + \beta^2)}{2} }^l }, \quad \text{as } l \to \infty.
\end{equation*}
\end{proof}

\begin{proof}[Proof of Theorem \ref{relu_forward_theorem_appendix} (the generalized version of Theorem \ref{relu_forward_theorem})]
First of all, we will prove \textbf{part 2} of this theorem. For any positive semi-definite matrix $\Sigma \in \cS$, we have
\begin{equation*}
\begin{aligned}
    \EE_{h \sim N(\mathbf{0}_n, \Sigma)} [\|h\|^2] = \EE_{h \sim N(\mathbf{0}_n, \Sigma)}[\tr (h^T h)] = \EE_{h \sim N(\mathbf{0}_n, \Sigma)}[\tr(h h^T)] = \tr(\Sigma).
\end{aligned}
\end{equation*}
For this reason, we only need to focus on $\{\tr(\Sigma^{(l)})\}_{l=1}^{\infty}$ in the following proof.

We will show that $\{\tr(\Sigma^{(l)})\}_{l=1}^{\infty}$ is a decreasing sequence if $\sigma_w \leq 2 / (\alpha^2 + \beta^2)$. By Lemma \ref{sig_l1_l_lem 0505}, we have
\begin{equation}\label{05050201}
    \tr (\Sigma^{(l+1)}) \leq \sigma_w^2 \tr( G(\Sigma^{(l)}) ).
\end{equation}
When the activation function $\sigma$ is $(\alpha, \beta)$-ReLU, for any $c \in \RR_{+}$, it holds that
\begin{equation*}
\begin{aligned}
    \EE_{Z \sim N(0,1)} [\sigma(c Z)^2] &= \EE_{Z \sim N(0,1)} [\alpha^2 c^2 Z^2 \mathds{1}_{\{Z > 0\}}] + \EE_{Z \sim N(0,1)} [\beta^2 c^2 Z^2 \mathds{1}_{\{Z \leq 0\}}] \\
    &= \frac{\alpha^2 + \beta^2}{2} \cdot \EE_{Z \sim N(0,1)} [c^2 Z^2].
\end{aligned}
\end{equation*}
Accordingly, for any positive semi-definite matrix $\Sigma \in \cS$ and $i \in [n]$, we have
\begin{equation}\label{05050202}
\begin{aligned}
    G(\Sigma)_{ii} &= \EE_{h \sim N(\mathbf{0}_n, \Sigma)} [\sigma(h_i)^2] = \EE_{Z \sim N(0,1)} \sbr{ \sigma(\sqrt{\Sigma_{ii}} Z)^2 } \\
    &= \frac{\alpha^2 + \beta^2}{2} \cdot  \EE_{Z \sim N(0,1)} \sbr{ \Sigma_{ii} Z^2 } = \frac{\alpha^2 + \beta^2}{2} \cdot \Sigma_{ii}.
\end{aligned}
\end{equation}
Combining \eqref{05050201} and \eqref{05050202}, we get
\begin{equation*}
    \tr (\Sigma^{(l+1)}) \leq \frac{\sigma_w^2 (\alpha^2 + \beta^2)}{2} \tr (\Sigma^{(l)}).
\end{equation*}
Thus, we have shown that $\{ \tr(\Sigma^{(l)} )\}_{l=1}^{\infty}$ is a decreasing sequence if $\sigma_w \leq 2 / (\alpha^2 + \beta^2)$. In addition, if $\sigma_w < 2 / (\alpha^2 + \beta^2)$, we get
\begin{equation}\label{05171126}
    \tr (\Sigma^{(L)}) \leq \rbr{ \frac{\sigma_w^2 (\alpha^2 + \beta^2)}{2} }^{L-1} \tr (\Sigma^{(1)}).
\end{equation}
By Proposition \ref{NNGP_proposition}, we have
\begin{equation}\label{05171127}
    \tr (\Sigma^{(1)}) = \frac{\sigma_w^2}{d_0} \tr(\hat{A} XX^T \hat{A}) = \frac{\sigma_w^2}{d_0} \sum_{k = 1}^{d_0} \tr (\hat{A} X_{:,k} X^T_{:,k} \hat{A}) = \frac{\sigma_w^2}{d_0} \sum_{k = 1}^{d_0} \|\hat{A} X_{:,k}\|^2
\end{equation}
Since all the eigenvalues of $\hat{A}$ belong to $(-1,1]$ by Propositon \ref{eigenval_hatA_proposition}, we get
\begin{equation}\label{05171128}
    \text{RHS of \eqref{05171127} } \leq \frac{\sigma_w^2}{d_0} \sum_{k = 1}^{d_0} \|X_{:,k}\|^2 = \frac{\sigma_w^2}{d_0} \tr(XX^T).
\end{equation}
Combining \eqref{05171126}, \eqref{05171127}, and \eqref{05171128}, we have
\begin{equation*}
    \tr (\Sigma^{(L)}) \leq \frac{\sigma_w^2}{d_0} \cdot \rbr{ \frac{\sigma_w^2 (\alpha^2 + \beta^2)}{2} }^{L-1} \tr(XX^T).
\end{equation*}
Thus, the forward propagation metric at the $L$-th layer satisfies
\begin{equation*}
\begin{aligned}
    \EE_{H \sim N(\mathbf{0}_n, \Sigma^{(L)})} \sbr{ \frac{\|H\|^2_{\rm F}}{\|X\|^2_{\rm F}} } &= \frac{C}{\tr(XX^T)} \cdot \EE_{h \sim N(\mathbf{0}_n, \Sigma^{(L)})} [\|h\|^2] = \frac{C}{\tr(XX^T)} \tr (\Sigma^{(L)}) \\
    &\leq \frac{C \sigma_w^2}{d_0} \cdot \rbr{ \frac{\sigma_w^{2} (\alpha^2 + \beta^2)}{2} }^{L-1} \\
    &= \frac{2C}{(\alpha^2 + \beta^2) d_0} \cdot \rbr{ \frac{\sigma_w^{2} (\alpha^2 + \beta^2)}{2} }^{L}.
\end{aligned} 
\end{equation*}
Then we have completed part 2 of this theorem. If $\sigma$ is ReLU activation function, i.e., $(1,0)$-ReLU. If $\sigma < 2 = \frac{2}{1^2 + 0^2}$, we have
\begin{equation*}
    \EE_{H \sim N(\mathbf{0}_n, \Sigma^{(L)})} \sbr{ \frac{\|H\|^2_{\rm F}}{\|X\|^2_{\rm F}} } = \frac{2C}{(1^2 + 0^2) d_0} \cdot \rbr{ \frac{\sigma_w^{2} (1^2 + 0^2)}{2} }^{L} = \frac{2C}{d_0} \cdot \rbr{ \frac{\sigma_w^2 }{2} }^L,
\end{equation*}
which coincides with \textbf{part 2 in Theorem \ref{relu_forward_theorem}}.

Next, we will prove \textbf{part 1} of this theorem. Let's study the case when $\sigma_w^2 = 2 / (\alpha^2 + \beta^2)$. Suppose that
\begin{equation*}
    \lim_{l \to \infty} \tr(\Sigma^{(l)}) = \delta_0.
\end{equation*}
If $\delta_0 = 0$, then we have completed the first part of this theorem by getting
\begin{equation*}
\begin{aligned}
    \lim_{L \to \infty} \EE_{H \sim N(\mathbf{0}_n, \Sigma^{(L)})}[\|H\|^2_{\rm F} / \|X\|^2_{\rm F}] &= \lim_{L \to \infty} \frac{C}{\|X\|^2_{\rm F}} \cdot \EE_{h \sim N(\mathbf{0}_n, \Sigma^{(L)})}[\|h\|^2] \\
    &= \frac{C}{\|X\|^2_{\rm F}} \cdot\lim_{L \to \infty} \tr(\Sigma^{(L)}) = 0.
\end{aligned}
\end{equation*}

Now we study the case when $\delta_0 > 0$. In order to show part 1 of the theorem, we only need to demonstrate that
\begin{equation*}
     \lim_{L \to \infty} \EE_{H \sim N(\mathbf{0}_n, \Sigma^{(L)})} \sbr{ \frac{\dir (H)}{ \|H\|^2_{\rm F} } } = 0.
\end{equation*}

Given any fixed $\epsilon > 0$, we have
\begin{equation}\label{05050300}
\begin{aligned}
    &\ \quad \EE_{H \sim N(\mathbf{0}_n, \Sigma^{(L)})} \sbr{ \frac{\dir (H)}{ \|H\|^2_{\rm F} } } \\
    &= \EE_{H \sim N(\mathbf{0}_n, \Sigma^{(L)})} \sbr{ \frac{\dir (H)}{ \|H\|^2_{\rm F} } \mathds{1}_{\{ \|H\|_{\rm F} \geq \epsilon \}}} + \EE_{H \sim N(\mathbf{0}_n, \Sigma^{(L)})} \sbr{ \frac{\dir (H)}{ \|H\|^2_{\rm F} }  \mathds{1}_{\{ \|H\|_{\rm F} \leq \epsilon \}}   }.
\end{aligned}
\end{equation}
From Lemma \ref{eigenval_hatL_lem}, it holds that $\dir(H) / \|H\|^2_{\rm F} \leq 2$, so we get
\begin{equation}\label{05171417}
\begin{aligned}
    \text{RHS of \eqref{05050300} } &\leq \frac{1}{\epsilon^2} \cdot \EE_{H \sim N(\mathbf{0}_n, \Sigma^{(L)})} \sbr{\dir (H) \mathds{1}_{\{ \|H\|_{\rm F} \geq \epsilon \}} } + 2 \cdot \PP_{H \sim N(\mathbf{0}_n, \Sigma^{(L)})} \sbr{ \|H\|_{\rm F} \leq \epsilon } \\
    &\leq \frac{1}{\epsilon^2} \cdot \EE_{H \sim N(\mathbf{0}_n, \Sigma^{(L)})} \sbr{\dir (H)} + 2 \cdot \PP_{H \sim N(\mathbf{0}_n, \Sigma^{(L)})} \sbr{ \|H\|_{\rm F} \leq \epsilon }.
\end{aligned}
\end{equation}

For any $L \geq 1$, there exists $i \in [n]$, such that $\Sigma^{(L)}_{ii} \geq \tr(\Sigma^{(L)}) / n$. Then for any $n \times C$ random matrix $H \sim N(\mathbf{0}_n, \Sigma^{(L)})$, we have $H_{i,1} \sim N(0, \Sigma^{(L)}_{ii})$. For this reason, we have
\begin{equation}\label{05171418}
\begin{aligned}
    \PP_{H \sim N(\mathbf{0}_n, \Sigma^{(L)})} \sbr{\|H\|_{\rm F} \leq \epsilon} &\leq \PP_{H \sim N(\mathbf{0}_n, \Sigma^{(L)})} \sbr{ |H_{i,1}| \leq \epsilon} = \PP_{Z \sim N(0,1)} \sbr{ |Z| \leq \frac{\epsilon}{\sqrt{\Sigma_{ii}} } } \\
    &\leq \PP_{Z \sim N(0,1)} \sbr{ |Z| \leq  \epsilon \cdot \sqrt{ \frac{n}{\tr (\Sigma^{(L)})}} } \\
    &= 2 \Phi \rbr{ \epsilon \cdot \sqrt{ \frac{n}{\tr (\Sigma^{(L)})}} } - 1,
\end{aligned}
\end{equation}
where $\Phi(x) = \PP_{Z \sim N(0,1)} [Z \leq x]$ denotes the cumulative distribution function of the standard normal distribution $N(0,1)$.

Combining \eqref{05050300}, \eqref{05171417}, and \eqref{05171418}, we get
\begin{equation*}\label{05191856}
    \EE_{H \sim N(\mathbf{0}_n, \Sigma^{(L)})} \sbr{ \frac{\dir (H)}{ \|H\|^2_{\rm F} } } \leq \frac{1}{\epsilon^2} \cdot \EE_{H \sim N(\mathbf{0}_n, \Sigma^{(L)})} \sbr{\dir (H)} + 4 \Phi \rbr{ \epsilon \cdot \sqrt{ \frac{n}{\tr (\Sigma^{(L)})}} } - 2,
\end{equation*}
for any $L \geq 1$.

Since
\begin{equation*}
    \sigma_w^2 = \frac{2}{\alpha^2 + \beta^2} < \frac{2}{\lambda^2(\alpha^2 + \beta^2)},
\end{equation*}
by Lemma \ref{relu_oversmoothing_theorem}, we have
\begin{equation*}
    \lim_{L \to \infty} \EE_{H \sim N(\mathbf{0}_n, \Sigma^{(L)})} [\dir(H)] = 0.
\end{equation*}

We let $L \to \infty$ in \eqref{05050300} and get
\begin{equation}\label{05230021}
\begin{aligned}
    &\ \quad \limsup_{L \to \infty} \EE_{H \sim N(\mathbf{0}_n, \Sigma^{(L)})} \sbr{ \frac{\dir (H)}{ \|H\|^2_{\rm F} } } \\
    &\leq \frac{1}{\epsilon^2} \cdot \limsup_{L \to \infty} \EE_{H \sim N(\mathbf{0}_n, \Sigma^{(L)})} \sbr{ \dir (H) } + 4 \cdot \limsup_{L \to \infty} \Phi \rbr{ \epsilon \cdot \sqrt{ \frac{n}{\tr (\Sigma^{(L)})}} } - 2 \\
    &= \frac{1}{\epsilon^2} \cdot 0 + 4 \Phi \rbr{\epsilon \cdot \sqrt{\frac{n}{\delta_0}}} - 2 = 4 \Phi \rbr{\epsilon \cdot \sqrt{\frac{n}{\delta_0}}} - 2.
\end{aligned}
\end{equation}
Notice that the left hand side of \eqref{05230021} is independent of the choice of $\epsilon$. Since $\Phi$ is a continuous map, we let $\epsilon \to 0^{+}$ and get
\begin{equation*}
    \limsup_{L \to \infty} \EE_{H \sim N(\mathbf{0}_n, \Sigma^{(L)})} \sbr{ \frac{\dir (H)}{ \|H\|^2_{\rm F} } } \leq 4 \Phi(0) - 2 = 0.
\end{equation*}
Therefore, we have
\begin{equation*}
     \lim_{L \to \infty} \EE_{H \sim N(\mathbf{0}_n, \Sigma^{(L)})} \sbr{ \frac{\dir (H)}{ \|H\|^2_{\rm F} } } = 0.
\end{equation*}
\end{proof}

\subsection{Proof of Theorem \ref{relu_odp_thm} (Signal propagation on ReLU-activated vanilla GCN)}
\label{relu_odp_appendix_section}

To facilitate reading and understanding, we restate Theorem \ref{relu_odp_thm} here.

\mythmreluindep*

Similar to Appendix \ref{relu_forward_appendix_section}, to prove Theorem \ref{relu_odp_thm}, we generalize our analysis from the standard ReLU activation to the more general $(\alpha,\beta)$-ReLU activation in Definition \ref{relu_like_def}. Since $(\alpha, \beta) = (1,0)$ recovers the standard ReLU, proving the following Theorem \ref{relu_odp_thm_general} implies Theorem \ref{relu_odp_thm}.

\begin{theorem}[The generalized version of Theorem \ref{relu_odp_thm}]\label{relu_odp_thm_general}
Under the NNGP correspondence approximation, when the activation is $(\alpha, \beta)$-ReLU, the graph embedding variation metric $\ODP{L}(\sigma_w^2) = \EE_{H \sim N(\mathbf{0}_n, \Sigma^{(L)})} [ \dir(H) / \|H\|^2_{\rm F} ]$ is independent of the choice of $\sigma_w^2$.
\end{theorem}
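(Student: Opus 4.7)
The plan is to exploit the positive homogeneity of the $(\alpha,\beta)$-ReLU activation to factor out $\sigma_w$ entirely from the NNGP covariance recursion, and then observe that both $\dir(\cdot)$ and $\|\cdot\|_{\rm F}^2$ are quadratic functionals, so a common scaling factor cancels in the ratio.

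First I would establish the following homogeneity property of the map $G$: for any $c \geq 0$ and any positive semi-definite $\Sigma \in \cS$,
\begin{equation*}
G(c\Sigma) = c\, G(\Sigma).
\end{equation*}
This follows because if $h \sim N(\mathbf{0}_n, c\Sigma)$ then $h \stackrel{d}{=} \sqrt{c}\, h'$ with $h' \sim N(\mathbf{0}_n, \Sigma)$, and because $\sigma$ is $(\alpha,\beta)$-ReLU we have $\sigma(\sqrt{c}\, h') = \sqrt{c}\, \sigma(h')$ entry-wise (positive scalar passes through). Hence $\sigma(h)\sigma(h)^T \stackrel{d}{=} c\, \sigma(h')\sigma(h')^T$, and taking expectations gives the claim.

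Next, I would define a $\sigma_w$-free sequence $\{M_l\}_{l\geq 1}$ by $M_1 := \hat{A} X X^T \hat{A}/d_0$ and $M_{l+1} := \hat{A} G(M_l) \hat{A}$, and prove by induction that $\Sigma^{(l)} = (\sigma_w^2)^l M_l$. The base case $l=1$ is immediate from the NNGP formula in Proposition \ref{NNGP_proposition}. For the inductive step, assuming $\Sigma^{(l)} = (\sigma_w^2)^l M_l$, the homogeneity of $G$ yields
\begin{equation*}
\Sigma^{(l+1)} = \sigma_w^2 \hat{A} G((\sigma_w^2)^l M_l) \hat{A} = \sigma_w^2 \cdot (\sigma_w^2)^l \hat{A} G(M_l) \hat{A} = (\sigma_w^2)^{l+1} M_{l+1}.
\end{equation*}

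Finally, I would translate this factorization to the GEV metric. If each column of $H$ is i.i.d.\ $N(\mathbf{0}_n, \Sigma^{(L)})$, then $H \stackrel{d}{=} \sigma_w^L H'$ where the columns of $H'$ are i.i.d.\ $N(\mathbf{0}_n, M_L)$. Since $\dir(\cdot)$ is a quadratic form (it equals $\tr(H^T \hat{L} H)$) and $\|\cdot\|_{\rm F}^2$ is quadratic, both scale by the same factor $\sigma_w^{2L}$, so
\begin{equation*}
\frac{\dir(H)}{\|H\|_{\rm F}^2} = \frac{\sigma_w^{2L}\dir(H')}{\sigma_w^{2L}\|H'\|_{\rm F}^2} = \frac{\dir(H')}{\|H'\|_{\rm F}^2}.
\end{equation*}
Taking expectations yields $\ODP{L}(\sigma_w^2) = \EE_{H' \sim N(\mathbf{0}_n, M_L)}[\dir(H')/\|H'\|_{\rm F}^2]$, whose right-hand side makes no reference to $\sigma_w^2$.

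I do not anticipate a real obstacle here: the only subtlety is verifying that one may pull the scalar $\sqrt{c}$ through the activation, which is exactly where positive homogeneity of $(\alpha,\beta)$-ReLU is essential (and where the argument would fail for, e.g., $\tanh$). A minor bookkeeping point is to handle the $\|H\|_{\rm F}^2 = 0$ event; but this occurs with probability zero provided $M_L \neq \mathbf{0}_{n\times n}$ (which holds as long as the input $X$ is nonzero, otherwise the claim is trivial), so the ratio is well-defined almost surely and the cancellation inside the expectation is legitimate.
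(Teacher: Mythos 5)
Your proof is correct and follows essentially the same route as the paper: the paper likewise proves by induction that $\Sigma^{(l)}(\sigma_w^2)/\sigma_w^{2l}$ is the same for all choices of $\sigma_w^2$ (which is just your $M_l$ written implicitly), using the positive homogeneity $G(c^2\Sigma)=c^2 G(\Sigma)$ of the $(\alpha,\beta)$-ReLU covariance map, and then concludes via the scale invariance of $\dir(H)/\|H\|_{\rm F}^2$. Your presentation with an explicit $\sigma_w$-free sequence $M_l$ and the remark on the measure-zero event $\|H\|_{\rm F}=0$ is a slightly cleaner packaging of the identical argument.
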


\begin{proof}[Proof of Theorem \ref{relu_odp_thm_general} (the generalized version of Theorem \ref{relu_odp_thm})]
Under the NNGP correspondence approximation, we only need to prove that
\begin{equation}\label{05170222}
    \frac{\Sigma^{(l)}(\sigma_w^2)}{\sigma_w^{2l}} = \frac{\Sigma^{(l)}(\tilde{\sigma}_w^2)}{\tilde{\sigma}_w^{2l}}, \quad \text{for any } l \geq 1 \text{ and } \sigma_w^2, \tilde{\sigma}_w^2 > 0.
\end{equation}

If \eqref{05170222} holds, then $H \sim N(\mathbf{0}_n, \Sigma^{(L)}(\sigma_w^2))$ implies $\tilde{\sigma}_w^{L} H / \sigma_w^{L} \sim N(\mathbf{0}_n, \Sigma^{(L)}(\tilde{\sigma}_w^2))$. In this way, we have
\begin{equation*}
\begin{aligned}
    \EE_{H \sim N(\mathbf{0}_n, \Sigma^{(L)}(\sigma_w^2))} \sbr{\frac{\dir(H)}{\|H\|^2_F}} &= \EE_{H \sim N(\mathbf{0}_n, \Sigma^{(L)}(\sigma_w^2))} \sbr{ \frac{\dir (\tilde{\sigma}_w^L H / \sigma_w^L )}{\|\tilde{\sigma}_w^L H / \sigma_w^L\|^2_{\rm F}} } \\
    &= \EE_{H \sim N(\mathbf{0}_n, \Sigma^{(L)}(\tilde{\sigma}_w^2))} \sbr{\frac{\dir(H)}{\|H\|^2_F}}.
\end{aligned}
\end{equation*}

Now we prove \eqref{05170222} by mathematical induction. When $l=1$, by Proposition \ref{NNGP_proposition}, we have
\begin{equation*}
    \frac{\Sigma^{(1)} (\sigma_w^2)}{\sigma_w^2} = \frac{1}{d_0} \hat{A} X X^T \hat{A} = \frac{\Sigma^{(1)} (\tilde{\sigma}_w^2)}{\tilde{\sigma}_w^2}, \quad \text{for any } \sigma_w^2, \tilde{\sigma}_w^2 > 0.
\end{equation*}
If \eqref{05170222} holds for $L$, we look at the case for $L+1$. Since the activation $\sigma$ is $(\alpha, \beta)$-ReLU, for any $c \in \RR_{+}$, we have $\sigma(cx) = c \sigma(x)$. Recalling the definition of $G$ in Proposition \ref{NNGP_proposition}, for any positive semi-definite matrix $\Sigma \in \cS$, we have
\begin{equation*}
\begin{aligned}
    G(c^2 \Sigma)_{ij} &= \EE_{h \sim N(\mathbf{0}_n, c^2 \Sigma)} [\sigma(h_i) \cdot \sigma(h_j)] = \EE_{h \sim N(\mathbf{0}_n, \Sigma)} [\sigma(c h_i) \cdot \sigma(c h_j)] \\
    &= c^2  \EE_{h \sim N(\mathbf{0}_n, \Sigma)} [\sigma(h_i) \cdot \sigma(h_j)] = c^2 G(\Sigma)_{ij},
\end{aligned}
\end{equation*}
for any $i,j \in [n]$ and $c \in \RR_{+}$. Thus, by Proposition \ref{NNGP_proposition}, we have
\begin{equation*}
\begin{aligned}
    \rbr{ \frac{ \tilde{\sigma}_w^{2} }{\sigma_w^{2}}  }^{L+1} \cdot \Sigma^{(L+1)} (\sigma_w^2) &\overset{(a)}{=} \rbr{ \frac{ \tilde{\sigma}_w^{2} }{\sigma_w^{2}}  }^{L+1} \cdot \sigma_w^2 \hat{A} G \rbr{\Sigma^{(L)} (\sigma_w^2)} \hat{A} \\
    &= \tilde{\sigma}_w^2 \cdot \rbr{ \frac{ \tilde{\sigma}_w^2 }{\sigma_w^2} }^L \cdot \hat{A} G \rbr{ \Sigma^{(L)}(\sigma_w^2) } \hat{A} \\
    &\overset{(b)}{=} \tilde{\sigma}_w^2 \cdot \hat{A} G\rbr{\Sigma^{(L)} (\tilde{\sigma}_w^2 ) } \\
    &\overset{(c)}{=} \Sigma^{(L+1)} (\tilde{\sigma}_w^2),
\end{aligned}
\end{equation*}
where $(a)$ and $(c)$ are due to Proposition \ref{NNGP_proposition} and $(b)$ are from the induction hypothesis.

Therefore, \eqref{05170222} holds for all $L \geq 1$ and we have completed the proof.
\end{proof}

\subsection{Signal propagation on tanh-activated vanilla GCN}
\label{spt_tanh_appendix_section}

\begin{theorem}\label{tanh_forward_theorem}
Under the NNGP correspondence approximation, when the activation function $\sigma$ is tanh, we have
\begin{enumerate}
\vspace{-1mm}
    \item When $\sigma_w^2 = 1$, we have $\lim_{L \to \infty} \FSP{L}(\sigma_w^2) = \lim_{L \to \infty} \EE_{H \sim N(\mathbf{0}_n, \Sigma^{(L)})}[\|H\|^2_{\rm F} / \|X\|^2_{\rm F}] = 0$.
\vspace{-1mm}
    \item When $\sigma_w^2 < 1$, we have $ \FSP{L}(\sigma_w^2) = \EE_{H \sim N(\mathbf{0}_n, \Sigma^{(L)})}[\|H\|_{\rm F}^2 / \|X\|_{\rm F}^2] \leq \frac{C}{d_0} \cdot \sigma_w^{2L}$ for any $L \geq 1$.
\vspace{-1mm}
\end{enumerate}
\end{theorem}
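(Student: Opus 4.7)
The plan is to reduce both parts to a bound on $\tr(\Sigma^{(L)})$. Under the NNGP correspondence in Proposition \ref{NNGP_proposition}, $H \sim N(\mathbf{0}_n, \Sigma^{(L)})$ has $C$ i.i.d.\ Gaussian columns, so $\EE[\|H\|^2_{\rm F}] = C\,\tr(\Sigma^{(L)})$ and $\|X\|^2_{\rm F} = \tr(XX^T)$, giving $\FSP{L}(\sigma_w^2) = C\,\tr(\Sigma^{(L)})/\tr(XX^T)$. The key ingredient is the pointwise inequality $\tanh(x)^2 \leq x^2$, which yields $G(\Sigma)_{ii} = \EE_{h_i \sim N(0,\Sigma_{ii})}[\tanh(h_i)^2] \leq \Sigma_{ii}$ and hence $\tr(G(\Sigma^{(l)})) \leq \tr(\Sigma^{(l)})$. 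Combining this with Lemma \ref{sig_l1_l_lem 0505}, which states $\tr(\Sigma^{(l+1)}) \leq \sigma_w^2\,\tr(G(\Sigma^{(l)}))$, I obtain the master recursion $\tr(\Sigma^{(l+1)}) \leq \sigma_w^2\,\tr(\Sigma^{(l)})$.

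For part 2 with $\sigma_w^2 < 1$, iterating gives $\tr(\Sigma^{(L)}) \leq \sigma_w^{2(L-1)}\,\tr(\Sigma^{(1)})$. Because $\Sigma^{(1)} = (\sigma_w^2/d_0)\hat{A}XX^T\hat{A}$ and all eigenvalues of $\hat{A}$ lie in $(-1,1]$ by Proposition \ref{eigenval_hatA_proposition}, $\tr(\Sigma^{(1)}) \leq (\sigma_w^2/d_0)\,\tr(XX^T)$, so $\tr(\Sigma^{(L)}) \leq (\sigma_w^{2L}/d_0)\,\tr(XX^T)$, giving $\FSP{L}(\sigma_w^2) \leq (C/d_0)\sigma_w^{2L}$. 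This mirrors the corresponding step in the proof of Theorem \ref{relu_forward_theorem_appendix}.

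For part 1 with $\sigma_w^2 = 1$, the recursion only gives that $t_l := \tr(\Sigma^{(l)})$ is non-increasing, so it converges to some $\delta_0 \geq 0$; I argue by contradiction that $\delta_0 = 0$. Define $\phi(t) := \EE_{Z \sim N(0,1)}[\tanh(\sqrt{t}Z)^2]$, so $G(\Sigma^{(l)})_{ii} = \phi(\Sigma^{(l)}_{ii})$; note $\phi$ is continuous with $\phi(t) < t$ strictly for $t > 0$. Suppose $\delta_0 > 0$. Positive semi-definiteness plus monotonicity gives $\Sigma^{(l)}_{ii} \in [0, t_1] =: [0, M]$ for every $l$ and $i$. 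For each $l$ some index achieves $\Sigma^{(l)}_{ii} \geq t_l/n \geq \delta_0/n$, so by pigeonhole some fixed $i^{\star}$ satisfies $\Sigma^{(l_k)}_{i^{\star} i^{\star}} \geq \delta_0/n$ along an infinite subsequence. Since $t \mapsto t - \phi(t)$ is continuous and strictly positive on the compact interval $[\delta_0/n, M]$, it is bounded below by some $\eta > 0$ there, yielding
\[
t_{l_k+1} \leq \sum_{i} \phi(\Sigma^{(l_k)}_{ii}) \leq t_{l_k} - \bigl(\Sigma^{(l_k)}_{i^{\star} i^{\star}} - \phi(\Sigma^{(l_k)}_{i^{\star} i^{\star}})\bigr) \leq t_{l_k} - \eta,
\]
which contradicts $t_l \to \delta_0$. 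Hence $\delta_0 = 0$ and $\FSP{L}(1) \to 0$.

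The main obstacle is precisely this boundary case $\sigma_w^2 = 1$: the iteration is not a strict contraction, and the gap $t - \phi(t)$ vanishes as $t \to 0$, so the pointwise strict inequality cannot be converted into a uniform geometric rate. The crux is the pigeonhole-plus-compactness step that pins some coordinate $i^{\star}$ away from $0$ infinitely often and converts the pointwise strict decrease into a uniform drop $\eta$ incompatible with convergence of the monotone sequence $\{t_l\}$ to a positive limit.
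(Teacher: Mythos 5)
Your proof is correct. Part 2 mirrors the paper's argument: the master recursion $\tr(\Sigma^{(l+1)}) \leq \sigma_w^2\,\tr(\Sigma^{(l)})$ obtained from Lemma \ref{sig_l1_l_lem 0505} and the pointwise contraction $|\tanh x| \leq |x|$, together with $\tr(\Sigma^{(1)}) \leq (\sigma_w^2/d_0)\tr(XX^T)$ from the spectrum of $\hat{A}$ lying in $(-1,1]$.

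For part 1 you take a genuinely different route. The paper first bounds the matrix sequence $\{\Sigma^{(l)}\}$ via $|\Sigma^{(l)}_{ij}|^2 \leq \Sigma^{(l)}_{ii}\Sigma^{(l)}_{jj} \leq \tr(\Sigma^{(1)})^2$, applies Bolzano--Weierstrass to extract a convergent subsequence $\Sigma^{(l_k)} \to \Sigma^*$, passes the inequality $\tr(\Sigma^{(l_{k+1})}) \leq \tr(G(\Sigma^{(l_k)}))$ to the limit using continuity of $G : \RR^{n\times n}\to\RR^{n\times n}$, and then invokes the equality case of $\tr(G(\Sigma)) \leq \tr(\Sigma)$ to conclude $\tr(\Sigma^*) = 0$. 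Your version stays at the scalar level: pigeonhole over the $n$ diagonal indices pins a coordinate $i^\star$ with $\Sigma^{(l_k)}_{i^\star i^\star} \geq \delta_0/n$ infinitely often, and the continuous, strictly positive gap $t \mapsto t - \phi(t)$ on the compact interval $[\delta_0/n,\, t_1]$ gives the uniform drop $t_{l_k+1} \leq t_{l_k} - \eta$, which is incompatible with $t_l \downarrow \delta_0 > 0$. Both are compactness arguments resting on the same strict contraction $\phi(t) < t$ for $t>0$, but yours replaces the matrix-level subsequence extraction and continuity of $G$ on matrices with a finite pigeonhole plus continuity of the one-variable $\phi$, which is slightly more elementary and makes the boundary case $\sigma_w^2=1$ more self-contained.
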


\vspace{4mm}

\begin{lemma}\label{cC_close_lem}
The collection of positive semi-definite matrices $\cS$ defined by \eqref{extended_cC_def} is a closed subset of $\RR^{n \times n}$.
\end{lemma}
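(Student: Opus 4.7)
The plan is to exhibit $\cS$ as an intersection of closed sets, which is the cleanest route. For each fixed $x \in \RR^{n \times 1}$, consider the map $\varphi_x : \RR^{n \times n} \to \RR$ defined by $\varphi_x(\Sigma) = x^T \Sigma x$. This map is a linear (hence continuous) function of the entries of $\Sigma$, since $\varphi_x(\Sigma) = \sum_{i,j} x_i x_j \Sigma_{ij}$. Therefore the preimage $\cS_x := \varphi_x^{-1}([0,\infty))$ is a closed subset of $\RR^{n \times n}$, being the preimage of a closed set under a continuous map.

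Then I would simply observe that, by the very definition of $\cS$ in (\ref{extended_cC_def}),
\begin{equation*}
\cS \;=\; \bigcap_{x \in \RR^{n \times 1}} \cS_x.
\end{equation*}
An arbitrary intersection of closed sets is closed, which concludes the proof.

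An equally short sequential alternative (which I might include as a remark) is: if $\Sigma_k \to \Sigma$ with $\Sigma_k \in \cS$, then for each $x \in \RR^{n}$, the continuity of $\varphi_x$ gives $x^T \Sigma x = \lim_{k\to\infty} x^T \Sigma_k x \geq 0$, so $\Sigma \in \cS$. There is no real obstacle here; the only thing to be careful about is using the continuity of the quadratic form rather than invoking any spectral characterization of positive semi-definiteness, since the latter would require an extra argument about continuity of eigenvalues. The argument above sidesteps that entirely.
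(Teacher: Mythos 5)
Your proposal is correct. The paper proves the lemma by the sequential argument you sketch as an ``alternative remark'': take a convergent sequence $Q^{(k)} \to Q^*$ in $\cS$, and pass to the limit in $x^T Q^{(k)} x \geq 0$ using continuity of the quadratic form. Your primary argument---expressing $\cS = \bigcap_{x} \varphi_x^{-1}([0,\infty))$ as an intersection of preimages of a closed set under continuous linear maps---is a clean, equivalent reformulation that avoids picking a sequence; it buys a slightly more structural statement ($\cS$ is in fact a closed convex cone, being an intersection of closed half-spaces through the origin) at no extra cost, but for the purposes of this lemma the two routes are interchangeable and equally elementary.
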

\begin{proof}
We only need to show that given any convergent sequence $\{Q^{(k)}\}_{k=1}^{\infty} \subset \cS$, its limit also belongs to $\cS$. Suppose that
\begin{equation*}
    \lim_{k \to \infty} Q^{(k)} = Q^*.
\end{equation*}
Since all $\{Q^{(k)}\}_{k=1}^{\infty}$ are positive semi-definite matrices, so given any $x \in \RR^{n \times 1}$, we have
\begin{equation*}
    x^T Q^{(k)} x \geq 0, \quad  \text{for all } k \in \NN.
\end{equation*}
Then we get
\begin{equation*}
    x^T Q^* x = \lim_{k \to \infty} x^T Q^{(k)} x \geq 0.
\end{equation*}
Thus, $Q^*$ also belongs to $\cS$.
\end{proof}

\vspace{4mm}

\begin{lemma}\label{tanhx leq x lem}
When the activation function $\sigma$ is tanh, i.e., $\sigma(x) = (e^x - e^{-x}) / (e^x + e^{-x})$, then we have $|\sigma(x)| \leq |x|$ for any $x \in \RR$. Moreover, the equality holds if and only if $x = 0$.
\end{lemma}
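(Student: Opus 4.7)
The plan is to reduce to the case $x \ge 0$ using the fact that $\tanh$ is an odd function, and then prove the one-sided inequality $\tanh(x) \le x$ for $x \ge 0$ by a standard monotonicity argument.

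First I would define $g(x) := x - \sigma(x) = x - \tanh(x)$ and observe that $g(0) = 0$. Then I would compute the derivative
\[
g'(x) = 1 - \mathrm{sech}^2(x) = \tanh^2(x),
\]
using the identity $1 - \mathrm{sech}^2(x) = \tanh^2(x)$. This derivative is non-negative on all of $\RR$, and vanishes \emph{only} at $x = 0$. Consequently, $g$ is strictly increasing on $[0,\infty)$, so for every $x > 0$ we have $g(x) > g(0) = 0$, i.e.\ $\tanh(x) < x$. Since $\tanh(x) \ge 0$ whenever $x \ge 0$, this yields $|\sigma(x)| = \sigma(x) < x = |x|$ for $x > 0$.

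To handle $x < 0$, I would invoke the oddness of $\tanh$: $\sigma(-x) = -\sigma(x)$, so $|\sigma(x)| = |\sigma(-x)|$ and $|x| = |-x|$. Applying the previous case to $-x > 0$ gives $|\sigma(x)| = |\sigma(-x)| < |-x| = |x|$. Combining the three cases ($x > 0$, $x < 0$, $x = 0$), we conclude $|\sigma(x)| \le |x|$ with equality if and only if $x = 0$.

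There is no real obstacle here; the only thing to be careful about is invoking the correct hyperbolic identity $\tfrac{d}{dx}\tanh(x) = \mathrm{sech}^2(x) = 1 - \tanh^2(x)$ and using the strict positivity of $\tanh^2(x)$ for $x \ne 0$ to upgrade the monotonicity of $g$ to \emph{strict} monotonicity away from the origin, which is what delivers the ``if and only if'' clause.
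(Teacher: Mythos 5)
Your proof is correct and takes essentially the same route as the paper: both reduce to $x>0$ via oddness and study the monotonicity of the difference between the identity and $\tanh$, concluding strict inequality from the strict sign of the derivative away from the origin. The only cosmetic differences are the sign convention (you use $g = x - \tanh x$ where the paper uses $f = \tanh x - x$) and that you invoke the identity $\frac{d}{dx}\tanh x = 1 - \tanh^2 x$ directly, whereas the paper expands the quotient-rule computation.
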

\begin{proof}
It is easy to verify that $\sigma(0) = 0$. Given any $x \geq 0$, we have
\begin{equation*}
    \sigma(-x) = \frac{e^{-x} - e^x}{e^{-x} + e^x} = - \frac{e^{x} - e^{-x}}{e^{-x} + e^x} = -\sigma(x).
\end{equation*}
For this reason, we only need to prove that $|\sigma(x)| < |x|$ for any $x > 0$. In the following part, we will show that $0 < \sigma(x) < x$ when $x > 0$.

We define $f(x) := \sigma(x) - x$ for any $x \geq 0$. Let's consider the derivative of $f$:
\begin{equation*}
\begin{aligned}
    f'(x) &= \frac{d}{dx} \rbr{\frac{e^x - e^{-x}}{e^x + e^{-x}} - x } \\
    &= \frac{1}{(e^x + e^{-x})^2} \sbr{(e^x + e^{-x}) \cdot \frac{d}{dx} (e^x - e^{-x}) - (e^x - e^{-x}) \cdot \frac{d}{dx} (e^x + e^{-x}) } - 1 \\
    &= \frac{(e^x + e^{-x})^2 - (e^x - e^{-x})^2}{(e^x + e^{-x})^2} - 1 \\
    &= \frac{-(e^x - e^{-x})^2}{(e^x + e^{-x})^2}.
\end{aligned}
\end{equation*}
Then if $x > 0$, we have $f'(x) < 0$; if $x = 0$, we have $f'(x) = 0$. Thus, $f(x) = \sigma(x) - x$ is a strictly decreasing function in $[0,+\infty)$. Since $f(0) = \sigma(0) - 0 = 0$, we have
\begin{equation*}
    f(x) = \sigma(x) - x < 0, \quad \text{for any } x > 0.
\end{equation*}

Since $0 < e^x - e^{-x} < e^x + e^{-x}$ for any $x>0$, it holds that 
\begin{equation*}
    \sigma(x) = (e^x - e^{-x}) / (e^x + e^{-x}) > 0, \quad \text{for any } x>0.
\end{equation*}

Therefore, we get that $0 < \sigma(x) < x$ for any $x > 0$ and have completed the proof of this lemma.
\end{proof}

\vspace{3mm}
Now it is time for Theorem \ref{tanh_forward_theorem}.
\vspace{-3mm}
\begin{proof}[Proof of Theorem \ref{tanh_forward_theorem}]
First of all, we will prove \textbf{part 2} of this theorem. For any positive semi-definite matrix $\Sigma \in \cS$, we have
\begin{equation*}
    \EE_{h \sim N(\mathbf{0}_n, \Sigma)} [\|h\|^2] = \EE_{h \sim N(\mathbf{0}_n, \Sigma)}[\tr (h^T h)] = \EE_{h \sim N(\mathbf{0}_n, \Sigma)}[\tr(h h^T)] = \tr(\Sigma).
\end{equation*}
For this reason, we only need to focus on $\{\tr(\Sigma^{(l)})\}_{l=1}^{\infty}$ in the following proof.

We will show that $\{\tr(\Sigma^{(l)})\}_{l=1}^{\infty}$ is a decreasing sequence if $\sigma_w \leq 1$. By Lemma \ref{sig_l1_l_lem 0505}, we have
\begin{equation}\label{05050216}
    \tr (\Sigma^{(l+1)}) \leq \sigma_w^2 \tr( G(\Sigma^{(l)}) ).
\end{equation}

By Lemma \ref{tanhx leq x lem}, we have $|\sigma(x)| \leq |x|$ for any $x \in \RR$. Moreover, the equality holds if and only if $x = 0$. For this reason, given any positive semi-definite matrix $\Sigma \in \cS$, we have
\begin{equation}\label{05050217}
\begin{aligned}
    \tr (G(\Sigma)) &= \sum_{i=1}^n \EE_{h \sim N(\mathbf{0}_n, \Sigma)} [\sigma(h_i)^2] = \sum_{i=1}^n \EE_{Z \sim N(0,1)} \sbr{ \sigma(\sqrt{\Sigma_{ii}} Z)^2 } \\
    &\leq \sum_{i=1}^n \EE_{Z \sim N(0,1)} \sbr{ (\sqrt{\Sigma_{ii}} Z)^2 } = \sum_{i=1}^n \EE_{h \sim N(\mathbf{0}_n, \Sigma)} [h_i^2] = \tr(\Sigma),
\end{aligned}
\end{equation}
and the inequality becomes an equality if and only if $\sqrt{\Sigma_{ii}}Z = 0$ holds $\PP$-a.s. for all $i \in [n]$. Since $Z \sim N(0,1)$ follows a standard normal distribution, it is equivalent to $\Sigma_{ii} = 0$ for all $i \in [n]$, i.e., $\tr(\Sigma) = 0$.

Combining (\ref{05050216}) and (\ref{05050217}), we get
\begin{equation*}\label{05221627}
\begin{aligned}
    \tr(\Sigma^{(l+1)}) \leq \sigma_w^2 \tr(\Sigma^{(l)}).
\end{aligned}
\end{equation*}
Thus, we have shown that $\{ \tr(\Sigma^{(l)} )\}_{l=1}^{\infty}$ is a decreasing sequence if $\sigma_w \leq 1$. In addition, if $\sigma_w < 1$, we get
\begin{equation}\label{05221732}
    \tr (\Sigma^{(L)}) \leq \sigma_w^{2(L-1)} \tr (\Sigma^{(1)}).
\end{equation}
Analogous to the proof of part 2 in Theorem \ref{relu_forward_theorem_appendix} for ReLU-activated model, by Proposition \ref{NNGP_proposition} and Proposition \ref{eigenval_hatA_proposition}, we have
\begin{equation}\label{05221733}
\begin{aligned}
    \tr(\Sigma^{(1)}) = \frac{\sigma_w^2}{d_0} \tr(\hat{A} XX^T \hat{A}) &= \frac{\sigma_w^2}{d_0} \sum_{k=1}^{d_0} \tr(\hat{A} X_{:,k} X_{:,k}^T \hat{A}) \\
    &= \frac{\sigma_w^2}{d_0} \sum_{k=1}^{d_0} \|\hat{A} X_{:,k}\|^2 \leq \frac{\sigma_w^2}{d_0} \sum_{k=1}^{d_0} \|X_{:,k}\|^2 = \frac{\sigma_w^2}{d_0} \|X\|^2_{\rm F}.
\end{aligned}
\end{equation}
Combining \eqref{05221732} and \eqref{05221733}, we have
\begin{equation*}
    \tr(\Sigma^{(1)}) \leq \frac{\sigma_w^{2L}}{d_0} \|X\|^2_{\rm F}.
\end{equation*}
Then we have completed part 2 of the theorem by getting
\begin{equation*}
\begin{aligned}
    \EE_{H \sim N(\mathbf{0}_n, \Sigma^{(L)})} \sbr{ \frac{\|H\|^2_{\rm F}}{\|X\|^2_{\rm F}} } &= \frac{C}{\|X\|^2_{\rm F}} \EE_{h \sim N(\mathbf{0}_n, \Sigma^{(L)})} [\|h\|^2] = \frac{C}{\|X\|^2_{\rm F}} \tr(\Sigma^{(L)}) \\
    &\leq \frac{C}{\|X\|^2_{\rm F}} \cdot \frac{\sigma_w^{2L}}{d_0} \cdot \|X\|^2_{\rm F} \leq \frac{C}{d_0} \cdot \sigma_w^{2L}.
\end{aligned}
\end{equation*}

Next, we will prove \textbf{part 1} of this theorem. Let's study the case when $\sigma_w = 1$.

Since $\Sigma^{(l)}$ is a positive semi-definite matrix for any $l \in \NN$, we have
\begin{equation*}
    |\Sigma^{(l)}_{ij}|^2 \leq \Sigma^{(l)}_{ii} \Sigma^{(l)}_{jj} \leq \tr(\Sigma^{(l)})^2 \leq \tr(\Sigma^{(1)})^2, \quad \text{for all } i,j \in [n].
\end{equation*}
Taking the summation of both sides w.r.t. $i$ and $j$, we get
\begin{equation*}
    \|\Sigma^{(l)}\|^2_F = \sum_{i,j = 1}^n |\Sigma^{(l)}_{ij}|^2 \leq n^2 \tr(\Sigma^{(1)})^2 < \infty.
\end{equation*}
Thus, the matrix sequence $\{\Sigma^{(l)}\}_{l=1}^{\infty}$ lies in
\begin{equation*}
    \cS' = \cS \cap \{\Sigma \in \RR^{n \times n} : \|\Sigma\|_F \leq n \tr(\Sigma^{(1)})\}.
\end{equation*}
By Lemma \ref{cC_close_lem}, $\cS'$ is a bounded and closed subset, i.e., a compact subset, of $\RR^{n \times n}$. By the Bolzano–Weierstrass theorem, there exists a subsequence $\{\Sigma^{(l_k)}\}_{k=1}^{\infty}$ of $\{\Sigma^{(l)}\}_{l=1}^{\infty}$ and $\Sigma^* \in \cS'$ such that
\begin{equation*}
    \lim_{k \to \infty} \Sigma^{(l_k)} = \Sigma^*.
\end{equation*}
Recalling (\ref{05050216}) and that $\{ \tr(\Sigma^{(l)} )\}_{l=1}^{\infty}$ is a decreasing sequence, we have
\begin{equation*}
    \tr(\Sigma^{(l_{k+1})}) \leq \tr(\Sigma^{(l_{k} + 1)}) \leq \tr(G(\Sigma^{(l_k)})).
\end{equation*}
Since $G$ is a continuous function, we let $k \to \infty$ and get
\begin{equation*}
    \tr(\Sigma^*) = \lim_{k \to \infty} \tr(\Sigma^{(l_{k+1})}) \leq \lim_{k \to \infty} \tr(G(\Sigma^{(l_k)})) = \tr(G(\Sigma^*)).
\end{equation*}
According to (\ref{05050217}), we have
\begin{equation*}
    \tr(G(\Sigma^*)) = \tr(\Sigma^*).
\end{equation*}
This implies $\tr(\Sigma^*) = 0$ by (\ref{05050217}).

Then, since $\{ \tr(\Sigma^{(l)} )\}_{l=1}^{\infty}$ is a decreasing sequence, we have
\begin{equation*}
    \lim_{l \to \infty} \EE_{h \sim N(\mathbf{0}_n, \Sigma^{(l)})} [\|h\|^2] = \lim_{l \to \infty} \tr (\Sigma^{(l)}) = \lim_{k \to \infty} \tr (\Sigma^{(l_k)}) = \tr(\Sigma^*) = 0.
\end{equation*}
Consequently, we have
\begin{equation*}
    \lim_{L \to \infty} \EE_{H \sim N(\mathbf{0}_n, \Sigma^{(L)})} \sbr{\frac{\|H\|^2_{\rm F}}{\|X\|^2_{\rm F}} } = \frac{C}{\|X\|^2_{\rm F}} \lim_{L \to \infty} \EE_{h \sim N(\mathbf{0}_n, \Sigma^{(L)})} [\|h\|^2] = 0.
\end{equation*}
\end{proof}

\clearpage

\section{Signal propagation theory for linear ResGCN}
\label{spt_resgcn_appendix_section}

\subsection{NNGP correspondence for linear ResGCN}
\label{NNGP_resgcn_appendix_subsection}

\begin{proposition}[NNGP correspondence for linear ResGCN]
\label{NNGP_proposition_resgcn}
As the width of the hidden layers $d \to \infty$, the $l$-th layer's pre-activation embedding channels $\{H^{(l)}_{:,k}\}_{k \in [d]}$ converge to i.i.d. Gaussian random variables $N(\mathbf{0}_n, \tilde{\Sigma}^{(l)})$ in distribution. The covariance matrices are
\begin{equation}\label{NNGP_res_equ}
\begin{aligned}
    \tilde{\Sigma}^{(1)} &= \frac{\sigma_w^4}{d_0} \hat{A} X X^{T} \hat{A}, \\
    \tilde{\Sigma}^{(l+1)} &= \alpha^2 \sigma_w^2 \hat{A} \tilde{\Sigma}^{(l)} \hat{A} + \beta^2 \tilde{\Sigma}^{(l)}.
\end{aligned}
\end{equation}
Moreover, as $d \to \infty$, the $l$-th layer's post-activation embedding channels $\{X^{(l)}_{:,k}\}_{k \in [d]}$ converge to i.i.d. random variables in distribution. The random variables have mean $\mathbf{0}_n$ and their covariance matrices $\Phi^{(l)}$, which satisfy
\begin{equation}\label{NNGP_res_equ_x}
\begin{aligned}
    \Phi^{(0)} &= \frac{\sigma_w^2}{d_0} XX^T, \\
    \Phi^{(l)} &= \alpha^2 \sigma_w^2 \hat{A} \Phi^{(l-1)} \hat{A} + \beta^2 \Phi^{(l-1)}.
\end{aligned}
\end{equation}
\end{proposition}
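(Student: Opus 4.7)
The plan is to mirror the induction used in the proof of Proposition~\ref{NNGP_proposition}, adapting it to the residual architecture by tracking the joint asymptotic law of the pre- and post-activation matrices together. The base case is immediate: since $X^{(0)} = XW^{(0)}$ and the columns $W^{(0)}_{:,k}$ are i.i.d.\ $N(\vzero_{d_0}, (\sigma_w^2/d_0)I_{d_0})$, the columns of $X^{(0)}$ are exactly i.i.d.\ Gaussian with covariance $\Phi^{(0)}=(\sigma_w^2/d_0)XX^\top$, matching \eqref{NNGP_res_equ_x} at $l=0$; hence also $\tilde\Sigma^{(1)}=\sigma_w^2\hat A\Phi^{(0)}\hat A = (\sigma_w^4/d_0)\hat A XX^\top\hat A$, as claimed in \eqref{NNGP_res_equ}.

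For the inductive step I would first show, assuming the channels of $X^{(l-1)}$ have the desired limit law with covariance $\Phi^{(l-1)}$, that $H^{(l)}=\hat A X^{(l-1)}W^{(l)}$ has channels converging to i.i.d.\ Gaussians with covariance $\tilde\Sigma^{(l)}=\sigma_w^2\hat A\Phi^{(l-1)}\hat A$. This is obtained exactly as in the vanilla proof: introduce the rescaled weights $\omega^{(l)}_{k'k}=\sqrt d\,W^{(l)}_{k'k}$, write $\vecz(H^{(l)})=d^{-1/2}\sum_{k'=1}^{d} Z^{(l)}_{k'}$ with $Z^{(l)}_{k'}=\vecz\bigl(\hat A X^{(l-1)}_{:,k'}\,\omega^{(l)}_{k',:}\bigr)$, verify via the inductive hypothesis and independence of $W^{(l)}$ from all earlier weights that the $Z^{(l)}_{k'}$'s are asymptotically i.i.d.\ with common covariance $I_d\otimes\tilde\Sigma^{(l)}$, and apply the multivariate CLT.

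The genuinely new ingredient arises in passing from $(X^{(l-1)},H^{(l)})$ to $X^{(l)}=\alpha H^{(l)}+\beta X^{(l-1)}$ (recall $\sigma$ is the identity), because the two summands both involve the $k$-th channel of $X^{(l-1)}$ and are therefore not exactly independent. The key observation I plan to exploit is that, within the CLT decomposition above, the single term of $H^{(l)}_{:,k}$ that contains $X^{(l-1)}_{:,k}$ is $d^{-1/2}\hat A X^{(l-1)}_{:,k}\omega^{(l)}_{k,k}$, whose norm is of order $d^{-1/2}$ and hence vanishes in probability as $d\to\infty$, while the remainder (the sum over $k'\neq k$) is exactly independent of $X^{(l-1)}_{:,k}$. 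Combining this with Slutsky's theorem yields the asymptotic joint Gaussianity of $(X^{(l-1)}_{:,k},H^{(l)}_{:,k})$ with zero cross-covariance, so $X^{(l)}_{:,k}$ is asymptotically Gaussian with covariance $\alpha^2\tilde\Sigma^{(l)}+\beta^2\Phi^{(l-1)}=\Phi^{(l)}$, giving \eqref{NNGP_res_equ_x}. The i.i.d.\ property across channels follows from a parallel argument applied to pairs $k\neq k'$.

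Finally, chaining the two deterministic identities $\tilde\Sigma^{(l+1)}=\sigma_w^2\hat A\Phi^{(l)}\hat A$ and $\Phi^{(l)}=\alpha^2\tilde\Sigma^{(l)}+\beta^2\Phi^{(l-1)}$ recovers the pre-activation recursion $\tilde\Sigma^{(l+1)}=\alpha^2\sigma_w^2\hat A\tilde\Sigma^{(l)}\hat A+\beta^2\tilde\Sigma^{(l)}$ stated in \eqref{NNGP_res_equ}. The step I expect to be the main obstacle is the asymptotic-independence argument in the residual step: I will need to quantitatively control $\|X^{(l-1)}_{:,k}\|$ uniformly in $d$ and show that the cross-term identified above vanishes in probability (not merely in expectation), since everything downstream---Gaussianity of $X^{(l)}_{:,k}$ and the vanishing of the $\alpha\beta$ cross-covariance---rests on this single technical ingredient, while the rest of the argument reuses the CLT machinery already set up in the vanilla case.
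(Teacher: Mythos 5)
Your approach matches the paper's own proof: the same induction is used, the pre-activation step reuses the vanilla CLT decomposition $\vecz(H^{(l)})=d^{-1/2}\sum_{k'}Z^{(l)}_{k'}$, and the key residual-layer difficulty is resolved by the same observation that the single cross term $d^{-1/2}\hat A X^{(l-1)}_{:,k}\omega^{(l)}_{kk}$ is $O(d^{-1/2})$ while the remaining terms in $H^{(l)}_{:,k}$ are exactly independent of $X^{(l-1)}_{:,k}$, so $(i)$ and $(ii)$ become asymptotically independent and the covariance recursion $\Phi^{(l)}=\alpha^2\tilde\Sigma^{(l)}+\beta^2\Phi^{(l-1)}$ follows. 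One small overclaim worth noting: you assert that the columns of $X^{(0)}$ are exactly Gaussian and that $(X^{(l-1)}_{:,k},H^{(l)}_{:,k})$ are asymptotically jointly Gaussian, but the initialization class only stipulates mean-zero weights with variance $\sigma_w^2/d_{l-1}$ (not necessarily Gaussian), and correspondingly the proposition only asserts mean and covariance for the post-activation channels $X^{(l)}_{:,k}$ --- Gaussianity of $X^{(l)}$ is neither claimed nor required for the covariance recursion.
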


\begin{proof}[Proof of Proposition \ref{NNGP_proposition_resgcn}]
For $\Phi^{(l)}$, $\tilde{\Sigma}^{(l+1)}$ defined by \eqref{NNGP_res_equ_x} and \eqref{NNGP_res_equ}, it is easy to show that $\tilde{\Sigma}^{(l+1)} = \sigma_w^2 \hat{A} \Phi^{(l)} \hat{A}$. Similar to the proof of Proposition \ref{NNGP_proposition}, We will prove this proposition by mathematical induction.

\textbf{Base case.} When $l=0$, the $k$-th channel of $X^{(0)}$ is
\begin{equation}\label{layer0_kthchannel_res_equ}
    X^{(0)}_{:,k} = X W^{(0)}_{:,k} + \mathbf{1}_n \cdot b^{(0)}_k = X W^{(0)}_{:,k}.
\end{equation}
According to our initialization, the weights $\{W^{(0)}_{:,k}\}_{k \in [d]}$ are i.i.d. random variables, so $\{X^{(0)}_{:,k}\}_{k \in [d]}$ are also i.i.d. random variables. Taking the expectation of \eqref{layer0_kthchannel_res_equ}, we get
\begin{equation*}
\begin{aligned}
    \EE [X^{(0)}_{:,k}] = X \cdot \EE [W^{(0)}_{:,k}] = \mathbf{0}_{n}.
\end{aligned}
\end{equation*}
Calculating the covariance matrix of \eqref{layer0_kthchannel_res_equ}, we have
\begin{equation*}
\begin{aligned}
    \Cov [X^{(0)}_{:,k}] &= \EE[X^{(0)}_{:,k} \cdot X^{(0)T}_{:,k}]  = \EE [X W^{(0)}_{:,k} W^{(0)T}_{:,k} X^{T} ] \\
    &= X \cdot \EE [W^{(0)}_{:,k} W^{(0)T}_{:,k}] \cdot X^{T} = X \rbr{ \frac{\sigma_w^2}{d_0} \cdot I_{d_0} }  X^{T} \\
    &= \frac{\sigma_w^2}{d_0} X X^{T}.
\end{aligned}
\end{equation*}
Thus, if we let $\Phi^{(0)} = \sigma_w^2 XX^T / d_0$, then we have $\{X^{(0)}_{:,k}\}_{k \in [d]}$ are i.i.d. with mean $\mathbf{0}_n$ and covariance matrix $\Phi^{(0)}$.

Now we study the pre-activation embedding $H^{(1)}$. Since the bias term $b^{(1)}$ is initialized to be $\mathbf{0}_d$, we have
\begin{equation*}
    H^{(1)} = \hat{A} X^{(0)} W^{(1)} + \mathbf{1}_n \cdot b^{(1)} = \hat{A} X^{(0)} W^{(1)}.
\end{equation*}

Similar to the proof of Proposition \ref{NNGP_proposition} for vanilla GCN, we vectorize the equation and get
\begin{equation*}
\begin{aligned}
    \vecz(H^{(1)}) = \sum_{k=1}^{d} \vecz \rbr{ \underbrace{ [\hat{A} X^{(0)}_{:,k}] }_{n \times 1} \cdot \underbrace{ W^{(1)}_{k,:} }_{1 \times d } }.
\end{aligned}
\end{equation*}
For brevity, we define
\begin{equation*}
    \omega^{(1)}_{kk'} := \sqrt{d} \cdot W^{(1)}_{kk'}, \quad \text{for all } k, k' \in [d]
\end{equation*}
and
\begin{equation*}
    Z^{(1)}_k := \vecz \Big( [\hat{A} X^{(0)}_{:,k}] \cdot w_{k,:}^{(l+1)} \Big), \text{for all }  k \in [d].
\end{equation*}
Then we get that $\{\omega^{(1)}_{kk'}\}_{k,k' \in [d]}$ are i.i.d. with mean $0$ and variance $\sigma_w^2$, and
\begin{equation*}\label{09271207}
    \vecz(H^{(1)}) = \frac{1}{\sqrt{d}} \sum_{k=1}^d Z^{(1)}_k.
\end{equation*}

Analogous to the proof of Proposition \ref{NNGP_proposition}, $\{Z^{(1)}_k\}_{k \in [d]}$ are i.i.d., $\EE [Z^{(1)}_1] = \mathbf{0}_{nd}$, and
\begin{equation*}
\begin{aligned}
    \Cov [Z^{(1)}_1] &= \EE \sbr{\omega^{(1)T}_{{1},:} \omega^{(1)}_{{1},:}} \otimes \cbr{ \hat{A} \cdot \EE \sbr{X^{(0)}_{:,1} X^{(0)T}_{:,1}} \cdot \hat{A} } \\
    &= \sigma_w^2 I_{d} \otimes \hat{A} \Phi^{(0)} \hat{A} \\
    &= I_{d} \otimes \sigma_w^2 \hat{A} \Phi^{(0)} \hat{A}.
\end{aligned}
\end{equation*}

Since $\tilde{\Sigma}^{(1)} = \sigma_w^4 \hat{A} XX^T \hat{A} / d_0 = \sigma_w^2 \hat{A} \Phi^{(0)} \hat{A}$, applying the central limit theorem, $\vecz(H^{(1)})$ converges to a Gaussian random variable $N(\mathbf{0}_{nd}, I_d \otimes \tilde{\Sigma}^{(1)})$ as $d \to \infty$. Consequently, $\{H^{(1)}_{:,k}\}$ converge to i.i.d. Gaussian random variables $N(\mathbf{0}_n, \tilde{\Sigma}^{(1)})$ in distribution.

\textbf{Induction step.}
Suppose that $\{X^{(l-1)}_{:,k}\}_{k \in [d]}$ converge to i.i.d. random variables with mean $\mathbf{0}_n$ and covariance matrix $\Phi^{(l-1)}$ in distribution. Suppose that $\{H^{(l)}_{:,k}\}_{k \in [d]}$ converge to i.i.d. Gaussian random variables $N(\mathbf{0}_n, \tilde{\Sigma}^{(l)})$ in distribution. Now we look at $X^{(l)}$ first.

For the linear ResGCN at initialization, the post-activation embeddings satisfy
\begin{equation*}
    X^{(l)} = \alpha H^{(l)} + \beta X^{(l-1)} = \alpha \hat{A} X^{(l-1)} W^{(l)} + \beta X^{(l-1)}
\end{equation*}
We take any $k$-th channel $X^{(l)}_{:,k}$ of $X^{(l)}$:
\begin{equation*}
\begin{aligned}
    X^{(l)}_{:,k} &= \alpha H^{(l)}_{:,k} + \beta X^{(l-1)}_{:,k} \\
    &= \alpha \hat{A} X^{(l-1)} W^{(l)}_{:,k} + \beta X^{(l-1)}_{:,k} \\
    &= \alpha \Big(\sum_{k'=1}^d  \hat{A} X^{(l-1)}_{:,k'} W^{(l)}_{k'k} \Big) + \beta X^{(l-1)}_{:,k} \\
    &= \underbrace{\frac{\alpha}{\sqrt{d}} \Big(\sum_{k'=1}^d \hat{A} X^{(l-1)}_{:,k'} \omega^{(l)}_{k'k} \Big)}_{(i)} + \underbrace{\beta X^{(l-1)}_{:,k} }_{(ii)},
\end{aligned}
\end{equation*}
where $\omega^{(l)}_{k'k} := W^{(l)}_{k'k} / \sqrt{d}$ has mean $0$ and variance $\sigma_w^2$, which does not rely on $d$. By the induction hypothesis, $X^{(l-1)}_{:,k'}$ and $X^{(l-1)}_{:,k}$ are independent when $k' \neq k$. Then $(ii)$ is independent of the $k'$-th term $\alpha \hat{A} X^{(l-1)}_{:,k'} \omega^{(l)}_{k'k} / \sqrt{d} $ in $(i)$ when $k' \neq k$. We notice that the correlation between $(i)$'s $k$-th term $\alpha \hat{A} X^{(l-1)}_{:,k} \omega^{(l)}_{kk} / \sqrt{d}$ and $\beta X^{(l-1)}_{:,k}$ goes to $0$ as $d \to \infty$. Thus, we get that $(i)$ and $(ii)$ are asymptotically independent, the expectation
\begin{equation*}
    \EE[X^{(l)}_{:,k}] = \alpha \EE[H^{(l)}_{:,k}] + \beta \EE[X^{(l-1)}_{:,k}] = \mathbf{0}_n,
\end{equation*}
and the covariance matrix
\begin{equation*}
\begin{aligned}
    \Cov[X^{(l)}_{:,k}] &= \alpha^2 \Cov[H^{(l)}_{:,k}] + \beta^2 \Cov[X^{(l-1)}_{:,k}] = \alpha^2 \tilde{\Sigma}^{(l)} + \beta^2 \Phi^{(l-1)} \\
    &= \alpha^2 \sigma_w^2 \hat{A} \Phi^{(l-1)} \hat{A} + \beta^2 \Phi^{(l-1)} \\
    &= \Phi^{(l)}.
\end{aligned}
\end{equation*}
By the induction hypothesis, $\{H^{(l)}_{:,k}\}$ are i.i.d. and $\{X^{(l-1)}_{:,k}\}$ are i.i.d. as $d \to \infty$, so $\{X^{(l)}_{:,k}\}$ are i.i.d..

Next, we look at the pre-activation embedding $H^{(l+1)}$. We have
\begin{equation*}
    H^{(l+1)} = \hat{A} X^{(l)} W^{(l+1)} + \mathbf{1}_n \cdot b^{(l+1)} = \hat{A} X^{(l)} W^{(l+1)}.
\end{equation*}
We also vectorize it and get
\begin{equation*}
    \vecz(H^{(l+1)}) = \sum_{k=1}^d \vecz \big( [\hat{A} X^{(l)}_{:,k}] \cdot W^{(l+1)}_{k,:} \big).
\end{equation*}
Analogous to the proof of base case (or the proof of Proposition \ref{NNGP_proposition}), we can conclude that $\{H^{(l+1)}_{:,k}\}$ converge i.i.d. to $N(\mathbf{0}_n, \sigma_w^2 \hat{A} \Phi^{(l)} \hat{A})$, i.e. $N(\mathbf{0}_n, \tilde{\Sigma}^{(l+1)})$.

\textbf{Conclusion.} By the principle of mathematical induction, we have proven this proposition.
\end{proof}

\subsection{Proof of Theorem \ref{resgcn_forward_theorem} (signal propagation on linear ResGCN)}

To facilitate reading and understanding, we restate Theorem \ref{resgcn_forward_theorem} here.

\mythmres*

\begin{proof}[Proof of part $1$ in Theorem \ref{resgcn_forward_theorem}]

For any positive semi-definite matrix $\Sigma \in \cS$, we have
\begin{equation*}
    \EE_{h \sim N(\mathbf{0}_n, \Sigma)} [\|h\|^2] = \EE_{h \sim N(\mathbf{0}_n, \Sigma)} [\tr(h^T h)] =  \EE_{h \sim N(\mathbf{0}_n, \Sigma)} [\tr(h h^T)] = \tr(\Sigma).
\end{equation*}

Recalling the NNGP correspondence formula for linear ResGCN (\ref{NNGP_res_equ}) in Proposition \ref{NNGP_proposition_resgcn}, we have
\begin{equation}\label{05180038}
\begin{aligned}
    \tilde{\Sigma}^{(1)} &= \frac{\sigma_w^4}{d_0} \hat{A} X X^{T} \hat{A}, \\
    \tilde{\Sigma}^{(l+1)} &= \sigma_w^2 \alpha^2 \hat{A} \tilde{\Sigma}^{(l)} \hat{A} + \beta^2 \tilde{\Sigma}^{(l)}.
\end{aligned}
\end{equation}

By Proposition \ref{eigenval_hatA_proposition}, we can assume that $A = U \Lambda U^T$, where $\Lambda = \diag(\lambda_1, \dots, \lambda_n)$ with $1 = \lambda_1 \geq \dots \geq \lambda_n > -1$ and $U \in \RR^{n \times n}$ is an orthogonal matrix, i.e., $UU^T = U^T U = I_n$. Then from \eqref{05180038}, we get
\begin{equation}
\begin{aligned}
    U^T \tilde{\Sigma}^{(l+1)} U &= \sigma_w^2 \alpha^2 \cdot U^T \hat{A} \tilde{\Sigma}^{(l)} \hat{A} U + \beta^2 \cdot U^T \tilde{\Sigma}^{(l)} U \\
    &= \sigma_w^2 \alpha^2 \cdot \Lambda U^T \tilde{\Sigma}^{(l)} U \Lambda + \beta^2 \cdot U^T \tilde{\Sigma}^{(l)} U.
\end{aligned}
\end{equation}
So for any $i \in [n]$ and $l \in \NN$, we have
\begin{equation*}
\begin{aligned}
    (U^T \tilde{\Sigma}^{(l+1)} U)_{ii} &= \sigma_w^2 \alpha^2 \cdot \lambda_i (U^T \tilde{\Sigma}^{(l)} U)_{ii} \lambda_i + \beta^2 (U^T \tilde{\Sigma}^{(l)} U)_{ii} \\
    &= (\alpha^2 \lambda_{i}^2 \sigma_w^2 + \beta^2) \cdot (U^T \tilde{\Sigma}^{(l)} U)_{ii}.
\end{aligned}
\end{equation*}
Thus, for any $i \in [n]$ and $L \in \NN$, we have
\begin{equation*}
    (U^T \tilde{\Sigma}^{(L)} U)_{ii} = (\alpha^2 \lambda_{i}^2 \sigma_w^2 + \beta^2)^{L-1} \cdot (U^T \tilde{\Sigma}^{(1)} U)_{ii}.
\end{equation*}

According to the assumption on input feature $X$, there exists an eigenvector $u$ of $\hat{A}$ corresponding to the eigenvalue $1$, such that $X^T u \neq \mathbf{0}_{d_0 \times 1}$. Suppose that $u_1, u_2, \dots, u_n \in \RR^{n \times 1}$ are the columns of $U$, then there exists $i \in [n]$ such that $X^T u_i \neq 0$. Otherwise, suppose that $u = \sum_{j=1}^n c_j u_j$ and $X^T u_j = 0$ for any $j \in [n]$, then $X^T u = \sum_{j=1}^n c_j X^T u_j = 0$. Contradiction!

Without loss of generality, we suppose that $A u_1 = u_1$ and $X^T u_1 \neq \mathbf{0}_{d_0 \times 1}$. Then we have
\begin{equation*}
    (U^T \tilde{\Sigma}^{(1)} U)_{11} = \frac{\sigma_w^4}{d_0} \cdot u_1^T \hat{A} XX^T \hat{A} u_1 = \frac{\sigma_w^4}{d_0} \cdot u_1^T XX^T u_1 = \frac{\sigma_w^4}{d_0} \cdot \|X^T u_1\|^2 > 0.
\end{equation*}
It results in
\begin{equation}\label{09271734}
    \tr (\tilde{\Sigma}^{(L)}) = \tr (U^T \tilde{\Sigma}^{(L)} U) \geq (U^T \tilde{\Sigma}^{(L)} U)_{11} = (\alpha^2 \sigma_w^2 + \beta^2)^{L-1} \cdot \frac{\sigma_w^4}{d_0} \|X^T u_1\|^2.
\end{equation}

Therefore, if $\alpha^2 \sigma_w^2 + \beta^2 > 1$, we have
\begin{equation*}
\begin{aligned}
    \lim_{L \to \infty} \FSP{L}(\sigma_w^2) &= \lim_{L \to \infty} \EE_{H^{(L)} \sim N(\mathbf{0}_n, \tilde{\Sigma}^{(L)} )} [\|H^{(L)}\|^2_{\rm F} / \|X\|^2_{\rm F}] \\
    &= \frac{C}{\|X\|^2_{\rm F} } \lim_{L \to \infty} \EE_{h \sim N(\mathbf{0}_n, \tilde{\Sigma}^{(L)} )} [\|h\|^2] \\
    &= \frac{C}{\|X\|^2_F} \lim_{L \to \infty} \tr (\tilde{\Sigma}^{(L)}) = +\infty.
\end{aligned}
\end{equation*}
\end{proof}

\begin{proof}[Proof of part $2$ in Theorem \ref{resgcn_forward_theorem}]
For any positive semi-definite matrix $\Sigma \in \cS$, we have
\begin{equation*}
    \EE_{h \sim N(\mathbf{0}_n, \Sigma)} [\dir (h)] = \EE_{h \sim N(\mathbf{0}_n, \Sigma)} [\tr(h^T \hat{L} h)] =  \EE_{h \sim N(\mathbf{0}_n, \Sigma)} [\tr(\hat{L} h h^T)] = \tr(\hat{L} \Sigma).
\end{equation*}
So when we want to study $\EE_{h \sim N(\mathbf{0}_n, \Sigma)} [\dir(h)]$, we only need to look at $\tr (\hat{L} \Sigma )$ in the following of the proof.

Since $\hat{A} \hat{L} = \hat{A} (I_n - \hat{A}) = \hat{A} - \hat{A}^2 = (I_n - \hat{A}) \hat{A} = \hat{L} \hat{A}$, we multiply $\hat{L}$ on both sides of the second equation in \eqref{05180038} and get
\begin{equation*}
\begin{aligned}
    \hat{L} \tilde{\Sigma}^{(l+1)} &= \sigma_w^2 \alpha^2 \cdot \hat{L} \hat{A} \tilde{\Sigma}^{(l)} \hat{A} + \beta^2 \hat{L} \tilde{\Sigma}^{(l)} \\
    &= \sigma_w^2 \alpha^2 \cdot \hat{A} \hat{L} \tilde{\Sigma}^{(l)} \hat{A} + \beta^2 \hat{L} \tilde{\Sigma}^{(l)}.
\end{aligned}
\end{equation*}
Then for any $i \in [n]$ and $l \in \NN$, we have
\begin{equation*}
\begin{aligned}
    (U^T \hat{L} \tilde{\Sigma}^{(l+1)} U)_{ii} &= \sigma_w^2 \alpha^2 \cdot \lambda_i (U^T \hat{L} \tilde{\Sigma}^{(l)} U)_{ii} \lambda_i + \beta^2 \cdot (U^T \hat{L} \tilde{\Sigma}^{(l)} U)_{ii} \\
    &= (\alpha^2 \lambda_i^2 \sigma_w^2 + \beta^2) \cdot (U^T \hat{L} \tilde{\Sigma}^{(l)} U)_{ii}.
\end{aligned}
\end{equation*}
Thus, for any $i \in [n]$ and $L \in \NN$, we have
\begin{equation}\label{05180253}
    (U^T \hat{L} \tilde{\Sigma}^{(L)} U)_{ii} = (\alpha^2 \sigma_w^2 \lambda_i^2 + \beta^2)^{L-1} \cdot (U^T \hat{L} \tilde{\Sigma}^{(1)} U)_{ii}
\end{equation}

Since $U^T \hat{L} U = U^T (I_n - \hat{A}) U = I_n - \Lambda$, we get
\begin{equation*}
    U^T \hat{L} \tilde{\Sigma}^{(1)} U = (I_n - \Lambda) U^T \tilde{\Sigma}^{(1)} U
\end{equation*}
We denote
\begin{equation*}
    r_i = (U^T \tilde{\Sigma}^{(1)} U)_{ii}, \quad \text{for any } i \in [n].
\end{equation*}
Then by \eqref{05180253}, we have
\begin{equation*}
    (U^T \hat{L} \tilde{\Sigma}^{(L)} U)_{ii} = (\alpha^2 \sigma_w^2 \lambda_i^2 + \beta^2)^{L-1} \cdot (1 - \lambda_i) r_i,
\end{equation*}
From Proposition \ref{eigenval_hatA_proposition}, we have
\begin{equation*}
\begin{aligned}
    (U^T \hat{L} \tilde{\Sigma}^{(L)} U)_{ii} \leq (\alpha^2 \sigma_w^2 \lambda^2 + \beta^2)^L \cdot (1 - \lambda_i) r_i, &\quad \text{if } \lambda_i \in (-1,1); \\
    (U^T \hat{L} \tilde{\Sigma}^{(L)} U)_{ii} = 0 = (\alpha^2 \sigma_w^2 \lambda^2 + \beta^2)^L \cdot (1 - \lambda_i) r_i, &\quad \text{if } \lambda_i = 1,
\end{aligned}
\end{equation*}
where $\lambda = \max_{\lambda_i \neq 1} |\lambda_i| \in [0,1)$. Thus, we get
\begin{equation*}
\begin{aligned}
    \tr (\hat{L} \tilde{\Sigma}^{(L)}) = \tr (U^T \hat{L} \tilde{\Sigma}^{(L)} U) \leq (\alpha^2 \sigma_w^2 \lambda^2 + \beta^2)^{L-1} \cdot \sum_{i=1}^n (1 - \lambda_i) r_i.
\end{aligned}
\end{equation*}
We conclude that
\begin{equation*}
\begin{aligned}
    \EE_{H^{(L)} \sim N(\mathbf{0}_n, \tilde{\Sigma}^{(L)})} [\dir(H^{(L)})] &= C \cdot \EE_{h \sim N(\mathbf{0}_n, \tilde{\Sigma}^{(L)})} [\dir(h)] = C \cdot \tr(\hat{L} \tilde{\Sigma}^{(L)}) \\
    &\leq C (\alpha^2 \sigma_w^2 \lambda^2 + \beta^2)^{L-1} \cdot \sum_{i=1}^n (1 - \lambda_i) r_i.
\end{aligned}
\end{equation*}
Then we have
\begin{equation*}
    \frac{\EE_{H^{(L)} \sim N(\mathbf{0}_n, \tilde{\Sigma}^{(L)})} [\dir(H^{(L)})] }{(\alpha^2 \sigma_w^2 + \beta^2)^{L-1} } \leq \rbr{C \sum_{i=1}^n (1 - \lambda_i) r_i} \cdot \rbr{ \frac{\alpha^2 \sigma_w^2 \lambda^2 + \beta^2 }{\alpha^2 \sigma_w^2 + \beta^2} }^{L-1}.
\end{equation*}
Since $\alpha^2 \sigma_w^2 + \beta^2 > 1$ and $\alpha \neq 0$ as assumed in the statement of this theorem, we have $(\alpha^2 \sigma_w^2 \lambda^2 + \beta^2 )/(\alpha^2 \sigma_w^2 + \beta^2) \in [0,1)$. So we get that
\begin{equation}\label{05191756}
    \lim_{L \to \infty} \frac{\EE_{H^{(L)} \sim N(\mathbf{0}_n, \tilde{\Sigma}^{(L)})} [\dir(H^{(L)} )] }{(\alpha^2 \sigma_w^2 + \beta^2)^{L}} = 0.
\end{equation}

Recalling \eqref{09271734} in the proof of part 1 for Theorem \ref{resgcn_forward_theorem}, if we define
\begin{equation*}
    \delta_0 = \frac{\sigma_w^4}{d_0} \|X^T u_1\|^2 \quad \text{and} \quad K = \alpha^2 \sigma_w^2 + \beta^2,
\end{equation*}
then given any $L \in \NN$, we have
\begin{equation}\label{05191507}
    \frac{1}{K^{L-1}} \cdot \tr(\tilde{\Sigma}^{(L)}) \geq \delta_0 > 0.
\end{equation}

Similar to the proof of part 2 in Theorem \ref{relu_forward_theorem_appendix}, we have
\begin{equation}\label{05191510}
\begin{aligned}
    &\quad \ \EE_{H \sim N(\mathbf{0}_n, \tilde{\Sigma}^{(L)})} \sbr{\frac{\dir(H)}{\|H\|^2_{\rm F}}} \\
    &= \EE_{H \sim N(\mathbf{0}_n, \tilde{\Sigma}^{(L)})} \sbr{\frac{\dir(H)}{\|H\|^2_{\rm F}} \mathds{1}_{\{ \|H\|^2_{\rm F} >  \epsilon K^{L-1} \} } } + \EE_{H \sim N(\mathbf{0}_n, \tilde{\Sigma}^{(L)})} \sbr{\frac{\dir(H)}{\|H\|^2_{\rm F}} \mathds{1}_{\{ \|H\|^2_{\rm F} \leq  \epsilon K^{L-1} \} } } \\
    &\leq \frac{\EE_{H \sim N(\mathbf{0}_n, \tilde{\Sigma}^{(L)})} [\dir(H)]}{\epsilon K^{L-1}} + 2 \cdot \PP_{H \sim N(\mathbf{0}_n, \tilde{\Sigma}^{(L)})} [\|H\|^2_{\rm F} \leq  \epsilon K^{L-1}].
\end{aligned}
\end{equation}

For any $L \geq 1$, there exists $i \in [n]$, such that $\tilde{\Sigma}^{(L)}_{ii} \geq \tr(\tilde{\Sigma}^{(L)}) / n$. For any $n \times C$ random matrix $H \sim N(\mathbf{0}_n, \tilde{\Sigma}^{(L)})$, it holds that $H_{i,1} \sim N(0, \tilde{\Sigma}^{(L)}_{ii})$. By \eqref{05191507}, we have
\begin{equation}\label{05191511}
\begin{aligned}
    &\quad \ \PP_{H \sim N(\mathbf{0}_n, \tilde{\Sigma}^{(L)})} \sbr{\|H\|^2_{\rm F} \leq \epsilon K^{L-1} } \leq \PP_{H \sim N(\mathbf{0}_n, \tilde{\Sigma}^{(L)})} \sbr{ H_{i,1}^2 \leq \epsilon K^{L-1} } \\
    &= \PP_{Z \sim N(0,1)} \sbr{ Z^2 \leq \frac{\epsilon K^{L-1} }{\tilde{\Sigma}_{ii}^{(L)} } } \leq \PP_{Z \sim N(0,1)} \sbr{ Z^2 \leq \frac{\epsilon n K^{L-1}}{\tr (\tilde{\Sigma}^{(L)})} }  \\
    &\leq \PP_{Z \sim N(0,1)} \sbr{ Z^2 \leq \frac{\epsilon n}{\delta_0} } = 2 \Phi \rbr{ \sqrt{\frac{\epsilon n}{\delta_0}} } - 1,
\end{aligned}
\end{equation}
where $\Phi(x) = \PP_{Z \sim N(0,1)} [Z \leq x]$ denotes the cumulative distribution function of the standard normal distribution $N(0,1)$.

Combining \eqref{05191510} and \eqref{05191511}, we get
\begin{equation*}
    \EE_{H \sim N(\mathbf{0}_n, \tilde{\Sigma}^{(L)})} \sbr{\frac{\dir(H)}{\|H\|^2_{\rm F}}} \leq \frac{1}{\epsilon K^{L-1}}  \cdot \EE_{H \sim N(\mathbf{0}_n, \tilde{\Sigma}^{(L)})} [\dir(H)] + 4 \Phi \rbr{ \sqrt{\frac{\epsilon n}{\delta_0}} } - 2.
\end{equation*}
By \eqref{05191756} , we let $L \to \infty$ and get
\begin{equation}\label{05050021}
\begin{aligned}
    &\ \quad \limsup_{L \to \infty} \EE_{H \sim N(\mathbf{0}_n, \tilde{\Sigma}^{(L)})} \sbr{ \frac{\dir (H)}{ \|H\|^2_{\rm F} } } \\
    &\leq \frac{1}{\epsilon K^{L-1}}  \cdot \limsup_{L \to \infty} \EE_{H \sim N(\mathbf{0}_n, \tilde{\Sigma}^{(L)})} [\dir(H)] + 4 \Phi \rbr{ \sqrt{\frac{\epsilon n}{\delta_0}} } - 2 \\
    &= \frac{1}{\epsilon} \cdot 0 + 4 \Phi \rbr{\sqrt{\frac{\epsilon n}{\delta_0}}} - 2 = 4 \Phi \rbr{\sqrt{\frac{\epsilon n}{\delta_0}}} - 2.
\end{aligned}
\end{equation}
Notice that the left hand side of (\ref{05050021}) is independent of the choice of $\epsilon$. Since $\Phi$ is a continuous map, we let $\epsilon \to 0^{+}$ and get
\begin{equation*}
    \limsup_{L \to \infty} \EE_{H \sim N(\mathbf{0}_n, \tilde{\Sigma}^{(L)})} \sbr{ \frac{\dir (H)}{ \|H\|^2_{\rm F} } } \leq 4 \Phi(0) - 2 = 0.
\end{equation*}
Therefore, we have
\begin{equation*}
     \lim_{L \to \infty} \ODP{L}(\sigma_w^2) = \lim_{L \to \infty} \EE_{H \sim N(\mathbf{0}_n, \tilde{\Sigma}^{(L)})} \sbr{ \frac{\dir (H)}{ \|H\|^2_{\rm F} } } = 0.
\end{equation*}
\end{proof}

\clearpage

\section{SPoGInit algorithm}
\label{ap_spog_details}

In Section \ref{algorithn_section}, SPoGInit aims to find a better initialization by minimizing
\begin{equation*}
    w_1 \VFSPplain + w_2 \VBSPplain - w_3 \ODP{L}.
\end{equation*}
In the implementation of SPoGInit algorithm, we always use one random weight sample to get point estimates $\hat{\mathbf{V}}_{\texttt{FSP}}, \hat{\mathbf{V}}_{\texttt{BSP}}, \hat{\mathbf{M}}_{\texttt{GEV}}^{(L)}$ of $\VFSPplain, \VBSPplain, \ODP{L}$, respectively. We take deep vanilla GCNs as an example to showcase the SPoGInit methodology. Given any Xavier-initialized weight $\{\hat{W}^{(l)}\}_{l=1}^L$, SPoGInit scales the weights layer-wise by $\gamma = (\gamma^{(l)})_{l \in [L]} \in \RR^{L}_{>0}$ to yield new initialization $\theta(\gamma) = \{ W^{(l)}\}_{l=1}^L = \{\gamma^{(l)} \hat{W}^{(l)}\}_{l=1}^L$ that achieves proper signal propagation. To be more specific, SPoGInit algorithm solves the optimization problem
\begin{equation}\label{combine}
    \min_{\gamma} F(\theta(\gamma)) := w_1 \hat{\mathbf{V}}_{\texttt{FSP}}(\gamma) + w_2 \hat{\mathbf{V}}_{\texttt{BSP}}(\gamma) - w_3 \hat{\mathbf{M}}_{\texttt{GEV}}^{(L)}(\gamma),
\end{equation}
where
\begin{equation*}
\begin{aligned}
    \hat{\mathbf{V}}_{\texttt{FSP}} &:= (\EFSP{1} / \EFSP{L-1} - 1)^2 = \sbr{\frac{ \|H^{(1)} (\theta(\gamma))\|_{\rm F} }{ \|H^{(L-1)} (\theta(\gamma))\|_{\rm F} } - 1}^2, \\
    \hat{\mathbf{V}}_{\texttt{BSP}} &:= (\EBSP{2} / \EBSP{L-1} - 1)^2 = \sbr{ \frac{ \|g^{(2)} (\theta(\gamma))\|_{\rm F} }{\|g^{(L-1)} (\theta(\gamma))\|_{\rm F}} - 1}^2, \\
    \hat{\mathbf{M}}_{\texttt{GEV}}^{(L)} &:= \frac{\dir (H^{(L)}(\theta(\gamma))}{ \|H^{(L)}(\theta(\gamma)\|^2_{\rm F}},
\end{aligned}
\end{equation*}
with $g^{(l)}(\theta(\gamma)) := \partial \ell / \partial W^{(l)}$.

\subsection{Implementation details}
\label{app_subsec_sopg_implement_detail}

Now we explain SPoGInit (Algorithm \ref{SPoGInit}) in detail.

In lines 2-3, we initialize the weight parameters and weight scales $\gamma^{(l)}(0) = 1$. We iteratively update $\theta(\gamma)$ as follows.

In line 5, we calculate the objective function $F(\theta(\gamma(t)))$ as defined in \eqref{combine}.

In lines 6-10, we update the weight parameters $\theta(\gamma(t))$ by optimizing the objective function through the projected gradient descent method to the scales $\{\gamma^{(l)}(t)\}_{l=1}^{L}$ for each layer. We adopt the projected gradient descent method to ensure the scales $\{\gamma^{(l)}(t)\}_{l=1}^{L}$ remain positive.

\begin{algorithm}[h]
\caption{\textit{SPoGInit}: Searching for weight initialization with better Signal Propagation on Graph}\label{SPoGInit}
\begin{algorithmic}[1]
\State{normalized adjacency matrix $\hat{A}$, input $X(t)$, label $y(t)$, 
network depth $L$, hidden dimension $d$, learning rate $\eta$, total iterations $T$, metric weights $w_1, w_2, w_3$.}

\State{\textbf{initialize} $\gamma^{(l)}(0) = 1$ and sample $\{\hat{W}^{(l)}\}_{l=1}^L$ by Xavier initialization.}
\State{\textbf{initialize} $\theta(\gamma(0))\triangleq\{W^{(l)}(0)\}_{l=1}^{L}$ by
$W^{(l)} (0) \gets \gamma^{(l)} (0) \cdot \hat{W}^{(l)}$.}

\For{$t=0,1,\cdots,T-1$}

\State{calculate %
the objective function $F(\theta(\gamma(t)))$ by $\hat{A}$, $X(t)$, $y(t)$ and $\theta(\gamma(t)).$ %
}
\For{layers $l = 1,2, \dots, L$} %
\State{$\gamma^{(l)} (t+1) \gets \gamma^{(l)} (t) - \eta \nabla_{\gamma^{(l)}} F(\theta(\gamma(t)))$.}
\State{$\gamma^{(l)} (t+1) \gets {\rm Proj}_{[10^{-6},\infty)}(\gamma^{(l)} (t+1)).$}
\State{$ W^{(l)} (t+1) \gets \gamma^{(l)} (t+1) \cdot \hat{W}^{(l)}$.}

\EndFor
\State{$\theta(\gamma(t+1))\triangleq \{W^{(l)} (t+1)\}_{l=1}^{L}.$}
\EndFor
\State{\textbf{return} %
$\theta(\gamma(T)).$}
\end{algorithmic}
\label{algo:spoginit}
\end{algorithm}

In line 7 of Algorithm \ref{SPoGInit}, unless otherwise specified, we employ the random direction finite difference method (Equation (31) in \citep{nesterov2017random}) to compute the derivative of the objective function with respect to the scaling factor $\gamma$. Specifically, we use $$\hat{\mathbb{E}}_\mu\left\{\frac{F(\theta(\gamma(t)+\mu \delta))-F(\theta(\gamma(t)))}{\delta}\mu\right\}$$ to approximate the gradient $\nabla_{\gamma} F(\theta(\gamma(t)))$, where $\delta$ is a small scalar and $\mu$ follows the standard Gaussian distribution $N(0, I_L)$. Here, $\hat{\mathbb{E}}_\mu$ denotes the Monte Carlo approximation of the expectation, computed by averaging over 3 independent and identically distributed (i.i.d.) samples of $N(0, I_L)$.

For the GCNs with skip connections, we replace $\hat{\mathbf{V}}_{\texttt{FSP}}$ and $\hat{\mathbf{V}}_{\texttt{BSP}}$ with $(\max_{1 \leq l < L} \EFSP{l} / \min_{1 \leq l < L} \EFSP{l} - 1 )^2$ and $(\max_{1 < l < L} \EBSP{l} / \min_{1 < l < L} \EBSP{l} - 1 )^2$, respectively. Also, in these models, we use the same scaling factor across all layers. We empirically find that these modifications more effectively minimize the objective function in GCNs with skip connections.

\subsection{Analysis on the coefficient $w_2$ in SPoGInit}

We analyze the effect of the coefficient $w_2$ in SPoGInit, which controls the contribution of the backward signal propagation (BSP) term $\hat{V}_{BSP}$ in the objective function (Formula (\ref{combine})). Empirically, we observe that the initial value of $\hat{V}_{BSP}$ is substantially lower than that of $\hat{V}_{FSP}$, leading to an imbalance between the two objectives. To mitigate this discrepancy, in practice, we set a larger value for $w_2$ than $w_1$ in order to amplify the role of $\hat{V}_{BSP}$ during optimization.

We conduct experiments on a 32-layer GCN with tanh activation, trained on the Cora dataset. Table \ref{tab:try_alpha} reports the values of $\hat{V}_{FSP}$ and $\hat{V}_{BSP}$ under different settings of $w_2 \in \{1,10,100\}$. We fix the learning rate at 0.1, use 100 optimization iterations, and set $w_1 = w_3 = 1$. We report the average result over 50 runs. The results reveal two key trends regarding the impact of $w_2$:
\begin{itemize}
    \item \textbf{Gradual reduction of $\hat{V}_{BSP}$ as $w_2$ increases}: As \( w_2 \) increases from 1 to 10 and then to 100, \( \hat{V}_{BSP} \) decreases from approximately \( 3 \times 10^{-3} \) to \( 1.1 \times 10^{-3} \) and finally to \( 1.0 \times 10^{-3} \). This consistent but diminishing improvement suggests that increasing \( w_2 \) enhances backward signal propagation, though the marginal gain becomes smaller at higher values.
    \item \textbf{Forward signal propagation deteriorates at large $w_2$}: While $\hat{V}_{FSP}$ remains stable between $w_2 = 1$ and $w_2 = 10$, it increases sharply when $w_2$ reaches 100, indicating that an overly large $w_2$ harms forward signal preservation.
\end{itemize}

These findings suggest that $w_2 = 10$ achieves a favorable trade-off between forward and backward signal preservation, yielding low values for both $\hat{V}_{FSP}$ and $\hat{V}_{BSP}$. Based on this analysis, we adopt $w_2 = 10$ as the default setting in the subsequent experiments.

\begin{table}[h]
    \centering
    \caption{The $\hat{V}_{FSP}$ and $\hat{V}_{BSP}$ of 32-layer tanh-activated GCN using SPoGInit with different $w_2$ on Cora dataset. In SPoGInit, we set the learning rate as 0.1, total iteration as 100, $w_1$ and $w_3$ as 1. We report the average result over 50 runs. 
    }
    \label{tab:try_alpha}
    \begin{tabular}{lcccc}
        \toprule
 & \textbf{w/o SPoG}  & \textbf{SPoG using $w_2=1$} & \textbf{SPoG using $w_2=10$} & \textbf{SPoG using $w_2=100$}   \\
        \midrule
        $\hat{V}_{FSP}$  & 31.2 & $5.4 \times10^{-3}$   &$6.4\times10^{-3}$  & $7.5\times10^{-2}$ \\
        $\hat{V}_{BSP}$ & $1.3\times 10^{-1}$  & $3.0 \times10^{-3}$    & $1.1\times10^{-3}$ & $1.0\times10^{-3} $ \\
        \bottomrule
    \end{tabular}
\end{table}

\subsection{Computational efficiency of GEV in SPoGInit}

In this subsection, we analyze the extra computational cost brought by introducing GEV metric in SPoGInit. We claim that the additional cost introduced by adding the GEV term to SPoGInit’s objective is minimal, and we believe the performance gains justify its inclusion.

\textbf{Technical explanation:}

Recall that the GEV metric is computed as $\hat{M}^{(L)}_{GEV} := \frac{Dir(H^{(L)})}{\Vert H^{(L)} \Vert^2_{F}},$
where the Dirichlet energy is given by $Dir(H) = \text{tr}(H^\top \hat{L} H)$, and $\hat{L}$ denotes the normalized Laplacian of the input graph $\mathcal{G}$ (see Appendix \ref{conv_kernel_appendix_section} for mathematical details). Importantly, the GEV metric depends solely on the final-layer node embeddings $H^{(L)}$, which are computed in a single forward pass of the network. The additional cost of computing the norm and Dirichlet energy is negligible compared to the forward pass itself.

We note that the computation of FSP metric already requires a forward pass. Thus, the final-layer embeddings for GEV computation can be directly obtained from this same forward pass, meaning no additional computation is needed.

\textbf{Empirical evaluation:}

To evaluate the computational efficiency of incorporating GEV, we conduct experiments using a 64-layer GCN with tanh activation on the Cora and PubMed datasets. 
As shown in Table \ref{tab:try_alpha}, the runtime of the SPoGInit algorithm with GEV is only slightly higher than that without GEV, with an increase of just about 1\%–3\%. This indicates that adding GEV introduces minimal overhead to the initialization process. In practice, this additional cost is negligible and acceptable, given the consistent performance improvements observed when GEV is used alongside the FSP and BSP metrics.

\begin{table}[h]
    \centering
    \caption{%
    Initialization time (in seconds) of a 64-layer tanh-activated GCN using SPoGInit with and without GEV on the Cora and PubMed datasets. For each case, the total number of initialization iterations is set to 40, and the results are averaged over 3 runs.
    }
    \label{tab:try_alpha}
    \begin{tabular}{lcc}
        \toprule
 & \textbf{Cora}  & \textbf{PubMed}   \\
        \midrule
        SPoGInit without GEV  & 6.46 & 9.90   \\
        SPoGInit with GEV & 6.66  & 10.0   \\
        Relative Overhead of GEV &3.1\% & 1.1\% \\
        \bottomrule
    \end{tabular}
\end{table}

\vspace{-0.5em}

In summary, since the GEV computation utilizes existing outputs from the forward pass with minimal overhead, its integration into SPoGInit’s objective is both computationally efficient and beneficial in practice.

\subsection{Comparison of the per-layer scaling factors across different baseline initializations}
\label{app_subsec_perlayer_factor}

By lines 9–10 of Algorithm \ref{algo:spoginit}, SPoGInit's adjustment of the scaling factor $\gamma^{(l)}$ is equivalent to modifying the initialization standard deviation $\sigma_{w,i}$.
We have plotted the values of $\sigma_{w,i}$ for each layer of our SPoGInit, as well as for several baseline initialization methods, including Conventional initialization, Xavier, VirgoFor, and VirgoBack. As shown in Figure \ref{sigma_i}, $\sigma_{w,i}$ differs significantly across different initialization methods. Notably, SPoGInit performs an adaptive variance search, which leads to a fluctuating pattern of $\sigma_{w,i}$ across layers.

\begin{figure}[h]
\centering
\includegraphics[width=0.44\linewidth]{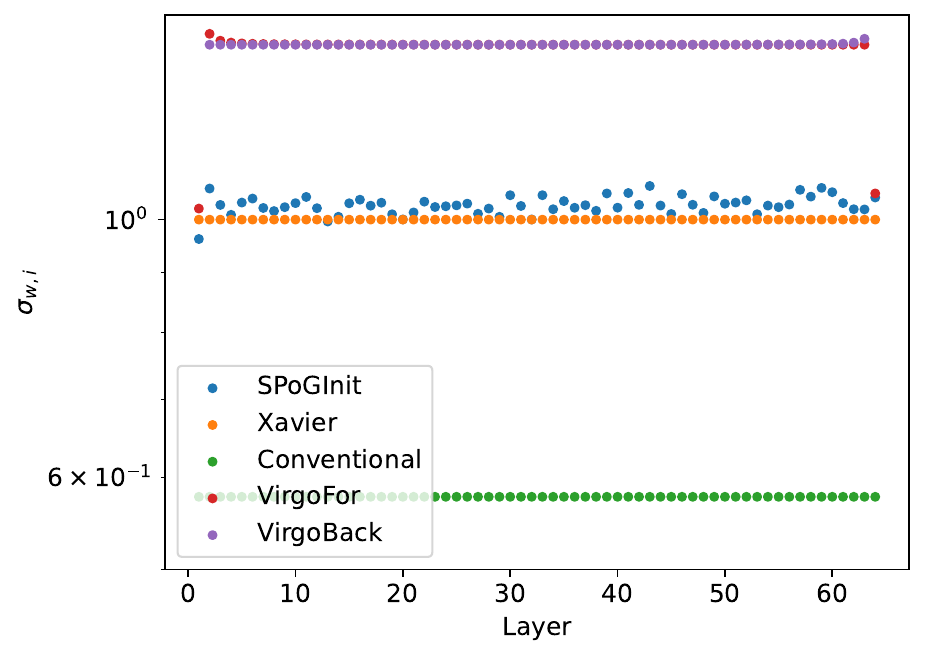}
\vspace{-1em}
\caption{Layer-wise $\sigma_{w,i}$ of a 64-layer tanh-activated GCN using different initialization methods on the Cora datasets. In SPoGInit, we set the learning rate as 0.02, the early stop step $\delta$ as 10, and the number of total step is set as 40.}
\vspace{-2mm}
\label{sigma_i}
\end{figure}

\clearpage

\section{Additional experiments}
\label{Additional_exp}
\subsection{Experiments on additional datasets}
Building upon the experimental results in Section \ref{experiments_section}, we further evaluate the performance of SPoGInit on additional homophily and non-homophily datasets. For homophily datasets, we consider Amazon-photo and Amazon-computers \citep{shchur2018pitfalls}, following the data splits used in \citet{luo2024classic}. For heterophily datasets, in addition to the Arxiv-year dataset examined in Section \ref{experiments_section}, we incorporate Roman-empire and Amazon-ratings \citep{platonov2023critical}, using their default splits. Table \ref{additional_homophily_dataset} summarizes the dataset statistics. Further implementation details are provided in Appendix \ref{ap_expdetails}.

\begin{table}[h]
\caption{Statistics of the additional homophily datasets used in the experiments.}
\vspace{4mm}
\label{statistic}
\centering
\begin{tabular}{cccccccc}
\hline
Dataset      & Nodes  & Features & Edges   & Class & Homophily & Training/Validation/Test \\ \hline\hline 
Amazon-photo         & 7,650   & 745     & 238,162   & 8     & 0.83  & 60$\%$/20$\%$/20$\%$ \\
Amazon-computers       & 13,752  & 767     & 491,722  & 10     & 0.78  & 60$\%$/20$\%$/20$\%$ \\ 
Roman-empire          &22,662   &300 & 32,927 &   18          & 0.05  & 50$\%$/25$\%$/25$\%$ \\
Amazon-ratings       &24,492   &300 & 93,050 &   5          & 0.38  & 50$\%$/25$\%$/25$\%$ \\  \hline\hline
\end{tabular}
\label{additional_homophily_dataset}
\end{table}

\textbf{Evaluation on homophily datasets}

We first evaluate the performance of GCN models with varying depths and initialization schemes on the two homophily datasets, Amazon-photo and Amazon-computers. The results are summarized in Table \ref{detal_performance_homo}. To limit computational overhead, we enforce a maximum depth of 32 layers.

\begin{table}[h]
\centering
\caption{Test accuracies of GCN models with varying depths and initializations on Amazon-photo and Amazon-computers. The bold figure highlights the best performance among different initializations. "Deg" refers to the test accuracy degradation as the depth increases from 4 to 32 layers. The smallest performance drops are highlighted in orange. The results demonstrate that SPoGInit significantly reduces performance degradation compared to baseline initializations and enhances the performance of deep GNN models across different architectures.}
\label{detal_performance_homo}
\begin{tabular}{cc|ccccc|ccccc}
\hline \hline
\multirow{2}{*}{Model} & \multirow{2}{*}{Init.} & \multicolumn{5}{c|}{Amazon-photo}
& \multicolumn{5}{c}{Amazon-computers}  \\ \cline{3-7} \cline{8-12} 
                        & & 4      & 8  & 16 & 32  &Deg. & 4       & 8  & 16 & 32 &Deg.    \\ \hline
\multirow{5}{*}{GCN} & Conventional & 94.1	&93.4	&68.3	&55.0 & \color{blue}{$\downarrow 39.1$} &\textbf{91.9}	&84.5	&56.5	&55.8 & \color{blue}{$\downarrow 36.1$}\\ 
& Xavier & 94.1	&93.6	&92.1	&85.9	& \color{blue}{$\downarrow 8.2$} &91.6	&89.2	&82.4	&74.7 & \color{blue}{$\downarrow 16.9$} \\
& VirgoFor & \textbf{94.5}	&\textbf{93.7}	&\textbf{92.5}	&85.1	& \color{blue}{$\downarrow 9.4$} &\textbf{91.9}	&\textbf{89.4}	&82.1	&\textbf{79.7} & \color{blue}{$\downarrow 12.2$} \\
& VirgoBack & \textbf{94.5}	&93.0	&92.1	&85.2	& \color{blue}{$\downarrow 9.3$} &91.7	&\textbf{89.4}	&\textbf{84.3}	&79.3 &  \color{blue}{$\downarrow 12.4$} \\
& SPoGInit & 	94.3	&93.9	&92.0	&\textbf{86.5}	& \cellcolor{orange!25} \color{blue}{$\downarrow 7.8$} &91.8	&89.3	&83.9	&\textbf{79.7} & \cellcolor{orange!25} \color{blue}{$\downarrow 12.1$}\\ \hline
\multirow{5}{*}{ResGCN} & Conventional & 96.2	&96.4	&96.2	&\textbf{96.5} & \cellcolor{orange!25} \color{red}{$\uparrow 0.3$} &92.8	&93.4	&\textbf{93.9} &93.8 & \color{red}{$\uparrow 1.0$}\\ 
& Xavier & 96.6	& \textbf{96.6}	& \textbf{96.3}	& 95.7	& \color{blue}{$\downarrow 0.9$} &93.2	&93.6	&93.5	&93.1 & \color{blue}{$\downarrow 0.1$} \\
& VirgoFor & 96.4	&96.2	&95.5	&94.1	& \color{blue}{$\downarrow 2.3$} &\textbf{93.4}	&93.4	&92.5	&90.5 & \color{blue}{$\downarrow 2.9$} \\
& VirgoBack & \textbf{96.7}	&96.1	&95.6	&93.4	& \color{blue}{$\downarrow 3.3$} &93.3	&93.2	&92.5	&90.1 &  \color{blue}{$\downarrow 3.2$} \\
& SPoGInit & 	96.4	&96.5	&\textbf{96.3}	&96.4	& 0 &92.7	&93.2	&93.8	&\textbf{93.9} & \cellcolor{orange!25} \color{red}{$\uparrow 1.2$}\\ \hline
\multirow{5}{*}{gatResGCN} & Conventional & 96.0	&96.4	&96.4	&\textbf{96.6} & \cellcolor{orange!25} \color{red}{$\uparrow 0.6$} &92.8	&92.7	&93.3	&93.4 & \color{red}{$\uparrow 0.6$}\\ 
& Xavier & 96.4	&\textbf{96.6}	&\textbf{96.5}	&95.4	& \color{blue}{$\downarrow 1.0$} &92.8	&93.0	&\textbf{93.7}	&93.3 & \color{red}{$\uparrow 0.5$} \\
& VirgoFor & 96.4	&96.4	&95.3	&91.7	& \color{blue}{$\downarrow 4.7$} &	\textbf{92.9}	&\textbf{93.4}	&92.3	&87.5 & \color{blue}{$\downarrow 5.4$} \\
& VirgoBack & \textbf{96.6}	&96.3	&95.2	&91.2	& \color{blue}{$\downarrow 5.4$} &\textbf{92.9}	&93.3	&92.2	&86.3 &  \color{blue}{$\downarrow 6.6$} \\
& SPoGInit & 		96.1	&96.2	&96.4	&96.5
	& \color{red}{$\uparrow 0.4$} &92.5	&92.7	&93.3	&\textbf{93.7} & \cellcolor{orange!25} \color{red}{$\uparrow 1.2$}\\ \hline  
\multirow{3}{*}{MixHop} & Conventional & 96.1	&95.7	&92.9	&85.6 & \color{blue}{$\downarrow 10.4$} & 92.7	&90.6	&89.3	&79.7 & \color{blue}{$\downarrow 13.0$}\\ 
& Xavier &\textbf{96.4}	&\textbf{95.8}	&94.6	&91.4	& \color{blue}{$\downarrow 5.0$} &93.1 &	\textbf{92.9}	&91.0	&83.8 & \color{blue}{$\downarrow 9.3$} \\ 
& SPoGInit & 	 96.3	&95.7	&\textbf{95.5}	&\textbf{92.5}
	& \cellcolor{orange!25} \color{blue}{$\downarrow 3.9$}& \textbf{93.3}	&92.6	&\textbf{91.6}	&\textbf{88.4} & \cellcolor{orange!25} \color{blue}{$\downarrow 4.9$}\\ \hline\hline 
\end{tabular}
\end{table}

Table \ref{detal_performance_homo} indicates that SPoGInit enables deep vanilla GCN and MixHop models to achieve better performance than other initialization baselines. Despite the general trend that increasing depth leads to some degree of performance degradation, models initialized with SPoGInit exhibit a more gradual decline in accuracy. Specifically, performance degradation is most severe under the Conventional initialization, where test accuracy can drop by over 30\% when increasing the depth from 4 to 32 layers. Alternative initialization methods such as Xavier, VirgoFor, and VirgoBack mitigate this degradation to some extent. Among all initializations, SPoGInit consistently demonstrates the best performance in deeper models. Notably, on the Amazon-computers dataset, SPoGInit reduces the performance drop in MixHop by 4.4\% compared to the best-performing baseline, Xavier initialization.

Additionally, Table \ref{detal_performance_homo} indicates that SPoGInit benefits from increased depth in ResGCN and gatResGCN, leading to superior performance for 32-layer networks. On the Amazon-Computers dataset, it outperforms other initialization baselines, while on the Amazon-Photo dataset, it achieves very competitive performance. We observe that deep ResGCN and gatResGCN maintain strong performance, with test accuracy exceeding 90\% under widely used initialization schemes. We hypothesize that these two datasets are relatively simple, allowing residual connections to facilitate stable optimization during training, which in turn enhances model performance.

These findings underscore the critical role of initialization in deep networks and demonstrate the advantages of SPoGInit in mitigating performance degradation in deep GNN models.

\textbf{Evaluation on non-homophily datasets}

\textbf{Amazon-ratings dataset}

We evaluate SPoGInit on the heterophily dataset, Amazon-ratings, with results presented in Table \ref{detal_performance_hetero}. 

As presented in Table \ref{detal_performance_hetero}, for ResGCN, gatResGCN, and MixHop models, SPoGInit outperforms other initialization baseline methods when increasing the network depth. Conventional and Xavier initialization lead to performance degradation as network depth increases, while VirgoFor and VirgoBack suffer even more severe degradation. In contrast, SPoGInit significantly mitigates performance degradation of deeper networks. Moreover, on ResGCN and gatResGCN, SPoGInit consistently improves performance as depth increases, demonstrating its effectiveness in deep models.

For the vanilla GCN, we observe that both VirgoFor and VirgoBack successfully mitigate performance degradation. Notably, VirgoBack achieves a 2\% performance improvement as depth increases. However, this improvement is largely due to its relatively weaker performance on shallow GCNs (4 layers). In contrast, SPoGInit outperforms other initializations on deeper GCNs, highlighting its robustness in handling depth-related challenges.

\begin{table}[h]
\centering
\caption{Test accuracies of GCN models on Amazon-ratings dataset with varying depths and initializations. The bold figure highlights the best performance among different initializations. "Deg" refers to the test accuracy degradation as the depth increases from 4 to 32 layers. The smallest performance drops are highlighted in orange. The results demonstrate that SPoGInit significantly reduces performance degradation compared to baseline initializations and enhances the performance of deep GNN models across different architectures.}
\label{detal_performance_hetero}
\begin{tabular}{cc|ccccc}
\hline \hline
\multirow{2}{*}{Model} & \multirow{2}{*}{Init.} & \multicolumn{5}{c}{Amazon-ratings} \\ \cline{3-7}
                        & & 4      & 8  & 16 & 32  &Deg.   \\ \hline
\multirow{5}{*}{GCN} & Conventional & 44.7	&44.9	&42.5	&38.2 & \color{blue}{$\downarrow 6.5$} \\ 
& Xavier & 44.4	&45.6	&45.1	&44.0	& \color{blue}{$\downarrow 0.4$} \\
& VirgoFor &\textbf{45.1}	&\textbf{45.9}	&\textbf{45.8}	&45.3	& \color{red}{$\uparrow 0.2$}  \\
& VirgoBack & 43.1	&44.5	&45.5	&45.1	& \cellcolor{orange!25} \color{red}{$\uparrow 2.0$}\\
& SPoGInit & 	45.0	&\textbf{45.9}	&45.7	&\textbf{45.4}
	&  \color{red}{$\uparrow 0.4$} \\ \hline
\multirow{5}{*}{ResGCN} & Conventional & 48.5	&48.9	&48.9	&47.6 &  \color{blue}{$\downarrow 0.9$} \\ 
& Xavier & \textbf{48.9}	&\textbf{49.2}	&48.5	&42.1	& \color{blue}{$\downarrow 6.8$}  \\
& VirgoFor & 48.6	&48.8	&46.5	&36.8	& \color{blue}{$\downarrow 11.8$}  \\
& VirgoBack & 48.6	&49.0	&45.9	&36.7	& \color{blue}{$\downarrow 11.6$} \\
& SPoGInit & 	48.6	&49.1	&\textbf{49.3}	&\textbf{49.5}	& \cellcolor{orange!25}  \color{red}{$\uparrow 0.9$} \\ \hline
\multirow{5}{*}{gatResGCN} & Conventional & 48.8	&48.8	&\textbf{49.3}	&47.4 & \color{blue}{$\downarrow 1.4$} \\ 
& Xavier & \textbf{49.0}	&\textbf{49.2}	&48.9	&42.3	& \color{blue}{$\downarrow 6.7$} \\
& VirgoFor &48.4	&48.6	&46.3	&36.8	& \color{blue}{$\downarrow 11.6$}\\
& VirgoBack & 48.3	&48.5	&45.6	&36.7	& \color{blue}{$\downarrow 11.6$} \\
& SPoGInit & 		48.4	&48.9	&49.1	&\textbf{49.2}
	&\cellcolor{orange!25} \color{red}{$\uparrow 0.8$} \\ \hline  
\multirow{3}{*}{MixHop} & Conventional & 	45.8	&46.1	&44.8	&44.8 & \color{blue}{$\downarrow 1.0$}\\ 
& Xavier &45.8	&46.0	&44.9	&44.8	& \color{blue}{$\downarrow 1.0$} \\ 
& SPoGInit & 	\textbf{46.9}	&\textbf{47.7}	&\textbf{47.2}	&\textbf{46.1}
	& \cellcolor{orange!25} \color{blue}{$\downarrow 0.8$} \\ \hline\hline 
\end{tabular}
\end{table}

\textbf{Roman-empire dataset}

Beyond Amazon-ratings, we extend our experiments to the heterophily dataset Roman-empire. The original paper \citep{platonov2023critical} adopts skip connections for all the tested GCN models. Empirically, we indeed find that without skip connections, models perform severely worse. Thus, for this dataset we focus on the performance of ResGCNs across varying depths and initializations.

Table \ref{detal_performance_roman} shows that all baseline initializations suffer from severe performance degradation in deep models. In contrast, SPoGInit consistently achieves superior performance on deep GCN models and substantially reduces performance degradation as the network depths increase.

\begin{table}[h]
\centering
\caption{Test accuracies of ResGCN model with varying depths and initializations on the Roman-empire dataset. The bold figure highlights the best performance among different initializations. "Deg" refers to the test accuracy degradation as the depth increases from 4 to 32 layers. The smallest performance drops are highlighted in orange. The results demonstrate that SPoGInit significantly reduces performance degradation compared to baseline initializations and enhances the performance of deep networks.}
\label{detal_performance_roman}
\begin{tabular}{cc|ccccc}
\hline \hline
\multirow{2}{*}{Model} & \multirow{2}{*}{Init.} & \multicolumn{5}{c}{Amazon-ratings} \\ \cline{3-7}
                        & & 4      & 8  & 16 & 32  &Deg.   \\ \hline
\multirow{5}{*}{ResGCN} & Conventional & \textbf{78.9}	& \textbf{79.4}	&79.2	&76.9 &  \color{blue}{$\downarrow 2.0$} \\ 
& Xavier & 78.7	&79.0	&78.2	&42.3	& \color{blue}{$\downarrow 36.4$}  \\
& VirgoFor & 78.3	&78.9	&64.6	&15.6	& \color{blue}{$\downarrow 62.7$}  \\
& VirgoBack & 	78.3	&78.9	&66.5	&15.9	& \color{blue}{$\downarrow 62.4$} \\
& SPoGInit & 	\textbf{78.9}	& \textbf{79.4}	&\textbf{79.5}	&\textbf{79.1}	& \cellcolor{orange!25}  \color{red}{$\uparrow 0.2$} \\ \hline \hline 
\end{tabular}
\end{table}

\textbf{Summary}

In summary, across these additional homophily and heterophily datasets, SPoGInit consistently outperforms baseline initializations on deep GCN models, and substantially alleviating performance degradation in deep GCN models. These results provide robust empirical evidence for the effectiveness of SPoGInit in enhancing the stability and performance of deep GCNs.

\clearpage

\subsection{Comparison and integration of SPoGInit with additional methods}

In this subsection, we evaluate SPoGInit by comparing it with DGN \citep{zhou2020towards}, a normalization technique designed to mitigate oversmoothing, and CO-GNN \citep{finkelshtein2023cooperative}, a dynamic message-passing framework. Additionally, we examine the effects of integrating SPoGInit with these methods to assess potential performance improvements.

\textbf{Comparison and integration of SPoGInit with DGN}

We first evaluate SPoGInit in comparison with DGN and assess their combined effects. Since DGN \citep{zhou2020towards} was originally designed for GCN models without skip connections, we perform experiments on vanilla GCN using Cora, PubMed, and the large-scale OGBN-Arxiv dataset. The detailed results are presented in Table \ref{detal_performance_DGN_cora} and Table \ref{detal_performance_DGN_arxiv}.

\begin{table}[h]
\centering
\caption{Test accuracies of GCN models with SPoGInit or DGN technique on Cora and PubMed datasets. The bold figure highlights the best performance among different initializations. "Deg" refers to the test accuracy degradation as the depth increases from 4 to 32 layers. The smallest performance drops are highlighted in orange. The results demonstrate that SPoGInit significantly reduces performance degradation compared to DGN.}
\label{detal_performance_DGN_cora}
\begin{tabular}{cc|ccccc|ccccc}
\hline \hline
\multirow{2}{*}{Model} & \multirow{2}{*}{Method.} & \multicolumn{5}{c|}{Cora}
& \multicolumn{5}{c}{PubMed}  \\ \cline{3-7} \cline{8-12} 
                        & & 4      & 8  & 16 & 32  &Deg. & 4       & 8  & 16 & 32 &Deg.    \\ \hline
\multirow{3}{*}{GCN} & DGN & \textbf{80.6}	&\textbf{79.8}	&75.8	&72.6 & \color{blue}{$\downarrow 8.0$} &\textbf{77.7}	&\textbf{78.8}	&\textbf{78.0}	&77.6 & \color{blue}{$\downarrow 0.2$}\\ 
& SPoGInit &	79.8	&79.6	&\textbf{78.2}	&\textbf{75.8}		& \cellcolor{orange!25} \color{blue}{$\downarrow 4.0$} &	77.4	&77.5	&77.0	&\textbf{78.4}	 &\cellcolor{orange!25} \color{red}{$\uparrow 1.0$}\\
& SPoGInit + DGN &	80.4	&79.2	&76.8	&75.0
		&  \color{blue}{$\downarrow 5.4$} &	77.5	&77.9	&\textbf{78.0}	&77.5
	 & 0\\ \hline\hline 
\end{tabular}
\end{table}

\begin{table}[h]
\centering
\caption{Test accuracies of GCN models with SPoGInit or DGN technique on the OGBN-Arxiv dataset. The bold figure highlights the best performance among different initializations. "Deg" refers to the test accuracy degradation as the depth increases from 4 to 32 layers. The smallest performance drops are highlighted in orange. The results demonstrate that SPoGInit combines effectively with DGN and significantly reduces its performance degradation as the depth increases.}
\label{detal_performance_DGN_arxiv}
\begin{tabular}{cc|ccccc}
\hline \hline
\multirow{2}{*}{Model} & \multirow{2}{*}{Method} & \multicolumn{5}{c}{OGBN-Arxiv} \\ \cline{3-7}
                        & & 4      & 8  & 16 & 32  &Deg.   \\ \hline
\multirow{3}{*}{GCN} & DGN & 69.7	&68.2	&63.7	&61.7 &  \color{blue}{$\downarrow 8.0$} \\ 
& SPoGInit & 69.7	&69.2	&68.1	&63.1	& \color{blue}{$\downarrow 6.6$}  \\
& SPoGInit+DGN & \textbf{69.9}	&\textbf{69.3}	&\textbf{69.6}	&\textbf{68.3}	&\cellcolor{orange!25}  \color{blue}{$\downarrow 1.6$}  \\ \hline \hline 
\end{tabular}
\end{table}

Experimental results in Table \ref{detal_performance_DGN_cora} demonstrate that, compared to DGN, SPoGInit more effectively mitigates performance degradation as GCN depth increases on small-graph datasets such as Cora and PubMed. Moreover, on the Cora dataset, initializing the model with SPoGInit before applying DGN improves DGN’s effectiveness, reducing performance degradation in deep GCNs by over 2\% compared to using DGN alone, though it remains slightly less effective than using SPoGInit alone.

Table \ref{detal_performance_DGN_arxiv} highlights the effectiveness of SPoGInit in deep GCNs on the large-scale OGBN-Arxiv dataset. Specifically, using SPoGInit alone outperforms DGN alone by over 1\% in deeper networks. Moreover, integrating SPoGInit with DGN further enhances performance, achieving an improvement of more than 5\% compared to either method individually. This effect is particularly pronounced on large graphs, likely due to the greater difficulty of achieving stable optimization in such settings.

\textbf{Comparison and integration of SPoGInit with CO-GNN}

We evaluate the performance of SPoGInit in comparison with CO-GNN—a method based on a dynamic message-passing framework—and also investigate the benefits of integrating SPoGInit into CO-GNN. We choose ResGCN as the backbone architecture of our SPoGInit. Our experiments are conducted on three benchmark datasets: Cora and PubMed, which exhibit high homophily, and Amazon-ratings, which is characterized by heterophily. For consistency with our experimental setup, we use the dataset splits detailed in Table \ref{mainstream dataset statistics} (instead of those in \citet{finkelshtein2023cooperative}). Detailed performance results for Cora and PubMed are reported in Table \ref{detal_performance_cognn_cora}, while those for Amazon-ratings are provided in Table \ref{detal_performance_cognn_arxiv}.

CO-GNN comprises two key components: an action network and an environment network. We define the overall layer count of CO-GNN as the sum of the layers in these two networks. Specifically, for the Cora and PubMed datasets a one-layer action network is employed, whereas for Amazon-ratings a two-layer action network is utilized.

Experimental results indicate that CO-GNN exhibits a pattern of performance variation as network depth increases. On datasets such as Cora and Amazon-ratings, performance initially improves with increasing depth; however, beyond 16 layers, further deepening leads to significant degradation. We hypothesize that this behavior arises from the need to jointly train the action and environment networks—a process that requires greater training effort to fully converge in deeper architectures compared to conventional GCNs.

Table \ref{detal_performance_cognn_cora} demonstrates that as network depth increases, SPoGInit outperforms CO-GNN on homophilic datasets such as Cora and PubMed. This indicates that SPoGInit more effectively mitigates the performance degradation typically observed in deeper architectures.
On the heterophilic Amazon-ratings dataset, Table \ref{detal_performance_cognn_arxiv} shows that while CO-GNN alone attains higher performance than SPoGInit alone, integrating SPoGInit into CO-GNN further improves its performance, enabling deeper networks to achieve superior results while reducing degradation.
We hypothesize that this improvement stems from SPoGInit’s ability to enhance signal propagation in deep CO-GNN architectures, thereby facilitating more effective training.
Notably, our experiments show that the training accuracy of a 32-layer CO-GNN on Amazon-ratings increases from 81.1\% to 86.7\% after integration with SPoGInit. (Training accuracies are not reported in the main result tables.)

\begin{table}[h]
\centering
\caption{Test accuracies of CO-GNN model and ResGCN with SPoGInit on the Cora and PubMed datasets. The bold figure highlights the best performance among different initializations. "Deg" refers to the test accuracy degradation as the depth increases from 4 to 32 layers. The smallest performance drops are highlighted in orange. The results demonstrate that SPoGInit + ResGCN significantly reduces performance degradation compared to CO-GNN and enhances the performance of deep ResGCN.}
\label{detal_performance_cognn_cora}
\begin{tabular}{c|ccccc|ccccc}
\hline \hline
 \multirow{2}{*}{Method.} & \multicolumn{5}{c|}{Cora}
& \multicolumn{5}{c}{PubMed}  \\ \cline{2-6} \cline{7-11} 
                        & 4      & 8  & 16 & 32  &Deg. & 4       & 8  & 16 & 32 &Deg.    \\ \hline
  CO-GNN & 	\textbf{78.2}	&\textbf{78.7}	&\textbf{80.2}	&73.7 & \color{blue}{$\downarrow 4.5$} &\textbf{77.2}	&\textbf{76.2}	&76.0	&75.7 & \color{blue}{$\downarrow 1.5$}\\ 
SPoGInit+ResGCN &	75.7	& 77.9	& 78.5	& \textbf{78.5}	&\cellcolor{orange!25} \color{red}{$\uparrow 2.8$} &	75.4	&\textbf{76.2}	&\textbf{76.4}	&\textbf{77.0} &\cellcolor{orange!25} \color{red}{$\uparrow 1.6$}
\\ \hline\hline 
\end{tabular}
\end{table}

\begin{table}[h]
\centering
\caption{Test accuracies of CO-GNN, SPoGInit+ResGCN, and SPoGInit+CO-GNN on the Amazon-ratings dataset. The bold figure highlights the best performance among different initializations. "Deg" refers to the test accuracy degradation as the depth increases from 4 to 32 layers. The smallest performance drops are highlighted in orange. The results demonstrate that SPoGInit combines effectively with CO-GNN and significantly reduces its performance degradation.}
\label{detal_performance_cognn_arxiv}
\begin{tabular}{c|ccccc}
\hline \hline
\multirow{2}{*}{Method} & \multicolumn{5}{c}{Amazon-ratings} \\ \cline{2-6}
& 4      & 8  & 16 & 32  &Deg.   \\ \hline
 CO-GNN&48.1	&\textbf{50.3}	&\textbf{51.1}	&50.6 &  \color{red}{$\uparrow 2.5$} \\ 
 SPoGInit+ResGCN & \textbf{48.6}	&49.1	&49.3	&49.5	&  \color{red}{$\uparrow 0.9$}  \\
SPoGInit+CO-GNN & 47.9	&49.4	&50.4	&\textbf{51.0}	&\cellcolor{orange!25}  \color{red}{$\uparrow 3.1$}  \\ \hline \hline 
\end{tabular}
\end{table}

In conclusion, our experiments reveal that SPoGInit surpasses these methods or combines effectively with them, leading to enhanced performance. This demonstrates the versatility of SPoGInit when applied alongside advanced techniques in tackling diverse challenges in GNNs.

\subsection{More experiments on signal propagation}

In this subsection, we provide additional insights into the behavior of signal propagation (SP) in deep GCNs by visualizing activation and gradient matrices using heatmaps. Our goal is to illustrate clearly how widely used initialization methods fail to maintain stable signal propagation and to highlight how SPoGInit addresses these issues effectively.

Specifically, we examine a 256-layer GCN using the Tanh activation function. For each initialization, we visualize the activation and gradient matrices at shallow (2-nd), intermediate (127-th), and deep (255-th) layers.
Each activation matrix has a shape of \( N \times d \), where \( N \) denotes the number of graph nodes, and \( d = 64 \) denotes the network width. For visualization, we select embeddings corresponding to the first 64 nodes.

We first observe that Conventional initialization (Figure \ref{heat_uniform}) and Xavier initialization (Figure \ref{heat_glorot}) lead to degraded signal propagation. Specifically, activation values at deeper layers and gradients at shallower layers become excessively small, indicative of forward and backward signal vanishing issues. By contrast, VirgoFor initialization (Figure \ref{heat_virgofor}) and VirgoBack initialization (Figure \ref{heat_virgoback}) maintain relatively stable activation matrices but result in gradients at shallower layers significantly exceeding acceptable thresholds, indicating slight backward SP explosion. In comparison, as depicted in Figure \ref{heat_spog}, SPoGInit successfully maintains stable activation and gradient matrices across all depths, effectively mitigating both SP vanishing and explosion issues in deep GCNs.

\begin{figure}[h]
    \centering
    \includegraphics[width=0.9\linewidth]{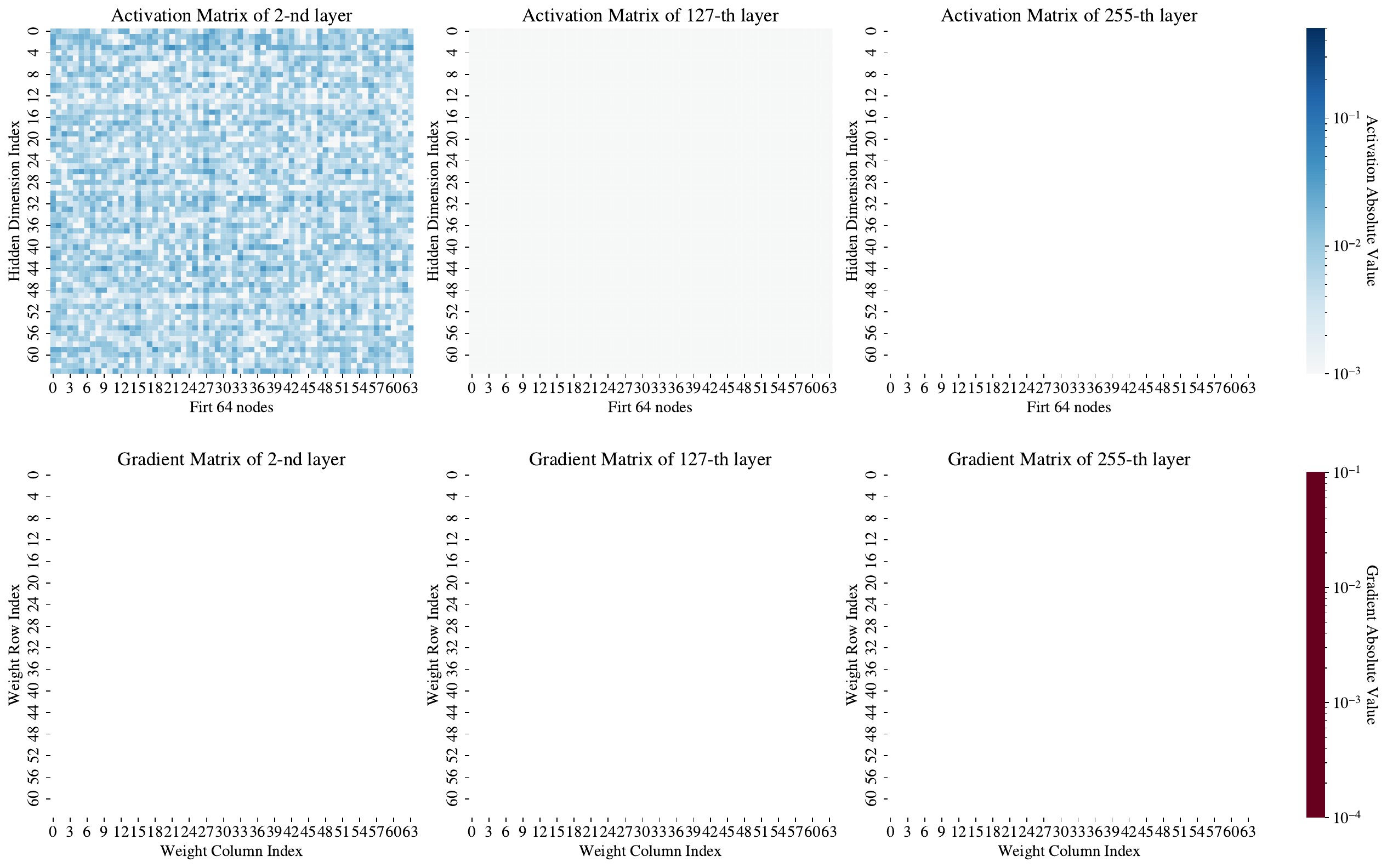}
    \caption{The heatmap of the activation and gradient matrix of a 256-layer tanh-activated GCN with Conventional initialization.}
    \label{heat_uniform}
\end{figure}

\begin{figure}[h]
    \centering
    \includegraphics[width=0.9\linewidth]{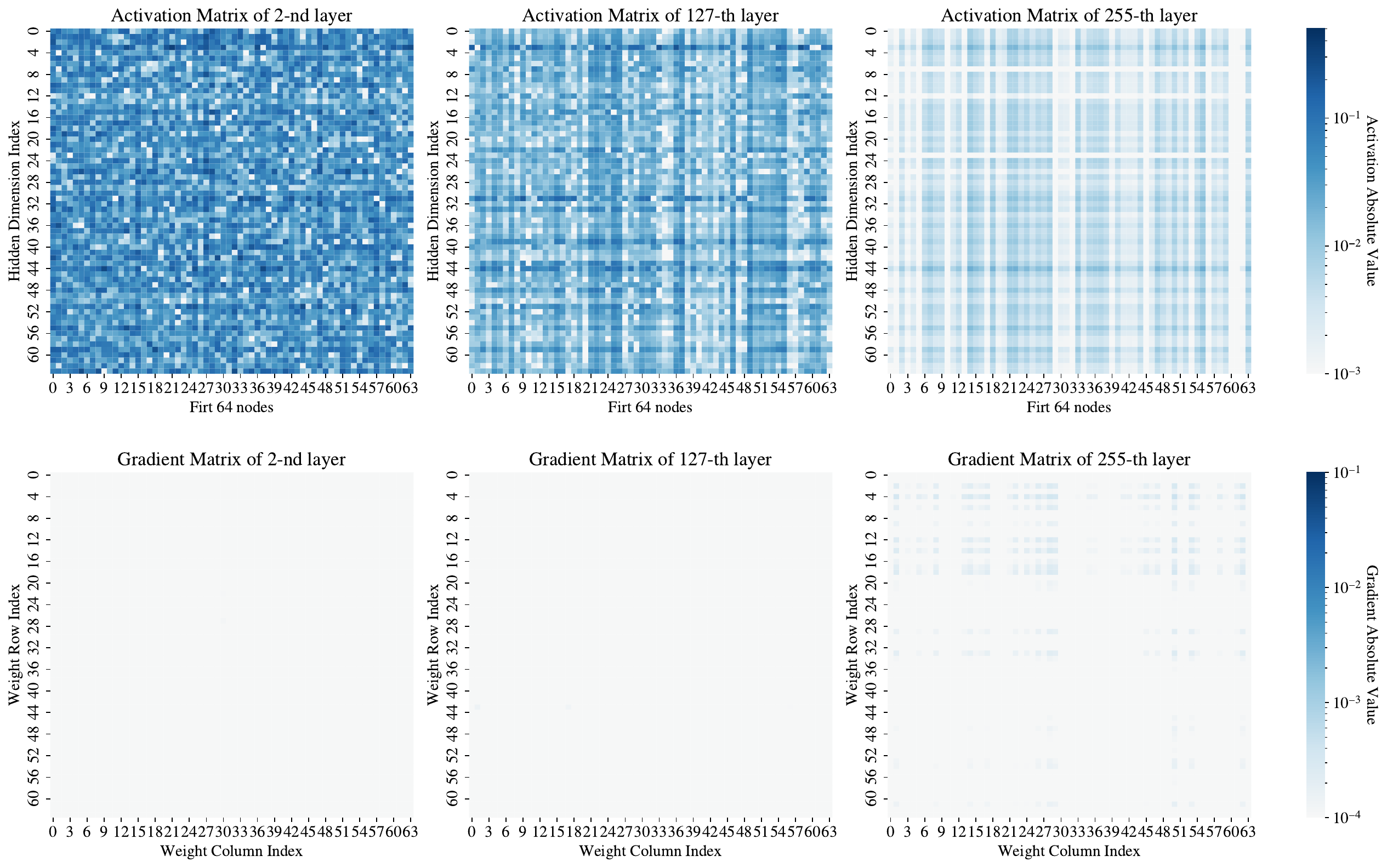}
    \caption{The heatmap of the activation and gradient matrix of a 256-layer tanh-activated GCN with Xavier initialization.}
    \label{heat_glorot}
\end{figure}

\begin{figure}[h]
    \centering
    \includegraphics[width=0.9\linewidth]{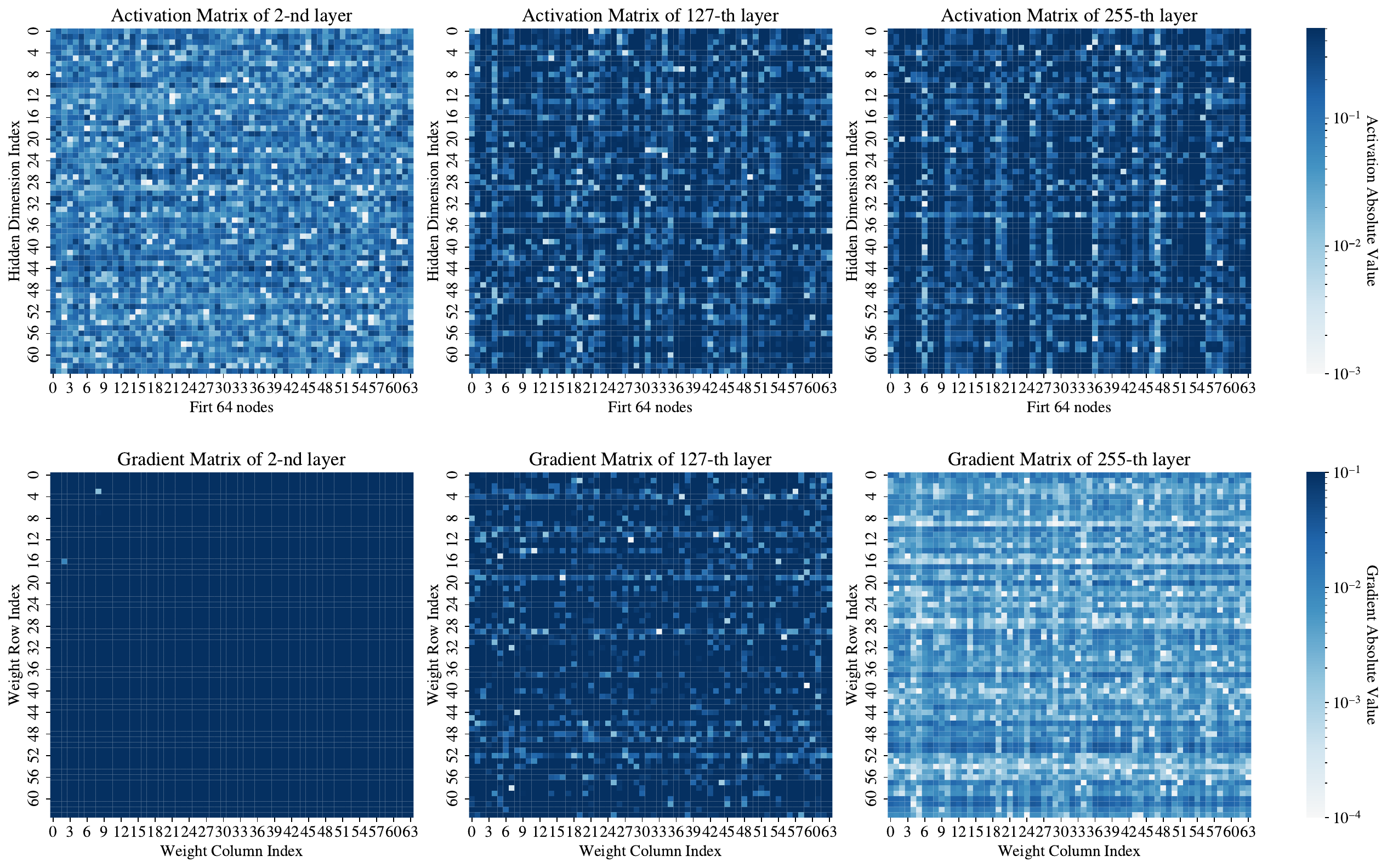}
    \caption{The heatmap of the activation and gradient matrix of a 256-layer tanh-activated GCN with VirgoFor initialization.}
    \label{heat_virgofor}
\end{figure}

\begin{figure}[h]
    \centering
    \includegraphics[width=0.9\linewidth]{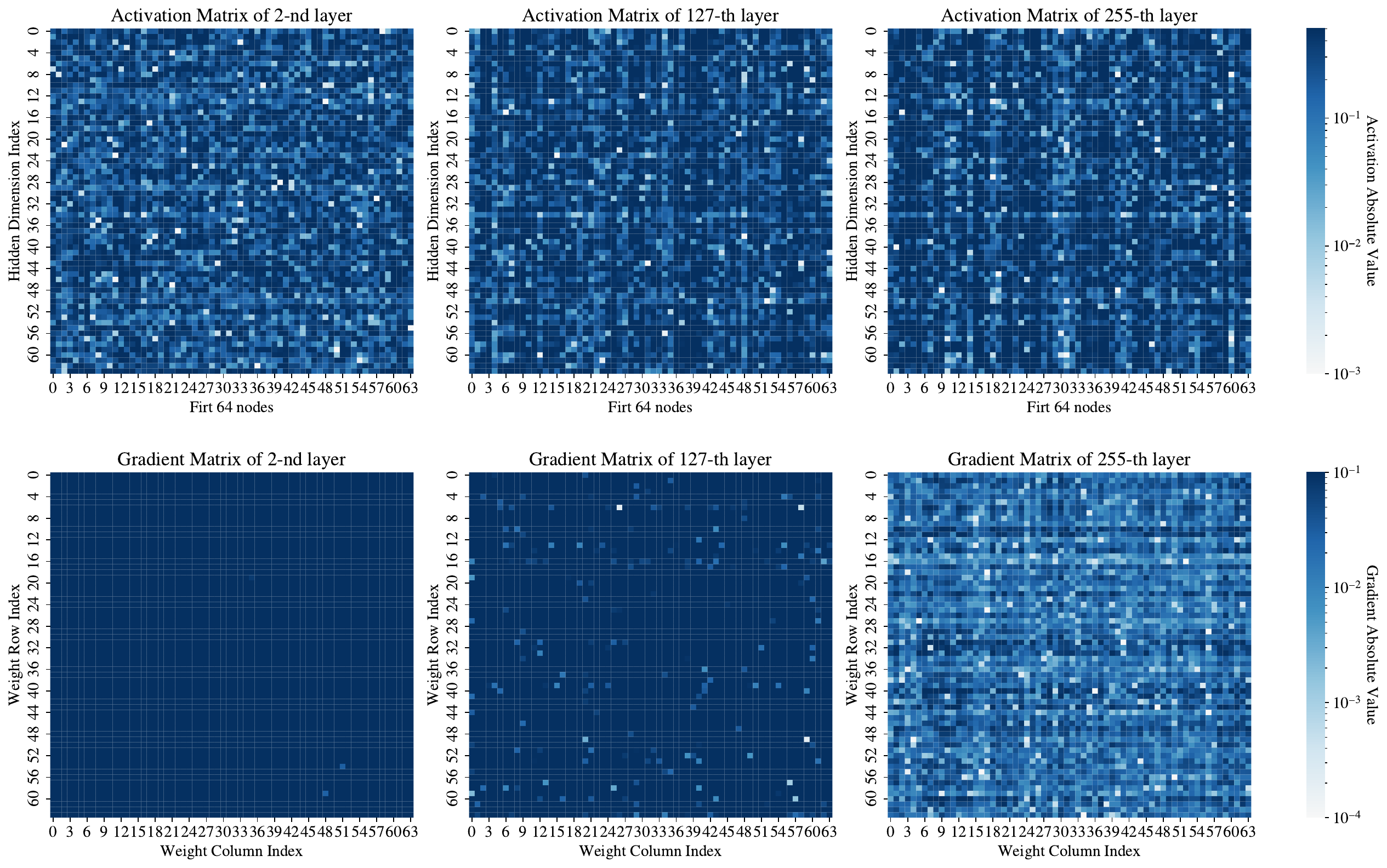}
    \caption{The heatmap of the activation and gradient matrix of a 256-layer tanh-activated GCN with VirgoBack initialization.}
    \label{heat_virgoback}
\end{figure}

\begin{figure}[h]
    \centering
    \includegraphics[width=0.9\linewidth]{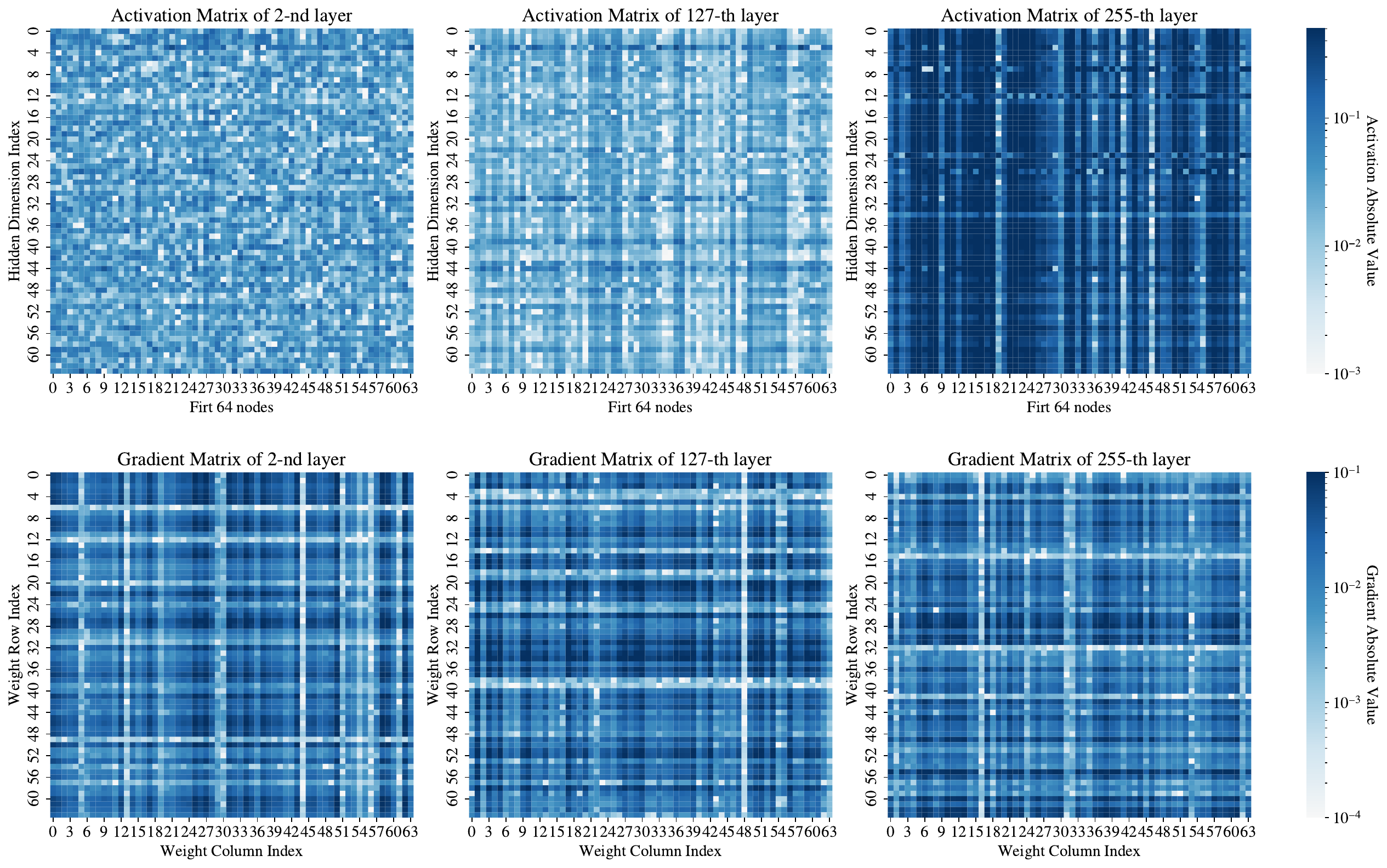}
    \caption{The heatmap of the activation and gradient matrix of a 256-layer tanh-activated GCN with SPoGInit.}
    \label{heat_spog}
\end{figure}

\clearpage

\subsection{Computation cost and scalability of SPoGInit}

We provide a detailed discussion on the computational cost and scalability of SPoGInit in this subsection.

\textbf{Computational cost analysis} 

SPoGInit is utilized as a weight initialization search method prior to the main model training process (as described in Algorithm \ref{algo:spoginit} in Appendix \ref{ap_spog_details}). We identify two primary sources of additional computational cost: \textbf{(a) searching steps}: in practice, SpoGInit typically needs 20–40 search steps, which is small relative to the roughly 800-1500 training epochs of GNNs; \textbf{(b) optimization method}: SPoGInit employs a zeroth order optimization method, which only relies on only forward propagation computations, not requiring more computationally expensive backward passes.
This analysis indicates that the computational overhead introduced by SPoGInit is relatively minor.

Besides, we conduct an empirical study to evaluate the computation overhead brought by SPoGInit. In our experiments, we compare the training time of a vanilla GCN architecture (without SPoGInit) against the same architecture incorporated with SPoGInit, using 40 searching steps, on both the Amazon-ratings and OGBN-Arxiv datasets. As shown in Table \ref{tab:training_time-diff-size-model}, SPoGInit consistently adds approximately 18\% to the training time on these two datasets. In general, incorporating SPoGInit increases the overall training computational cost by roughly 10\%–20\%. Given the performance improvements observed in GNNs, we consider this additional cost to be an acceptable trade-off.

\begin{table}[htbp]
    \centering
    \caption{Comparison of total training time for 64-layer tanh-activated GCN with Xaiver initialization and SPoGInit on the small dataset Amazon-ratings and the large dataset OGBN-Arxiv. All experiments are run for 1000 training epochs. The searching steps of SPoGInit are set as 40.
    }
    \label{tab:training_time-diff-size-model}
    \begin{tabular}{lcccc}
        \toprule
        \textbf{Dataset} & \textbf{Nodes}  & \textbf{Xavier} & \textbf{SPoGInit} & \textbf{Additional Time} \\
        \midrule
        Amazon-ratings  &  24,492 & 185.7s    & 220.5s    & 18.7\% \\
        OGBN-Arxiv & 169,343  & 901.3s  & 1064.1s  & 18.1\% \\
        \bottomrule
    \end{tabular}
\end{table}

\textbf{Scalability analysis}

Notably, Table \ref{tab:training_time-diff-size-model} shows that as the graph size grows larger, the proportion of SPoGInit computation (relative to total training computation) would not increase. This implies that SPoGInit can scale effectively to large graph datasets.

\subsection{More experiments on graph attention networks}

We conduct comparative experiments between SPoGInit and other baseline initialization methods on Graph Attention Networks (GAT) \citep{velivckovic2017graph}. Building upon the original GAT architecture, we further investigate the performance on a residual-enhanced variant (ResGAT) by incorporating skip connections. The results are presented in Table \ref{detal_performance_gat}.

The experimental results show that SPoGInit significantly mitigates performance degradation on GAT and ResGAT. Moreover, on ResGAT, while other baseline methods still suffer from severe performance drops as the network depth increases, SPoGInit instead leads to improved performance with greater depth, demonstrating its effectiveness in deep architectures.

\begin{table}[h]
\centering
\caption{Test accuracies of GAT and ResGAT with varying depths and initialization methods. The bold figure highlights the best performance among different initializations. "Deg" refers to the test accuracy degradation as the depth increases from 4 to 64 layers. The smallest performance drops are highlighted in orange. We set the number of attention head to be 1. The results demonstrate that SPoGInit reduces performance degradation compared to baseline initializations and enhances the performance of deep GAT and ResGAT.}
\label{detal_performance_gat}
\resizebox{\textwidth}{!}{
\begin{tabular}{cc|cccccc|cccccc}
\hline \hline
\multirow{2}{*}{Model} & \multirow{2}{*}{Init.} & \multicolumn{6}{c|}{Cora}
& \multicolumn{6}{c}{Arxiv}  \\ \cline{3-8} \cline{9-14} 
                        & & 4      & 8  & 16 & 32 & 64 &Deg. & 4       & 8  & 16 & 32 & 64 &Deg.    \\ \hline
\multirow{5}{*}{GAT} & Conventional & 79.4	&73.2	&50.3	&37.4	&31.9 & \color{blue}{$\downarrow 47.5$} &68.2	&66.2	&5.9	&5.9	&5.9 & \color{blue}{$\downarrow 62.3$}\\ 
& Xavier & \textbf{79.7}	&\textbf{79.5}	&74.1	&71.2	&68.8	& \color{blue}{$\downarrow 10.9$} &68.4	&68.1	&61.1	&43.7	&14.9 & \color{blue}{$\downarrow 53.5$} \\
& VirgoFor & 79.0	&78.9	&74.9	&74.7	&74.0	& \color{blue}{$\downarrow 5.0$} &\textbf{68.5}	&\textbf{68.4}	&65.8	&63.7	&\textbf{50.5} & \cellcolor{orange!25} \color{blue}{$\downarrow 18.0$} \\
& VirgoBack & 78.1	&76.2	&74.7	&71.6	&71.8	& \color{blue}{$\downarrow 6.3$} &\textbf{68.5}	&\textbf{68.4}	&65.8	&63.7	&49.5 &  \color{blue}{$\downarrow 19.0$} \\
& SPoGInit & 	79.4	&78.8	&\textbf{76.2}	&\textbf{76.3}	&\textbf{74.5}	& \cellcolor{orange!25} \color{blue}{$\downarrow 4.9$} &68.3	&66.0	&\textbf{66.7}	&\textbf{64.0}	&47.1  & \color{blue}{$\downarrow 21.2$}\\
\hline
\multirow{3}{*}{ResGAT} & Conventional & 78.3	&78.5	&78.1	&\textbf{77.5}	&32.5 &\color{blue}{$\downarrow 45.8$}  &70.6	&\textbf{71.4}	&71.8	&69.7	&21.6 &\color{blue}{$\downarrow 49.0$}\\ 
& Xavier & 77.6	&\textbf{79.3}	&77.8	&32.3	&32.0 &\color{blue}{$\downarrow 45.6$}	&\textbf{70.8}	&\textbf{71.4}	&69.7	&21.7	&21.7 &\color{blue}{$\downarrow 49.1$}\\
& VirgoFor & 78.4	&77.9	&75.4	&31.3	&31.5 &\color{blue}{$\downarrow 46.9$}	&70.5	&70.5	&21.8	&21.6	&21.7 &\color{blue}{$\downarrow 48.8$}\\
& VirgoBack &\textbf{79.4}	&77.6	&76.1	&26.1	&33.2 &\color{blue}{$\downarrow 46.2$}	&70.7	&70.4	&21.6	&21.6	&16.3 &\color{blue}{$\downarrow 54.5$}\\
& SPoGInit & 77.1	&77.4	&\textbf{78.6}	&76.0	&\textbf{78.3}
	&\cellcolor{orange!25} \color{red}{$\uparrow 1.2$}& 70.7	&71.3	&\textbf{71.9}	&\textbf{71.8}	&\textbf{71.4}
 &\cellcolor{orange!25} \color{red}{$\uparrow 0.7$}\\ 
\hline
\hline
\end{tabular}
}
\end{table}

\subsection{Experimental comparison with G-Init}
\label{subapp:G-Init}

This subsection presents a comparison between SPoGInit and G-Init \citep{kelesis2024reducing}, an initialization method specifically designed to alleviate over-smoothing in deep GNNs. G-Init 
incorporates the graph topology into the initialization process. Specifically, G-Init scales the weight variance of the base initialization by a factor of $d_i$.\footnote{While the original paper \citep{kelesis2024reducing} adopts ReLU as the activation function—typically paired with Kaiming initialization—we use tanh activation, which is commonly initialized using Xavier.} 
Following the experimental setup in the original G-Init paper, we set $d_i = 1.6$ for OGBN-Arxiv and Arxiv-year, and $d_i = 2.0$ for all other benchmark datasets. We systematically evaluate the performance of tanh-activated GCN architectures ranging from 4 to 64 layers under different initialization schemes.

The results, summarized in Table \ref{detal_performance_ginit}, show that while G-Init can alleviate performance degradation in deep GCNs to some extent, SPoGInit consistently outperforms it in most cases. For instance, on the PubMed dataset, G-Init suffers from a 4\% performance drop as depth increases, whereas SPoGInit achieves stable or improved performance. These findings highlight the effectiveness of SPoGInit’s initialization strategy, which explicitly searches for stable signal propagation patterns, in mitigating depth-induced degradation in GCN training.

\begin{table}[h]
\centering
\caption{Test accuracies of GCN models initialized with G-Init and SPoGInit across varying depths. The bold figure highlights the best performance among different initializations. "Deg" refers to the test accuracy degradation as the depth increases from 4 to 64 layers. We choose the starting point of SPoGInit as VirgoFor on the Arxiv dataset. The results show that while both initializations help alleviate degradation in deep GCNs, SPoGInit achieves more consistent performance and smaller accuracy drops across different datasets.}
\label{detal_performance_ginit}
\resizebox{\textwidth}{!}{
\begin{tabular}{cc|cccccc|cccccc}
\hline \hline
\multirow{2}{*}{Model} & \multirow{2}{*}{Init.} & \multicolumn{6}{c|}{Cora}
& \multicolumn{6}{c}{Arxiv}  \\ \cline{3-8} \cline{9-14} 
                        & & 4      & 8  & 16 & 32 & 64 &Deg. & 4       & 8  & 16 & 32 & 64 &Deg.    \\ \hline
\multirow{2}{*}{GCN} & G-Init & 79.7	&78.3	&76.7	&75.2	&73.4
 & \color{blue}{$\downarrow 6.3$} &\textbf{69.6}	&69.5	&\textbf{68.8}	&66.5	&58.9
 & \color{blue}{$\downarrow 10.7$}\\ 
& SPoGInit & 	\textbf{79.8}	&\textbf{79.6}	&\textbf{78.2}	&\textbf{75.8}	&\textbf{73.7}	& \cellcolor{orange!25} \color{blue}{$\downarrow 6.1$} &\textbf{69.6}	&\textbf{69.6}	&68.7	&\textbf{67.2}	&\textbf{61.4}  & \cellcolor{orange!25} \color{blue}{$\downarrow 8.2$}\\
\end{tabular}
}
\vspace{1em} %
\resizebox{\textwidth}{!}{
\begin{tabular}{cc|cccccc|cccccc}
\hline \hline
\multirow{2}{*}{Model} & \multirow{2}{*}{Init.} & \multicolumn{6}{c|}{PubMed} & \multicolumn{6}{c}{Arxiv-year}  \\ \cline{3-8} \cline{9-14} 
                        & & 4      & 8  & 16 & 32 & 64 &Deg. & 4       & 8  & 16 & 32 & 64 &Deg.    \\ \hline
\multirow{2}{*}{GCN} & G-init &\textbf{78.5}	&76.8	&\textbf{78.0}	&77.7	&74.5 &\color{blue}{$\downarrow 4.0$}	&43.8	&29.9	&\textbf{45.5}	&\textbf{45.6}	&43.4  &\color{blue}{$\downarrow 0.4$}\\ 
& SPoGInit & 	77.4	&\textbf{77.5}	&77.0	&\textbf{78.4}	&\textbf{78.1} &\cellcolor{orange!25} \color{red}{$\uparrow 0.7$}	&\textbf{43.9}	&\textbf{41.9}	&45.2	&44.9	&\textbf{43.9} &\cellcolor{orange!25} 0\\
\hline
\hline
\end{tabular}
}
\end{table}

\clearpage

\section{Supplemental experiment results}

\subsection{Experimental settings and hyperpameters}
\label{ap_expdetails}

\textbf{Settings for the experiments on mainstream datasets.}

We set $w_1 = 1$, $w_2 = 10$, $w_3 = 1$ for the vanilla GCN and $w_1 = 1$, $w_2 = 1$, $w_3 = 1$ for other architectures in these experiments. An early stopping criterion is also implemented for SPoGInit: if the metric fails to decrease over $\delta$ consecutive steps, the search is terminated. Unless otherwise specified, SPoGInit is initialized with Xavier initialization in these experiments. To guarantee that models with the same depth have the same receptive field, we set the number of hops to 1 in the MixHop layer for the MixHop architecture.

In our experiments on the Cora and PubMed datasets,
\begin{itemize}
    \item We perform grid searches over learning rates of 1e-3, 1e-4, 5e-5, and 1e-5.
    \item The training epochs and early stopping patience are listed in Table \ref{GCNhyper}.
    \item For vanilla GCN and MixHop, we evaluate two configurations: (1) weight decay set to 5e-3 and dropout rate set to 0.5, and (2) both weight decay and dropout rate set to 0. We report the best performance achieved between these settings.
    \item For ResGCN and gatResGCN, we set the dropout rate to 0.5.
    \item In Table \ref{detal_performance}, the learning rate for SPoGInit is set to 0.1. The early stopping step $\delta$ is set to 20 for ResGCN and gatResGCN, and to 10 for vanilla GCN and MixHop. In the experiments with ResGCN and gatResGCN, SPoGInit starts with conventional initialization. 
    \item  In Figure \ref{GCN_meta}, we use PyTorch’s autograd functionality to compute the gradient of the scaling factors. In Figure \ref{ResGCN_meta} and \ref{Mixhop_meta}, the learning rate for SPoGInit is set to 0.05 for ResGCN and MixHop, with the early stopping step $\delta$ set to 10 for ResGCN.
\end{itemize}

\begin{table}[H]
\centering
\caption{Hyperparameter configurations of experiments on Cora and PubMed.}
\vspace{4mm}
\label{GCNhyper}
\begin{tabular}{c|cc}
\hline \hline
GCN layers    & training epoch       & early stop patience      \\ \hline
4/8/16 layers & 800 &200\\ %
32 layers     & 1200 &300\\ %
64 layers     &1500 &375\\ %
\hline \hline
\end{tabular}
\end{table}

In the experiments on the OGBN-Arxiv and Arxiv-year datasets,
\begin{itemize}
    \item All models are trained for 1000 epochs.
    \item The learning rate is set to 5e-3 for ResGCN and gatResGCN, and 5e-4 for MixHop. For the vanilla GCN, the learning rates are configured as follows: 5e-3 for the 4-layer and 8-layer models, 5e-4 for the 16-layer and 32-layer models, and 5e-5 for the 64-layer model.
    \item For ResGCN and gatResGCN, we set the dropout rate to 0.5.
    \item For SPoGInit, the learning rate is set to 0.2 for ResGCN and gatResGCN, 0.1 for MixHop, 0.07 for vanilla GCN on the Arxiv-year dataset, and 0.05 for vanilla GCN on the OGBN-Arxiv dataset. The parameter $\delta$ in SPoGInit is set to 10 for ResGCN, gatResGCN, and MixHop, and 20 for vanilla GCN.
\end{itemize}

In the experiments on the Amazon-photo and Amazon-computers datasets,
\begin{itemize}
    \item We perform grid searches over learning rates of 1e-3, 1e-4, 5e-5, and 1e-5.
    \item All model are trained with 64 hidden units.
    \item The training epochs and early stopping patience are listed in Table \ref{GCNhyper}. One difference is that for 32-layer model, we used 1000 epochs for training and 200 epochs for early stopping patience. 
    \item For ResGCN, gatResGCN and vanilla GCN, we set the dropout rate to 0.5. For vanilla GCN, we set the weight decay to be 0.00005. For MixHop,  we tried two configurations: (1) dropout 0.5,  weight decay 0.00005 (2) dropout 0,  weight decay 0,  and we choose the best performance amoug these configurations.
    \item We use the same configurations of SPoGInit in Cora and PubMed datasets. For the Amazon-computers dataset, the starting point is set as VirgoFor.
\end{itemize}

In the experiments on the Amazon-ratings and Roman-Empire datasets. We use VirgoFor as the starting point of SPoGInit in the experiments with vanilla GCN, and learning rate of SPoGInit is set to 0.1.   The other configurations are the same with those in the OGBN-Arxiv and Arxiv-year datasets.

In the experiments with DGN,
\begin{itemize}
    \item In the Cora and PubMed datasets,  we perform two configurations: (1) following the setting in \citet{zhou2020towards}, the dropout as 0.6, weight decay is set as 0.0005. (2) dropout and weight decay are both set to be zero. We report the best performance amoug these two configureations. The other configurations are the same with those in Cora and PubMed experiments.
    \item In the OGBN-Arixv dataset, we use the vanilla GCN with Conventional Initialization as base model. The other configurations are the same with those in OGBN-Arxiv experiments.
    \item For the group number $G$ in DGN, we set it as 10 in Cora, and 5 for other datasets.
\end{itemize}

In the experiments with CO-GNN,
\begin{itemize}
    \item To align with the experiment settings in Section \ref{experiments_section}, we fix the width of environment network to be 64, and we set the width of action network to be 16 in Cora and PubMed dataset, and 32 in the Amazon-ratings. And we use mean GNN as the action network.
    \item In the Cora and PubMed datasets, we follow the settings in \citet{finkelshtein2023cooperative}: We perform grid searches over learning rates of 0.01, 0.05, 0.005, 0.0005. The dropout is set as 0.5 and weight decay is set as 5e-4, and $\tau_0$ is set as 0.1. We use mean GNN as the environment network in Cora and GCN as the environment network in PubMed. We train the model for 1000 epochs and we adopt the skip connection in the model. The other configurations are the same with those in Cora and PubMed experiments in our setting. 
    \item In the Amazon-ratings datasets, we follow the settings in \citet{finkelshtein2023cooperative}: we train the model for 3000 epochs, the learning rate is set as 3e-4.  And we use mean GNN as the environment network. We adopt the layernorm and skip connection.
\end{itemize}

\textbf{Settings for the exploration of starting points in SPoGInit}

In these experiments, the settings for SPoGInit are slightly different from the previous ones. To be more specific,

\begin{itemize}
    \item For ResGCN, the learning rate of SPoGInit is set to 0.1 when starting from VirgoFor or VirgoBack initialization, and 0.2 in all other cases.
    \item For vanilla GCN, the learning rate of SPoGInit is set to 0.1 when starting from Conventional initialization and 0.05 in all other cases.
\end{itemize}

\textbf{Settings for the missing feature experiments}

To enhance the dataset's dependence on long-range relationships, we reduce the proportion of training split and set the train/validation/test split as 10\%/25\%/65\%, which is generated through the implementation in \cite{lim2021large}. In this experiment, we set SPoGInit to use Conventional initialization as the starting point when it is adopted to gatResGCN, while SPoGInit begins with Xavier initialization as the starting point in other cases.

\textbf{Settings for MILP experiment}

In the MILP experiment, we train the models for 500 epochs. The batch size is set as 40 and the hidden dimension is set as 64.

\subsection{Ablation study of SPoGInit}
\label{append_ablation}
In this subsection, we conduct the ablation study of SPoGInit by analyzing various combinations of its SP metric components. Specifically, we reformulate the optimization problem (\ref{spog_optimization}) to include either one or two of the three SP metrics. Table \ref{abation} presents the performance of a 32-layer vanilla GCN with these modified SPoGInit variants on the Cora dataset.
For this experiment, both dropout rate and weight decay are set to 0, while all other hyperparameters follow the settings for the experiments on mainstream datasets in Appendix \ref{ap_expdetails}.

The results indicate that incorporating a single SP metric into the optimization problem slightly improves the performance of vanilla GCNs, with the most notable enhancement observed when SPoGInit includes the FSP metric. Incorporating two SP metrics leads to a more substantial improvement, although it still falls short of the performance achieved by SPoGInit using all three SP measures.

These experimental results highlight the importance of combining all three SP metrics to maximize the performance of SPoGInit.

\newcommand{\cmark}{\ding{51}}

\begin{table}[H]
\centering
\caption{
Test accuracies of a 32-layer vanilla GCN with modified SPoGInit variants: A "\ding{51}" indicates that a specific metric is included in the signal propagation optimization of SPoGInit, while a "-" denotes its exclusion. The results show that incorporating one or two SP metrics improves the performance of the deep vanilla GCN. However, the best performance is achieved when all three SP metrics are used in SPoGInit.}
\label{abation}
\vspace{4mm}
\begin{tabular}{ccccc}
\hline
Variants & FSP    & BSP    & GEV    & Test Accuracy    \\ \hline
\multicolumn{4}{l}{GCN (Xavier)}   & 72.83 $\pm$ 0.45 \\ \hline
\multicolumn{4}{l}{+SPoGInit}       &                  \\
         & \cmark & -      & -      & 73.83 $\pm$ 1.72 \\
         & -      & \cmark & -      & 73.50 $\pm$ 0.36 \\
         & -      & -      & \cmark & 73.90 $\pm$ 1.56 \\
         & \cmark & \cmark & -      & 75.60 $\pm$ 0.16 \\
         & \cmark &  -      & \cmark & 73.60 $\pm$ 1.72 \\
         & -      & \cmark & \cmark & 73.77 $\pm$ 1.43 \\ \cline{2-5} 
         & \cmark & \cmark & \cmark & \textbf{75.80 $\pm$ 0.16} \\ \hline
\end{tabular}
\end{table}

\clearpage

\section{Broader impact statement}
In this paper, we employ signal propagation theory to analyze the performance degradations in deep GCNs. Additionally, we propose a solution (SPoGInit) to address signal propagation issues and alleviate this problem.  This paper is a theoretical and algorithmic paper on graph neural nets, and does not seem to pose negative social impact.

\end{document}